\documentclass[11pt,letterpaper]{article}

\usepackage[utf8]{inputenc}
\usepackage{comment}
\usepackage{microtype}
\usepackage{graphicx}
\usepackage{subcaption}
\usepackage{booktabs} 
\usepackage{scalerel}
\usepackage[margin=1in]{geometry}
\usepackage[most]{tcolorbox}

\usepackage{amssymb}
\usepackage{amsthm}
\usepackage{amsmath}
\usepackage{mathtools}

\usepackage{nicefrac}

\usepackage{algorithmic}
\usepackage{algorithm}
\usepackage[shortlabels]{enumitem}
\usepackage{dsfont}

\usepackage{url}
\usepackage{float}

\setlength{\parskip}{0.5em}
\setlength{\parindent}{0em}

\usepackage{pifont}
\usepackage{hhline}
\usepackage{multirow}

\usepackage{graphicx,wrapfig}

\usepackage{xcolor} 



\def\bD{\mathbf{D}}

\def\cL{\mathcal{L}}

\def\cO{\mathcal{O}}
\def\cP{\mathcal{P}}

\def\cX{\mathcal{X}}
\def\cY{\mathcal{Y}}

\def\smskip{\smallskip}

\def\texitem#1{\par\smskip\noindent\hangindent 25pt
               \hbox to 25pt {\hss #1 ~}\ignorespaces}

\def\abs#1{\left|#1\right|}

\newcommand{\BEAS}{\begin{eqnarray*}}
\newcommand{\EEAS}{\end{eqnarray*}}
\newcommand{\BEA}{\begin{eqnarray}}
\newcommand{\EEA}{\end{eqnarray}}
\newcommand{\BEQ}{\begin{eqnarray}}
\newcommand{\EEQ}{\end{eqnarray}}
\newcommand{\BIT}{\begin{itemize}}
\newcommand{\EIT}{\end{itemize}}
\newcommand{\BNUM}{\begin{enumerate}}
\newcommand{\ENUM}{\end{enumerate}}

\newcommand{\BA}{\begin{array}}
\newcommand{\EA}{\end{array}}








\DeclareMathOperator*{\argmin}{\arg\!\min}







\newif\ifpagenumbering
\pagenumberingtrue

\pagenumberingfalse

\newcommand{\ma}[1]{\textcolor{black}{#1}}

\newtheorem{assumption}{Assumption}
\newtheorem{definition}{Definition}
\newtheorem{theorem}{Theorem}
\newtheorem{lemma}[theorem]{Lemma}

\newtheorem{corollary}[theorem]{Corollary}
\theoremstyle{remark}
\newtheorem{remark}{Remark}

\numberwithin{assumption}{section}
\numberwithin{definition}{section}
\numberwithin{theorem}{section}
\numberwithin{remark}{section}

\usepackage{hyperref}
\hypersetup{colorlinks,citecolor=blue,linktocpage,breaklinks=true}

\begin{document}

\title{\Huge Simultaneous Learning and Optimization via Misspecified Saddle Point Problems}

\author{Mohammad Mahdi Ahmadi\thanks{Department of Systems and Industrial Engineering, The University of Arizona, Tucson, AZ, USA.\\ \qquad\{ahmadi, erfany\}@arizona.edu}
\quad Erfan Yazdandoost Hamedani$^*$
\vspace{5mm}
}


\maketitle

\begin{abstract} 
We study a class of misspecified saddle point (SP) problems, where the optimization objective depends on an unknown parameter that must be learned concurrently from data. Unlike existing studies that assume parameters are fully known or pre-estimated, our framework integrates optimization and learning into a unified formulation, enabling a more flexible problem class. To address this setting, we propose two algorithms based on the accelerated primal-dual (APD) by \cite{hamedani2021primal}. In particular, we first analyze the \textit{Naive} extension of the APD method by directly substituting the evolving parameter estimates into the primal-dual updates; then, we design a new \textit{Learning-aware} variant of the APD method that explicitly accounts for parameter dynamics by adjusting the momentum updates. Both methods achieve a provable convergence rate of $\mathcal{O}(\log K / K)$, while the Learning-aware approach attains a tighter $\mathcal{O}(1)$ constant and further benefits from an adaptive step-size selection enabled by a backtracking strategy. Furthermore, we extend the framework to problems where the learning problem admits multiple optimal solutions, showing that our modified algorithm for a structured setting achieves an $\mathcal{O}(1/\sqrt{K})$ rate. 
To demonstrate practical impact, we evaluate our methods on a misspecified portfolio optimization problem and show superior empirical performance compared to state-of-the-art algorithms. 
\end{abstract}


\section{Introduction}\label{sec:intro}
Saddle point (SP) problems, also known as min-max optimization, represent a significant class of optimization problems that have received considerable attention in recent years \cite{chambolle2016ergodic,hamedani2021primal,nemirovski2004prox,ouyang2015accelerated}. Their versatile formulation encompasses a wide range of optimization challenges, including convex optimization with nonlinear conic constraints. This broad category further includes linear programming (LP), quadratic programming (QP), quadratically constrained quadratic programming (QCQP), second-order cone programming (SOCP), and semidefinite programming (SDP) as subclasses. In these optimization problems, a common and crucial assumption is that all model parameters are precisely known; however, real-world scenarios often lack explicit parameter availability, leading to uncertainty that affects decision-making \cite{bertsekas1997nonlinear,gill2019practical}. 
Stochastic optimization \cite{jie2018stochastic,shapiro2021lectures,wilson2018adaptive} provides a framework for addressing uncertainty by modeling input parameters as random variables with known probability distributions. However, in many practical scenarios, these distributions are not explicitly known. Robust optimization offers an alternative by assuming that the uncertain parameters or their distributions lie within a predefined uncertainty set. While this approach avoids reliance on precise distributions, it often leads to overly conservative solutions, limiting its applicability in certain contexts. A more flexible alternative is \textit{misspecified optimization}, which incorporates learning the unknown parameters through a secondary optimization problem. \ma{Let ($\mathcal{X},\|\cdot\|_{\mathcal{X}}$), ($\mathcal{Y},\|\cdot\|_{\mathcal{Y}}$), ($\Theta,\|\cdot\|_{\Theta}$), ($\mathcal{W},\|\cdot\|_{\mathcal{W}}$) be finite-dimensional real vector spaces}. In this paper, we adopt a broader perspective and investigate the following misspecified SP problem:
\begin{equation}\label{eq:sp-prob}
\begin{aligned}
   & (Optimization):\quad\phi(\theta^*) \triangleq \min_{x\in \mathcal{X}}\max_{y\in \mathcal{Y}} \mathcal{L}(x,y;\theta^*) \triangleq f(x) + \Phi(x,y;\theta^*) - h(y),\\
   & (Learning):\quad \qquad\theta^* \in \argmin_{\theta \in \Theta}\max_{w\in\mathcal{W}} l(\theta,w) \triangleq f'(\theta) + \ell(\theta,w) - h'(w),
\end{aligned}
\end{equation}
where $f: \mathcal{X} \rightarrow \mathbb{R} \cup \{+\infty\}$, $h: \mathcal{Y} \rightarrow \mathbb{R} \cup \{+\infty\}$, $f': \Theta \rightarrow \mathbb{R} \cup \{+\infty\}$, and $h': \mathcal{W} \rightarrow \mathbb{R} \cup \{+\infty\}$ are convex functions \ma{(possibly nonsmooth)}; $\Phi: \mathcal{X} \times \mathcal{Y} \times \Theta \rightarrow \mathbb{R}$ is a continuously differentiable function such that for any $\theta \in \Theta$, $\Phi(\cdot, \cdot; \theta)$ is convex in $x$ and concave in $y$; and $\ell: \Theta \times \mathcal{W} \rightarrow \mathbb{R}$ is continuously differentiable, convex in $\theta$, and linear in $w$.
The problem in \eqref{eq:sp-prob} has a wide range of applications including portfolio optimization problem \cite{aybat2021analysis,bertsimas2008robust,goldfarb2003robust,Markowitz_1952}, inventory control problem \cite{gallego1993distribution,ghosh2021new,janssens2011linear,thorsen2017robust}, and power systems \cite{jiang2016solution,jiang2011robust,jiang2014two,zhao2013unified}. \\
One approach for solving a problem under misspecification, such as problem \eqref{eq:sp-prob}, is to use a sequential method. First, estimate $\theta^*$ with high accuracy through the \textit{(Learning)} process, and then solve the parameterized \textit{(Optimization)} problem in \eqref{eq:sp-prob}. However, this approach could have several downsides as discussed in \cite{ahmadi2020resolution,ho2019exploiting,jiang2016solution,yang2025analysis}: first, in the case of a large-scale \textit{(Learning)} problem, achieving highly accurate parameter estimation within a reasonable time becomes impractical, leading to a significant delay in solving the \textit{(Optimization)} problem. Second, due to the lack of asymptotic convergence in the sequential method, this approach can only provide approximate solutions for the \textit{(Learning)} problem. Consequently, the resulting approximation error will impact the resolution of the subsequent \textit{(Optimization)} problem. An alternative approach is to solve the \textit{(Learning)} and \textit{(Optimization)} problems simultaneously, such that, using the available observations, at each iteration $k$, $\theta_k$, which is an increasingly exact estimate of the misspecified parameter $\theta^*$, is generated by a distinct process as a minimizer of the \textit{(Learning)} problem and then used in the \textit{(Optimization)} problem to solve it in the same iteration \cite{aybat2021analysis,jiang2013solution,jiang2016solution}. This scheme produces a sequences $\left\{(x_k, y_k)\right\}$ and $\left\{\theta_k\right\}$ such that: $\lim_{k \rightarrow \infty} (x_k, y_k) = (x^*, y^*)$ and $\lim_{k \rightarrow \infty} \theta_k = \theta^*$.
\\
It is noteworthy to mention that the structure of the misspecified problem in \eqref{eq:sp-prob} differs from that of a simple bilevel optimization setting \cite{shen2023online,dempe2020bilevel}. In simple bilevel optimization, the goal is to find the optimal decision set for the lower-level objective and use it to minimize the smooth upper-level objective, thereby determining the overall optimal decision. In contrast, misspecified optimization involves learning an \textit{unknown parameter} by solving a separate learning problem, whose decision variables are independent of the main optimization task. This learned parameter is then used to solve the original optimization problem.

\subsection{Literature Review}
In this section, we briefly discuss common approaches for solving optimization problems with unknown or misspecified parameters. We then focus on the existing methods and problem setup in misspecified optimization.

\paragraph{Stochastic Optimization.}
In the structure of a stochastic optimization problem, it is assumed that the unavailable parameter $\theta$ follows a probability distribution $\mathcal{D}$, which can be stated as: $\min_{x\in \mathcal{X}}\mathbb{E}_{\theta\sim \mathcal{D}}\left[\Phi(x;\theta) \right]$
\cite{jie2018stochastic,shapiro2021lectures,wilson2018adaptive}. The stochastic optimization approach has a wide range of applications in solving optimization problems \cite{uryasev2013stochastic}; however, it may face challenges when data availability for estimation is limited or noisy \cite{shapiro2021lectures}.
\paragraph{Robust Optimization.} 
In the formulation of robust optimization (RO) problems, the parameter $\theta$ is unknown and is represented by an uncertainty set $\mathcal{U}_{\theta}$ \cite{bertsimas2011theory,namkoong2016stochastic}. In RO problems, the objective is to optimize the worst-case scenario across all possible values of the unknown parameter $\theta$ within the uncertainty set $\mathcal{U}_{\theta}$: $\min_{x\in \mathcal{X}}\max_{\theta \in \mathcal{U}_{\theta}}\Phi(x;\theta)$.
Despite its usefulness in certain applications such as optimization \cite{ben2009robust}, design, and control \cite{bertsimas2007constrained,burger2013polyhedral}, RO can lead to conservative solutions and become intractable if the uncertainty set is not appropriately defined \cite{aybat2021analysis,bertsimas2018data}.
\paragraph{Misspecified Problems and Data-driven Approaches.}
\ma{In recent optimization literature, there has been a growing focus on problems involving one or more unknown or misspecified parameters. The key distinction between these problems and those studied in stochastic or robust optimization lies in how the uncertainty is addressed. In stochastic and robust optimization, the uncertain parameter is typically modeled as a random variable or captured through an uncertainty set, which often leads to conservative solutions. In contrast, misspecified problems are approached in a \emph{data-driven} manner: the unknown parameter is learned from the observational data, rather than being treated purely as randomness or bounded uncertainty \cite{yang2025data,yang2025analysis}.}

\ma{Several works have emphasized the theoretical modeling and formulation of such problems \cite{bertsimas2018data,shang2017data}. For instance, \cite{bertsimas2018data} leverages data to design uncertainty sets for robust optimization. On the other hand, there is increasing interest in developing algorithms that explicitly integrate learning mechanisms to address parameter misspecification through data-driven approaches \cite{yang2025data, yang2024data}}. 

\ma{In the existing literature, the optimization problems with misspecified parameters have been studied in various settings, including misspecified optimization, Nash equilibrium problems, and variational inequality (VI) formulations.}

\textit{\textbf{Misspecified Optimization.}} In this area, several studies concentrate solely on minimizing an objective function in both deterministic \cite{ahmadi2016rate,ahmadi2014data,ahmadi2020resolution,aybat2021analysis,ho2019exploiting} and stochastic settings \cite{jiang2013solution,jiang2016solution,jiang2017distributed,song2019stochastic,yang2025data,yang2024data,yang2025analysis}. In deterministic setup, \cite{ahmadi2014data,ahmadi2020resolution} consider a misspecified optimization problem where the objective is to minimize a strongly or merely convex function \(\Phi(x;\theta^*)\) with respect to \(x\) over a closed and convex set \(\mathcal{X}\). Here, \(\theta^*\) is an unknown or misspecified vector of parameters that represents the solution of a distinct strongly convex objective function $g(\theta)$. It is assumed that \(\theta^*\) can be learned through a parallel learning process, which produces a sequence of estimators \(\theta_k\), each progressively improving in accuracy as an approximation of \(\theta^*\). \ma{The authors in \cite{ahmadi2014data,ahmadi2020resolution} study two main schemes: gradient-based methods and subgradient methods, to achieve the canonical convergence rates of \(\mathcal{O}(1/K)\) and \(\mathcal{O}(1/\sqrt{K})\), respectively. When strong convexity of $g(\cdot)$ assumptions are relaxed, their results include an additional error term in the convergence bounds that scales with the initial parameter estimation error, specifically proportional to \(\|\theta_0 - \theta^*\|\).}
In other studies, \cite{ahmadi2016rate,aybat2021analysis} investigate a convex optimization problem subject to a constraint set that includes a misspecified parameter, which needs to be learned through a separate process. They develop a first-order inexact parametric augmented Lagrangian method (IPALM) that consists of inner and outer loops. The inner loop corresponds to iterations that update \( x_{k+1} \) by employing a particular implementation of the accelerated proximal gradient (APG) algorithm \cite{nesterov2013introductory}.  
The outer loop is designed to update the Lagrangian multiplier. They compute the overall iteration complexity for the constant penalty case and the increasing penalty parameter sequence and show that it requires at most $\mathcal{O}(\epsilon^{-4})$ and $\mathcal{O}(\epsilon^{-1}\log(\epsilon^{-1}))$ proximal-gradient computations, respectively. In the stochastic setting, \cite{jiang2013solution,jiang2016solution} consider a misspecified stochastic convex optimization problem, $ \mathbb{E}\left[\Phi(x; \theta^*, \xi)\right] $, where $ \theta^* $ is a misspecified parameter that may be learned by minimizing a distinct stochastic objective function $ \mathbb{E}\left[g(\theta; \eta)\right] $. They propose a coupled stochastic approximation scheme to solve both the optimization and learning problems simultaneously, demonstrating that the method possesses almost sure convergence properties when $\Phi$ is strongly convex or convex. \ma{Moreover, they establish convergence rates for the upper bound on the mean-squared error of the decision variable as $\mathcal{O}(1/K)$ under strong convexity of $\Phi$. In contrast, for the merely convex setting, they derive an upper bound on the expected objective gap, which decays at the rate of $\mathcal{O}(\sqrt{\ln{K}}/\sqrt{K})$.}
 In a recent work, \cite{yang2024data} present a class of primal-dual algorithms to solve a nonsmooth convex-concave stochastic minimax optimization problem with expectation constraints as $\min_{x\in\mathcal{X}}\max_{\theta}\mathbb{E}\left[\Phi(x;\theta,\xi)\right]$ s.t. $\mathbb{E}\left[g(\theta,\eta)\right]\leq g^*:=\min_{\theta}\mathbb{E}\left[g(\theta,\eta)\right]$. Moreover, they demonstrate the optimal rate of $\mathcal{O}(1/\sqrt{K})$ for their proposed algorithms. It is important to emphasize that the problem setting considered in their work does not encompass the misspecified SP problem \eqref{eq:sp-prob} addressed in this paper. When adapted to misspecified problems, as outlined in their motivating example, their focus is on misspecified minimization problems in which the parameter $\theta$ is learned through a secondary minimization task that may have multiple solutions. To handle the challenge of multiple learning solutions, their approach adopts a pessimistic formulation -- optimizing for the worst-case parameter -- subject to certain structural assumptions. In particular, their framework requires the objective function to be concave with respect to $\theta$, and the reformulated constrained problem includes the optimal value of the learning problem.
 
\textit{\textbf{Nash Game Problems.}} The authors in \cite{jiang2011learning} and \cite{jiang2017distributed} investigate misspecified convex Nash games under deterministic and stochastic settings, respectively. Specifically, \cite{jiang2011learning} examines a constrained Nash-Cournot game where the learning process is linear in the parameter. The study introduces Tikhonov regularization and single-timescale distributed schemes that ensure convergence to an equilibrium and the solution of the equivalent variational inequality problem. \ma{In particular, the learning problem is formulated as a variational inequality where the associated mapping is linear in the unknown parameter.} In \cite{jiang2017distributed}, the authors study the distributed computation of Nash equilibria in convex stochastic games with a parameter misspecification. They propose two distributed learning schemes: (i) A stochastic gradient scheme designed for stochastic Nash games, which involves two steps of projected gradient updates. They demonstrate that the mean-squared error of the equilibrium estimates converges to zero at the optimal rate of $\mathcal{O}(1/K)$. (ii) An iterative fixed-point scheme developed for stochastic Nash-Cournot games, where the generated iterates are shown to converge almost surely to the Nash-Cournot equilibrium. In \cite{lei2020asynchronous}, the authors focus on computing Nash equilibria in convex stochastic noncooperative games, where the associated potential function may be nonconvex. One class of such games involves studying a misspecified potential stochastic game, where the misspecified parameter is the solution to a stochastic convex optimization problem. To address this, they develop an asynchronous inexact proximal best-response (BR) scheme, where each player updates their equilibrium strategy and their belief about the misspecified parameter using delayed information about their rivals. They propose an asynchronous inexact proximal BR scheme with a stochastic learning algorithm, and it is proven that the iterates generated by this scheme converge almost surely to a connected subset of the Nash equilibrium set. Later, the authors in \cite{huang2022distributed} propose a distributed learning algorithm that guarantees almost-sure convergence to stochastic Nash equilibria (SNEs) in locally coupled network games with unknown parameters. Their approach combines the proximal-point iteration with an ordinary least squares estimator and ensures convergence when the updating step sizes decay at an appropriate rate.

 \textit{\textbf{Variational Inequality Problems.}} Variational inequality (VI) problems provide a broad framework for modeling optimization and equilibrium problems \cite{facchinei2003finite}. In \cite{ahmadi2020resolution}, the authors investigate a class of misspecified monotone variational inequality problems, considering both general misspecifications and the possibility of a parameter misspecification in the constraint set. To address these challenges, they propose extragradient and regularized first-order schemes specifically designed for misspecified monotone variational inequality problems with strongly convex learning problems. These methods offer a structured approach to handling misspecifications in both the objective function and the constraint set while ensuring reliable convergence properties.
 The authors in \cite{jiang2016solution} \ma{study a class of stochastic variational inequality (SVI) problems as part of their analysis of misspecified problems.} These problems encompass both stochastic convex optimization and various stochastic equilibrium models.
 They establish almost-sure convergence for both strongly monotone and merely monotone regimes, with the latter utilizing iterative Tikhonov regularization. Their proposed schemes achieve optimal convergence rates of $\mathcal{O}(1/K)$ in the strongly monotone setting and $\mathcal{O}(\sqrt{\ln K}/\sqrt{K})$ in the monotone setting, where the latter result relies on the assumption of weak sharpness. \ma{Notably, these results assume that the learning problem is strongly convex, which is crucial for ensuring the convergence of both the learning and optimization iterates.}

\subsection{Contribution}
In this paper, we study a class of misspecified saddle point (SP) problems where the optimization objective depends on an unknown parameter that must be learned simultaneously from data through a lower-level learning problem. Unlike existing works that assume parameters are fully known or pre-estimated, our framework couples optimization and learning into a unified SP formulation, making the problem more general and challenging. To address this setting, we propose two accelerated primal-dual (APD) algorithms. The first, called the \textit{Naive APD}, extends the method of \cite{hamedani2021primal} to the misspecified case by directly substituting the current parameter estimate into the APD updates; however, this approach ignores the dynamics of the evolving parameter, which may lead to instability and weaker performance. To overcome these limitations, we develop the \textit{Learning-aware APD}, which explicitly incorporates the parameter evolution into the dual momentum step and employs an adaptive backtracking strategy for step-size selection. This scheme dynamically adjusts step-sizes using only observable quantities at each iteration, without requiring knowledge of the problem parameter, e.g., Lipschitz constants, thereby ensuring stable updates and improved convergence guarantees. 
Our main theoretical guarantees for the proposed methods are summarized as follows:
\begin{itemize}
    \item Assuming that $\textbf{dom}\; f\times \textbf{dom}\; h$ is compact, the proposed Naive APD method achieves a provable convergence rate of $\mathcal{O}(\log K / K)$ where the $\mathcal O(1)$ constant depends on  
    the diameters of the minimization and maximization domains. 
    \item The proposed Learning-aware APD method also achieves a convergence rate of $\mathcal{O}(\log K / K)$ without requiring the compactness assumption. Moreover, compared with the Naive APD, it has a smaller $\mathcal O(1)$ constant. More specifically, this method addresses two major limitations of the Naive approach: (i) it adaptively adjusts the momentum and dual updates based on the evolving $\theta_k$, and (ii) it introduces a principled backtracking linesearch enabling adaptive step-size selection.
    \item Considering a structured coupling function in the objective as $\Phi(x,y,\theta)=g_1(x,\theta)+g_2(x,y)$ for some convex-concave and continuously differentiable functions $g_1,g_2$, we extend our framework to a setting in which the learning problem may admit multiple optimal solutions. By adopting a pessimistic formulation, we reformulate the problem as a misspecified SP problem that remains tractable within our framework. We design a modified version of the Learning-aware APD algorithm that incorporates an update for the learning part and preserves the backtracking mechanism. This extension achieves a provable convergence rate of $\mathcal{O}(1/\sqrt{K})$, thereby demonstrating the flexibility and robustness of our approach beyond the unique learning solution setting.
\end{itemize}

\subsection{Motivating examples}\label{subsec:examples}
The misspecified SP problem in \eqref{eq:sp-prob} finds broad applications in areas such as portfolio optimization \cite{aybat2021analysis, Markowitz_1952}, power systems \cite{jiang2016solution}, inventory control \cite{qi2021integrated,ghosh2021new}, and Robust Optimization with Anomaly-aware Learning \cite{ruff2018deep,tax2004support,bertsimas2018data}. Next, we provide a detailed description of two specific examples.

\ma{\subsection{Portfolio Optimization Problem}\label{subsec:portfolio}
Consider the Markowitz portfolio optimization problem discussed in \cite{aybat2021analysis,bertsimas2008robust,goldfarb2003robust,Markowitz_1952}. Suppose the random returns of $n$ financial assets, denoted by $\{\mathcal{R}_i\}_{i=1}^n$. Suppose the joint distribution of the aggregated return $\mathcal{R} = \{\mathcal{R}_i\}_{i=1}^n$ follows a multivariate Normal distribution $N(\mu^0, \Sigma^0)$, where the mean $\mu^0$ is defined as $\mu^0 \triangleq \mathbb{E}[\mathcal{R}]$ and the covariance matrix $\Sigma^0 \triangleq \mathbb{E}[(\mathcal{R} - \mu^0)^\top (\mathcal{R} - \mu^0)]$. Here, $\Sigma^0 = [\sigma_{ij}]_{1 \leq i,j \leq n}$ is assumed to be positive definite, indicating that there are no redundant assets within the set.}

\ma{Let $x_i \in \mathbb{R}$ represent the proportion of asset $i$ maintained in the portfolio during the specified period. Consequently, the portfolio vector $x = [x_i]_{i=1}^n \in \mathbb{R}^n$ satisfies the conditions $\sum_{i=1}^n x_i = 1$ and $x_i \geq 0$ for all $i = 1, \dots, n$, which corresponds to a portfolio without short selling.}
\ma{Assume there are $s$ sectors, with $m_j$ denoting the maximum proportion of the portfolio that can be invested in sector $j$ for $j = 1, \dots, s$. The sector constraints are expressed as $\sum_{i \in I_j} x_i \leq m_j$ for $j = 1, \dots, s$, where $I_j \subset \{1, \dots, n\}$ represents the set of indices corresponding to assets in sector $j$. These constraints can be succinctly written as $Ax \leq b$, where $b \triangleq [m_1, \dots, m_s]^\top$, and $A_{ji} = 1$ if asset $i$ belongs to sector $j$.}

\ma{According to Markowitz's theory, the optimal portfolio is determined by solving the following problem:
\begin{align}\label{eq:portfolio-pr1}
    \min_{x\in \mathbb{R}^n}\left\{\frac{1}{2}x^\top \Sigma^* x - \kappa \mu^\top x: Ax \leq b, x \in \mathcal{X}\right\},
\end{align}
where $\kappa > 0$ is a trade-off parameter balancing the expected portfolio return and risk (measured by variance), $\mu$ and $\Sigma$ represent estimates of $\mu^0$ and $\Sigma^0$, respectively; and $\mathcal{X} \triangleq \{x \in \mathbb{R}^n: \sum_{i=1}^n x_i = 1, x \geq 0\}$.}

\ma{To avoid the computational complexity of solving the constrained optimization problem in \eqref{eq:portfolio-pr1}, we can reformulate it using the Lagrangian method and solve the dual problem as 
\begin{align}\label{eq:portfolio-sp-opt}
   (Optimization): \quad \min_{x\in \mathcal{X}} \max_{y\geq 0} \left\{\frac{1}{2}x^\top \Sigma^* x - \kappa \mu^\top x + y^\top(Ax-b)\right\}.
\end{align}}

\ma{In this problem, we assume that the vector $\mu = \mu^0$ is known, but the covariance matrix $\Sigma$ is unknown or misspecified. Consequently, the problem in \eqref{eq:portfolio-sp-opt} can be regarded as a misspecified SP problem. For the \textit{(Learning)} problem, given sample returns for $n$ assets with a sample size $p$ for each asset, let $S = (s_{ij})_{1 \leq i,j \leq n} \in \mathbb{R}^{n \times n}$. Since, in practice, the number of assets is much larger than the sample size, $S$ cannot be positive definite, which contrasts with $\Sigma^0 \succ \mathbf{0} \in \mathbb{R}^{n \times n}$. Therefore, instead of using $S$ as our true covariance estimator, we consider the sparse covariance selection (SCS) problem proposed in \cite{xue2012positive} as follows:
\begin{align*}
    \Sigma^* \triangleq \argmin_{\Sigma \in \mathbb{S}^n}\left\{\frac{1}{2}\|\Sigma - S\|^2_F + v\|\Sigma\|_1 : \Sigma \succeq \epsilon I \right\},
\end{align*}
where $v$ and $\epsilon$ are positive regularization parameters, $\mathbb{S}^n$ denotes the set of $n \times n$ symmetric matrices, $\|\cdot\|_F$ represents the Frobenius norm, $\|\Sigma\|_1$ is the $\ell_1$ norm of the vector composed of all off-diagonal elements of $\Sigma$, and $\Sigma \succeq \epsilon I$ means that all eigenvalues of $\Sigma$ are greater than or equal to $\epsilon > 0$. Using a symmetric Lagrange multiplier matrix $W \in \mathbb{S}^n$, where $W\succeq 0$ denotes that $W$ is positive semi-definite, the constraint $\Sigma \succeq \epsilon I$ can be incorporated into the objective function and form an SP problem as 
\begin{align*}
    (Learning): \quad \Sigma^* \in \argmin_{\Sigma \in \mathbb{S}^n}\max_{W\in \mathbb{S}^n_+}\left\{\frac{1}{2}\|\Sigma - S\|^2_F + v\|\Sigma\|_1 - Tr\big( W^\top( \Sigma -\epsilon I)\big)  \right\}.
\end{align*}
Indeed, one can easily verify that in this setting the \textit{(Optimization)} and \textit{(Learning)} problems form a misspecified SP problem, which is a special case of our main formulation in Problem~\eqref{eq:sp-prob}.}

\subsection{Robust Optimization with Anomaly-aware Learning}
We consider a robust decision-making problem under uncertainty, where the objective is to optimize a decision variable $x \in \mathcal{X} \subseteq \mathbb{R}^d$ in the presence of an uncertain parameter $u \in \mathbb{R}^d$. The cost function $f(x, u)$ depends on this uncertain parameter, which is not known a priori but is partially observed through historical data. A canonical example of such a problem is:
\begin{equation*}
    \min_{x \in \mathcal{X}} \max_{u \in U} f(x, u),
\end{equation*}
where $U \subset \mathbb{R}^d$ is an uncertainty set that captures plausible realizations of $u$. A commonly used form of $U$ is the Euclidean ball
$U = \{ u \in \mathbb{R}^d : \|u - c\| \leq r \}$,
where $c$ and $r$ denote the center and radius of the uncertainty set, respectively. For instance, in linear settings such as $f(x, u) = u^\top x$, the worst-case realization of $u$ lies on the boundary of this set and significantly affects the robust solution.

However, both $c$ and $r$ may be unknown or inaccurately specified due to noise or anomalies in the historical data. Let $\{ u_i \}_{i=1}^n$ be a set of observed samples drawn from an unknown data-generating process. These samples may include outliers or corrupted data points due to sensor errors, adversarial manipulations, or rare events. Naively using all samples to construct the uncertainty set could severely degrade the robustness of the solution. 
To address this challenge, we adopt an \emph{anomaly detection} approach to learn a reliable uncertainty set from the data. In particular, we employ the Deep Support Vector Data Description (Deep SVDD) method \cite{ruff2018deep,tax2004support}. Deep SVDD is a kernel-based or neural network-based anomaly detection framework that aims to learn a mapping $\phi(u)$ such that the majority of the transformed data points lie within a minimal-volume hypersphere in feature space. Formally, the Deep SVDD optimization problem is:
\begin{align*}
    \min_{c, r, \{\xi_i\}} \quad r^2 + \frac{1}{2n} \sum_{i=1}^n \xi_i^2 \quad \text{subject to} \quad \| \phi_k(u_i) - c \|^2 \leq r^2 + \xi_i, \quad \xi_i \geq 0, \quad \forall i,
\end{align*}
where $\phi_k: \mathbb{R}^d \to \mathcal{F}_k$ maps data to a reproducing kernel Hilbert space (RKHS) associated with a positive semi-definite kernel $k(\cdot, \cdot)$, and $\langle \cdot, \cdot \rangle_{\mathcal{F}_k}$ denotes the inner product in this space.

The parameters $\theta = (c, r)$ define the center and radius of the learned uncertainty set and are computed from the solution of this anomaly detection problem. Note that by solving this optimization, we effectively remove the influence of anomalous samples by inflating their slack variables $\xi_i$, preventing them from overly influencing $c$ and $r$.

With the uncertainty set learned from the Deep SVDD procedure, we now couple the anomaly detection and robust optimization into a single problem. Specifically, the decision-maker solves a misspecified  SP problem:
\begin{equation*}
\begin{aligned}
    &\textit{(Optimization)}: \min_{x \in \mathcal{X}, \lambda \geq 0} \max_{u \in \mathbb{R}^d} f(x, u) - \lambda (\|u - c\|^2 - r^2),\\
    & \textit{(Learning)}: \quad(c, r) \in \argmin_{c, r, \{\xi_i\}} \left\{ r^2 + \frac{1}{2n} \sum_{i=1}^n \xi_i^2 : \| \phi_k(u_i) - c \|^2 \leq r^2 + \xi_i, \; \xi_i \geq 0, \; \forall i \right\}.
\end{aligned}
\end{equation*}
This formulation unifies robust optimization and anomaly detection, which can be viewed as a special case of \eqref{eq:sp-prob}.

\section{Preliminaries}\label{sec:pre}
In this section, we state the assumptions and introduce the notions used in the paper. 

\subsection{Assumptions and Definitions}
\begin{definition}\label{def:Breg-dis}
    Let $ \varphi_{\mathcal{X}}: \mathcal{X} \to \mathbb{R} $ and $ \varphi_{\mathcal{Y}}: \mathcal{Y} \to \mathbb{R} $ be differentiable functions defined on open sets containing $\mathbf{dom} \, f$ and $\mathbf{dom} \, h$, respectively. Assume that $ \varphi_{\mathcal{X}} $ and $ \varphi_{\mathcal{Y}} $ have closed domains and are 1-strongly convex with respect to the norms $ \|\cdot\|_{\mathcal{X}} $ and $ \|\cdot\|_{\mathcal{Y}} $, respectively. Define the Bregman distance functions associated with $ \varphi_{\mathcal{X}} $ and $ \varphi_{\mathcal{Y}} $ as $ \mathbf{D}_{\mathcal{X}} : \mathcal{X} \times \mathcal{X} \to \mathbb{R}_+ $ and $ \mathbf D_{\mathcal{Y}} : \mathcal{Y} \times \mathcal{Y} \to \mathbb{R}_+ $, respectively. Specifically, the Bregman distance for $ \mathcal{X} $ is given by  
\begin{align*}
\mathbf D_{\mathcal{X}}(x, \bar{x}) = \varphi_{\mathcal{X}}(x) - \varphi_{\mathcal{X}}(\bar{x}) - \langle \nabla \varphi_{\mathcal{X}}(\bar{x}), x - \bar{x} \rangle,
\end{align*}
with $ \mathbf D_{\mathcal{Y}}(y, \bar{y}) $ defined analogously. Moreover, it follows that  
\begin{align*}
\mathbf D_{\mathcal{X}}(x, \bar{x}) \geq \frac{1}{2} \| x - \bar{x} \|_{\mathcal{X}}^2, \quad \forall x \in \mathcal{X}, \; \bar{x} \in \mathbf{dom} \, f,
\end{align*}
and similarly,  
\begin{align*}
\mathbf D_{\mathcal{Y}}(y, \bar{y}) \geq \frac{1}{2} \| y - \bar{y} \|_{\mathcal{Y}}^2, \quad \forall y \in \mathcal{Y}, \; \bar{y} \in \mathbf{dom} \, h.
\end{align*}
Moreover, the dual spaces are denoted as $\mathcal{X}^*$ and $\mathcal{Y}^*$. For $x' \in \mathcal{X}^*$, the dual norm is defined as
\begin{align*}
    \|x'\|_{\mathcal{X}^*}\triangleq \max\{\langle x',x\rangle:\|x\|_{\mathcal{X}}\leq 1 \}.
\end{align*} 
Similarly, the dual norm $\|\cdot\|_{\mathcal{Y}^*}$ is defined for $\mathcal{Y}^*$ in the same manner.
Moreover, analogously, we define Bregman distances $\mathbf{D}_{\Theta}$ and $\mathbf{D}_{\mathcal{W}}$ using differentiable and 1-strongly convex reference functions $\varphi_{\Theta}$ and $\varphi_{\mathcal{W}}$ over the domains $\Theta$ and $\mathcal{W}$, respectively.
\end{definition}

Next, to adaptively estimate local Lipschitz constants in our proposed method, we integrate a backtracking strategy. To verify if the step-sizes selected at each iteration $k \geq 0$ align with the local Lipschitz constants, we introduce a test function $E_k(\cdot,\cdot;\cdot)$ that utilizes the linearization of $\Phi$ with respect to both $x$ and $y$.
\begin{definition}\label{def:E_k}
Given a free parameter sequence $\{ \alpha_k, \beta_k \}_{k \geq 0} \subseteq \mathbb{R}_+$, we define
    \begin{align*}
        E_k(x,y;\theta) &\triangleq \langle \nabla_x\Phi(x,y;\theta)-\nabla_x\Phi(x_k,y;\theta),x-x_k\rangle + \frac{1}{2\alpha_{k+1}}\| \nabla_y \Phi(x,y;\theta) - \nabla_y \Phi(x_k,y;\theta)\|^2_{\mathcal{Y}^*}\\
        & \quad  - \Big(\frac{1}{\sigma_k}- \eta_k(\alpha_k + \beta_k) \Big)\mathbf D_{\mathcal{Y}}(y,y_k)+ \frac{1}{\beta_{k+1}}\| \nabla_y \Phi(x_k,y;\theta_k) - \nabla_y \Phi(x_k,y_k;\theta_k)\|^2_{\mathcal{Y}^*}\\
        & \quad - \frac{1}{\tau_k} \mathbf D_{\mathcal{X}}(x,x_k).
    \end{align*} 
 \noindent
\end{definition}
Later we show that for given $\alpha_k, \beta_k \geq 0$ and $\eta_k$, the inequality $E_k(x, y; \theta) \leq 0$ lead to selecting the step-sizes $\{ \tau_k, \sigma_k \}$ to ensure that the condition test in line 9 of Algorithm~\ref{alg:mis-sp} is satisfied within the Learning-aware framework. Next, we define the gap function to measure the optimality of the solution generated by the proposed methods.

\begin{definition}\label{def:gap-function}
    \ma{Let $\bar{Z} \subseteq \textbf{dom}\, f \times \textbf{dom}\, h$ 
    be a compact and convex set containing a saddle point solution of \eqref{eq:sp-prob}, and $z \triangleq [x^\top, y^\top]^\top$. 
    The gap function for Problem~\eqref{eq:sp-prob} is defined as}
    \begin{align*}
        \mathcal{G}_{\bar{Z}}(\bar{z}_K) 
        \triangleq \sup_{z \in \bar{Z}} 
        \Big\{ \mathcal{L}(\bar{x}_K, y; \theta^*) - \mathcal{L}(x, \bar{y}_K; \theta^*) \Big\},
    \end{align*}
\ma{where $\bar{z}_K \triangleq [\bar{x}_K^\top,\bar{y}_K^\top]^\top$ denotes the weighted average of the iterates $\{z_{k}\}_{k=0}^{K-1}$.}    

\ma{The supremum is finite, and as shown in \cite[Lemma~1]{nesterov2007dual}, this construction defines a valid gap function for saddle point problems.}

\end{definition}

\begin{assumption}\label{assump:phi-lip}

The functions $f$ and $h$ are closed and convex, and $\Phi$ is continuous and satisfies the following conditions:

\begin{itemize}
    \item[(i)] For any $y \in \textbf{dom} \, h \subset \mathcal{Y}$ and $\theta \in \Theta$, the function $\Phi(\cdot, y; \theta)$ is convex and differentiable. Moreover, there exists constants $L^{\Phi}_{xx} \geq 0$ and $L^{\Phi}_{x\theta}> 0$ such that for all $x, \bar{x} \in \textbf{dom} \, f$ and $y, \bar{y} \in \textbf{dom} \, h$ and $\theta, \bar{\theta} \in \Theta$, the following inequality holds:  
    \begin{align*}
    \|\nabla_x \Phi(x,y;\theta) - \nabla_x \Phi(\bar{x},y;\bar\theta)\|_{\mathcal{X}^*} \leq L^{\Phi}_{xx} \|x - \bar{x}\|_{\mathcal{X}} + L^{\Phi}_{x\theta} \|\theta - \bar{\theta}\|_{\Theta}. 
    \end{align*}
    
    \item[(ii)] For any $x \in \textbf{dom} \, f \subset \mathcal{X}$ and $\theta \in \Theta$, the function $\Phi(x, \cdot; \theta)$ is concave and differentiable. Additionally, there exist constants $L^{\Phi}_{yx} > 0$, $L^{\Phi}_{yy} \geq 0$, and $L^{\Phi}_{y\theta}> 0$ such that for all $x, \bar{x} \in \textbf{dom} \, f$ and $y, \bar{y} \in \textbf{dom} \, h$ and $\theta, \bar{\theta} \in \Theta$, the following inequality holds:  
    \begin{align*}
    \|\nabla_y \Phi(x,y;\theta) - \nabla_y \Phi(\bar{x},\bar{y};\bar \theta)\|_{\mathcal{Y}^*} \leq L^{\Phi}_{yx} \|x - \bar{x}\|_{\mathcal{X}}+ L^{\Phi}_{yy} \|y - \bar{y}\|_{\mathcal{Y}} + L^{\Phi}_{y\theta} \|\theta - \bar{\theta}\|_\Theta.
    \end{align*}
\end{itemize}
\end{assumption}

\ma{Before presenting the step-size assumptions, we clarify that these conditions are required to establish the convergence guarantees of the proposed algorithms. Later, we will provide explicit parameter selections that satisfy these conditions.}
\begin{assumption}[step-size condition I]\label{assump:step-size-cond}
    There exists a sequence $\{\tau_k,\sigma_k,\eta_k\}_{k\geq 0}$ such that the iterates $\{(x_k,y_k)\}_{k \geq 0}$, generated by Algorithm \ref{alg:mis-sp}, along with the step-size sequence, satisfy the following conditions for all $k \geq 0$:
    \begin{itemize}
        \item [(i)] $E_k(x_{k+1},y_{k+1};\theta_{k+1}) \leq 0$,
        \item [(ii)] $\frac{t_k}{\tau_k} \geq \frac{t_{k+1}}{\tau_{k+1}}$,\quad $\frac{t_k}{\sigma_k} \geq \frac{t_{k+1}}{\sigma_{k+1}}$,\quad $\frac{t_k}{t_{k+1}}= \eta_{k+1}$,
    \end{itemize}
    for some positive sequences $\{t_k, \alpha_k\}_{k \geq 0}$ with $t_0 = 1$, and a nonnegative sequence $\{\beta_k\}_{k\geq 0}$, where $E_k(\cdot,\cdot;\cdot)$ is defined in Definition \ref{def:E_k} using the parameters $\{\alpha_k,\beta_k,\eta_k\}$ as specified above. 
\end{assumption}
\begin{assumption}[step-size condition II]\label{assump:step-size-cond2}
    For any $k \geq 0$, the step-sizes $\tau_k$ and $\sigma_k$, along with the momentum parameter $\eta_k\geq 0$, satisfy $\eta_0 = 1$ as well as condition in Assumption \ref{assump:step-size-cond}-(i), and
    \begin{itemize}
        \item [(i)] $\frac{1}{\tau_k}\geq \frac{(L^{\Phi}_{yx})^2}{\alpha_{k+1}}+L^{\Phi}_{xx}$,
        \item [(ii)] $\frac{1}{\sigma_k} \geq \eta_k (\alpha_k + \beta_k) + \frac{2(L^{\Phi}_{yy})^2}{\beta_{k+1}}$,
        
    \end{itemize}
    for some positive sequences $\{t_k, \alpha_k\}_{k \geq 0}$ with $t_0 = 1$, and a nonnegative sequence $\{\beta_k\}_{k\geq 0}$.
\end{assumption}

\begin{assumption}[Structure of the Learning Problem]\label{assump:learn-structure}
We assume that the learning objective $l(\theta, w) = f'(\theta) + \ell(\theta, w) - h'(w)$ satisfies the following conditions:
\begin{itemize}
    \item[(i)] $f': \Theta \rightarrow \mathbb{R} \cup \{+\infty\}$ and $h': \mathcal{W} \rightarrow \mathbb{R} \cup \{+\infty\}$ are closed, convex functions (possibly nonsmooth).
    \item[(ii)] $\ell: \Theta \times \mathcal{W} \rightarrow \mathbb{R}$ is continuously differentiable, strongly convex in $\theta$ with modulus $\mu' > 0$, and linear in $w$.
\end{itemize}
\end{assumption}

\section{Proposed Methods}\label{sec:proposed-method}
In this section, we propose two accelerated primal-dual (APD) algorithms with a momentum term, building upon the framework introduced in \cite{hamedani2021primal}, to solve the misspecified saddle point (SP) problem defined in \eqref{eq:sp-prob}. The SP formulation involves a coupling function $\Phi(x, y; \theta)$ that depends on an unknown parameter $\theta^*$, which is not available a priori but must be learned via a simultaneous learning process. This introduces a structural misspecification into the optimization component of the problem.
To address this challenge, we consider two algorithmic strategies for integrating and updating the parameter estimate $\theta_k$ within the APD framework:

\begin{itemize}
    \item \textbf{Naive Approach:} This method, described in Section~\ref{subsec:naive}, directly substitutes the current estimate $\theta_k$ into the primal-dual updates at each iteration. However, it does not account for the evolution of $\theta_k$ in the momentum step, nor does it support backtracking since the backtracking test function depends on the unknown ground-truth parameter $\theta^*$.
    
    \item \textbf{Learning-aware Approach:} This method, detailed in Section~\ref{subsec:misspecified}, improves upon the Naive strategy by explicitly incorporating the temporal variation of $\theta_k$ into the acceleration dynamics. Moreover, it introduces a principled backtracking linesearch based on a surrogate error function that only depends on current iterates and $\theta_k$, enabling adaptive step-size selection without requiring access to $\theta^*$.
\end{itemize}

Together, these methods extend the work of \cite{hamedani2021primal} to a misspecified setting and offer a comparative perspective on how parameter estimation and its integration into the optimization process affect convergence behavior. In what follows, we formally describe each method and analyze its theoretical guarantees.

\subsection{Naive Approach: Inexact Accelerated Primal-dual Method}\label{subsec:naive}

Building on the accelerated primal-dual (APD) algorithm of \cite{hamedani2021primal}, we extend their framework to address the misspecified setting where the coupling term $\Phi(x,y;\theta)$ depends on an unknown parameter $\theta^*$. In our setting, this parameter must be estimated via a simultaneous learning process. The resulting procedure is outlined in Algorithm~\ref{alg:naive-mis-sp} and is referred to as the \textit{Naive} approach.

We use the term \textit{Naive} because this algorithm treats the evolving parameter $\theta_k$ in a simplified, non-adaptive manner. Specifically, the parameter estimate $\theta_k$ is obtained by a separate one-step APD learning update (see Algorithm~\ref{alg:APD-learning}, called in line 6 of Algorithm~\ref{alg:naive-mis-sp}), and then directly substituted into the primal and dual updates. Once inserted, $\theta_k$ is held fixed throughout that iteration’s optimization steps, as if it were the true value $\theta^*$.

However, in this Naive approach, we do not modify the acceleration steps to account for the temporal variation of $\theta$; the estimated value $\theta_k$ is treated as fixed within each primal-dual update. For example, in line 3 of Algorithm~\ref{alg:naive-mis-sp}, we have:
\begin{align*}
s_k = (1+\eta_k)\nabla_y \Phi(x_k, y_k; \theta_k) - \eta_k \nabla_y \Phi(x_{k-1}, y_{k-1}; \theta_k),
\end{align*}
which relies entirely on the current estimate $\theta_k$ and does not incorporate any information about the preceding value $\theta_{k-1}$. Thus, the momentum term is blind to the dynamics of the evolving parameter, making the design structurally naive with respect to the misspecification. This differs from the Learning-aware accelerated primal-dual algorithm we propose in the next section, where the change in $\theta$ is explicitly captured in the update rule.

Consequently, the effect of $\theta_k$'s evolution 
appears implicitly as a residual approximation error introduced by using $\theta_k$ instead of $\theta^*$. This residual is later aggregated and bounded in our convergence analysis using Lipschitz constants, including $L_{x\theta}^{\Phi}$ and $L_{y\theta}^{\Phi}$. 
Despite this limitation, the Naive method remains theoretically sound and achieves a provable convergence rate, making it a baseline against which we compare our Learning-aware accelerated primal-dual method.

\begin{algorithm}
\caption{Naive accelerated primal-dual algorithm for misspecified saddle point problem}\label{alg:naive-mis-sp}
\begin{algorithmic}[1]
\STATE 
\textbf{input:} $\tau_0, \sigma_0, \eta_0 > 0$, $(x_0,y_0) \in \mathcal{X} \times \mathcal{Y}$, $\theta_0 \in \mathbb{R}^m$, $(x_{-1}, y_{-1}; \theta_{-1}) \gets (x_0, y_0; \theta_0)$, $\sigma_{-1} \gets \sigma_0$
\FOR{$k = 0$ to $K-1$}
    \STATE $s_k \gets (1+\eta_k) \nabla_y \Phi(x_k, y_k; \theta_k) - \eta_k \nabla_y \Phi(x_{k-1}, y_{k-1}; \theta_k)$
    \STATE $y_{k+1} \gets \argmin_{y \in \mathcal{Y}} \left\{ h(y) - \langle s_k, y \rangle + \frac{1}{\sigma_k} \mathbf{D}_{\mathcal{Y}}(y, y_k) \right\}$
    \STATE $x_{k+1} \gets \argmin_{x \in \mathcal{X}} \left\{ f(x) + \langle \nabla_x \Phi(x_k, y_{k+1}; \theta_k), x \rangle + \frac{1}{\tau_k} \mathbf{D}_{\mathcal{X}}(x, x_k) \right\}$
    \STATE $\theta_{k+1} \gets \mathcal{M}(\theta_k, w_k, \sigma_k, \eta_k)$
\ENDFOR
\end{algorithmic}
\end{algorithm}

\begin{algorithm}
\caption{$\mathcal{M}(\theta,w,\tau', \sigma', \eta')$: One-step accelerated primal-dual algorithm with backtracking for learning problem}\label{alg:APD-learning}
\begin{algorithmic}[1]
\STATE \textbf{input:}$\mu' > 0$, $\bar\tau' > 0$, $\gamma'_0 > 0$, $\rho' \in (0,1)$, $(\theta_0, w_0) \in \Theta \times \mathcal{W}$, $(\theta_{-1}, w_{-1}) \gets (\theta_0, w_0)$, $\tau'_0 \gets \bar\tau'$, $\sigma'_{-1} \gets \gamma'_0 \tau'_0$
\STATE \textbf{while} stopping criterion not satisfied \textbf{do}
    \STATE \hspace{0.5cm} $\sigma'_k \gets \gamma'_k \tau'_k$, $\eta'_k \gets \sigma'_{k-1} / \sigma'_k$, $\alpha'_{k+1} \gets 1/\sigma'_k$
    \STATE \hspace{0.5cm} $s'_k \gets (1+\eta'_k) \nabla_w \ell(\theta_k, w_k) - \eta'_k \nabla_w \ell(\theta_{k-1}, w_{k-1})$
    \STATE \hspace{0.5cm} $w_{k+1} \gets \argmin_{w \in \mathcal{W}} \left\{ h'(w) - \langle s'_k, w \rangle + \frac{1}{\sigma'_k} \mathbf{D}_{\mathcal{W}}(w, w_k) \right\}$
    \STATE \hspace{0.5cm} $\theta_{k+1} \gets \argmin_{\theta \in \Theta} \left\{ f'(\theta) + \langle \nabla_{\theta} \ell(\theta_k, w_{k+1}), \theta \rangle + \frac{1}{\tau'_k} \mathbf{D}_{\Theta}(\theta, \theta_k) \right\}$
    \STATE \hspace{0.5cm} \textbf{if} 
    $\langle \nabla_{\theta} \ell(\theta_{k+1}, w_{k+1}) - \nabla_{\theta} \ell(\theta_k, w_{k+1}), \theta_{k+1} - \theta_k \rangle 
    - \frac{1}{\tau'_k} \|\theta_{k+1} - \theta_k\|^2_{\Theta}$\\
    \hspace{1cm}$ + \frac{\sigma'_k}{2} \|\nabla_w \ell(\theta_{k+1}, w_{k+1}) - \nabla_w \ell(\theta_k, w_{k+1})\|^2_{\mathcal{W}^*} \leq 0$ \textbf{then}
        \STATE \hspace{1cm} \textbf{go to line 14}
    \STATE \hspace{0.5cm}\textbf{else}
        \STATE \hspace{1cm}$\tau'_k \gets \rho' \tau'_k$
        \STATE \hspace{1cm}\textbf{go to line 3}
    \STATE \hspace{0.5cm} \textbf{end if}
\STATE \textbf{end while}
\STATE $\gamma'_{k+1} \gets \gamma'_k (1 + \mu' \tau'_k)$, $\tau'_{k+1} \gets \tau'_k \sqrt{ \gamma'_k / \gamma'_{k+1} }$, $k \gets k+1$
\end{algorithmic}
\end{algorithm}

\begin{remark}[Backtracking Step-Size Scheme for Learning]\label{rem:learn-stepsizes}
In Algorithm~\ref{alg:APD-learning}, the step-size parameters $\tau'_k$ and $\sigma'_k$ are dynamically adjusted through a backtracking mechanism governed by parameters $\rho' \in (0,1)$ and $\gamma'_k$, where the update rules are given by
\begin{align*}
\gamma'_{k+1} \gets \gamma'_k (1 + \mu' \tau'_k), \qquad \tau'_{k+1} \gets \tau'_k \sqrt{\frac{\gamma'_k}{\gamma'_{k+1}}}.
\end{align*}
This update scheme ensures the compatibility of the primal and dual updates in the accelerated framework and is inspired by the adaptive primal-dual methods in \cite{hamedani2021primal}. The test condition in lines 7–11 of Algorithm~\ref{alg:APD-learning} ensures that the local smoothness of $\ell$ is respected.
\end{remark}

Next, we analyze the iteration complexity of the Naive approach in Theorem \ref{thm:naive-approach}.

\begin{theorem}\label{thm:naive-approach}
    Suppose Assumption \ref{assump:phi-lip}  holds. {Let $z \triangleq [x^\top, y^\top]^\top$.} If $\{x_k,y_k,\theta_k\}_{k\geq 0}$ is generated by Algorithm \ref{alg:naive-mis-sp}, using a constant parameter sequence $\{\tau_k,\sigma_k,\eta_k\}_{k\geq 0}$ satisfying Assumption \ref{assump:step-size-cond2}. 
    Then for any convex compact set $\bar{Z} \subseteq \textbf{dom}\, f \times \textbf{dom}\, h$ containing a saddle point solution of \eqref{eq:sp-prob} and 
    $K\geq 1$, 
    \begin{align}\label{eq:naive-sup}
    \mathcal{G}_{\bar{Z}}(\bar{z}_K)&\leq \frac{1}{T_K} \sup_{z\in \bar{Z}} \Big(\frac{1}{\tau_0}\mathbf D_{\mathcal{X}}(x,x_0) + \frac{1}{\sigma_0}\mathbf D_{\mathcal{Y}}(y,y_0) \Big) \nonumber\\
    & \quad + \frac{1}{T_K} \sup_{z\in \bar{Z}} \Big(L_{y\theta}^{\Phi}\sum_{k=0}^{K-1} t_k (2\eta_k+1)\| \theta_k - \theta^*\|_{\Theta} \| y_{k+1}-y\|_{\mathcal{Y}} \nonumber\\
    & \quad + L_{x\theta}^{\Phi}\sum_{k=0}^{K-1} t_k \|\theta_k - \theta^* \|_{\Theta} \|x_{k+1}-x \|_{\mathcal{X}} \Big),
\end{align}
\ma{where $\bar{z}_K \triangleq [\bar{x}_K^\top,\bar{y}_K^\top]^\top$ denotes the weighted average of the iterates 
$\{z_k\}_{k=0}^{K-1}$, i.e., $\bar{z}_K \triangleq \frac{1}{T_K} \sum_{k=0}^{K-1} t_k z_{k+1}$, and} $T_K = \sum_{k=0}^{K-1}t_k$.
\end{theorem}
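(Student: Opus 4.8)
The plan is to adapt the convergence analysis of the exact APD method from \cite{hamedani2021primal} to the misspecified setting, treating the substitution of $\theta_k$ for $\theta^*$ as a perturbation that gets carried through the telescoping argument. First I would write down the standard one-step optimality inequalities for the two proximal subproblems in lines 4 and 5 of Algorithm~\ref{alg:naive-mis-sp}. For the $y$-update, using that $y_{k+1}$ minimizes a $\frac{1}{\sigma_k}$-strongly convex (in the Bregman sense) objective and recalling the three-point identity for Bregman distances, I obtain for every $y$
\begin{align*}
h(y_{k+1}) - h(y) - \langle s_k, y_{k+1}-y\rangle \leq \tfrac{1}{\sigma_k}\big(\mathbf D_{\mathcal Y}(y,y_k) - \mathbf D_{\mathcal Y}(y,y_{k+1}) - \mathbf D_{\mathcal Y}(y_{k+1},y_k)\big),
\end{align*}
and similarly for the $x$-update with $\mathbf D_{\mathcal X}$, $\tau_k$, and the linearized term $\langle \nabla_x\Phi(x_k,y_{k+1};\theta_k), x_{k+1}-x\rangle$. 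Next I would combine these with the convexity--concavity of $\mathcal L(\cdot,\cdot;\theta^*)$ to lower-bound $\mathcal L(\bar x_K,y;\theta^*) - \mathcal L(x,\bar y_K;\theta^*)$ by a $t_k$-weighted sum of per-iteration gaps, exactly as in the exact case but keeping track of every place where $\theta_k$ rather than $\theta^*$ appears inside a gradient of $\Phi$.

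The key structural point is that the momentum term $s_k$ uses $\nabla_y\Phi(\cdot,\cdot;\theta_k)$ at both the current and the previous iterate, so the ``same-$\theta$'' telescoping that works in \cite{hamedani2021primal} still produces a clean difference $\langle \nabla_y\Phi(x_k,y_{k+1};\theta_k) - \nabla_y\Phi(x_k,y_k;\theta_k), y_{k+1}-y\rangle$ plus a telescoping remainder, with no cross terms mixing $\theta_k$ and $\theta_{k-1}$. The price is paid when we must replace $\Phi(\cdot,\cdot;\theta_k)$ by $\Phi(\cdot,\cdot;\theta^*)$ to recover the true Lagrangian $\mathcal L(\cdot,\cdot;\theta^*)$ in the gap: each such replacement costs a term of the form $\langle \nabla_x\Phi(x_k,y_{k+1};\theta_k) - \nabla_x\Phi(x_k,y_{k+1};\theta^*), x_{k+1}-x\rangle$ and analogously in $y$ (the latter incurring the factor $(2\eta_k+1)$ because $\theta_k$ sits inside the momentum combination $s_k$, i.e. it appears with weight $1+\eta_k$ and also, shifted, with weight $\eta_k$). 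Bounding these via Cauchy--Schwarz and Assumption~\ref{assump:phi-lip}(i)--(ii) — specifically $\|\nabla_x\Phi(x_k,y_{k+1};\theta_k)-\nabla_x\Phi(x_k,y_{k+1};\theta^*)\|_{\mathcal X^*}\le L^{\Phi}_{x\theta}\|\theta_k-\theta^*\|_{\Theta}$ and the corresponding bound with $L^{\Phi}_{y\theta}$ — yields precisely the two residual sums displayed in \eqref{eq:naive-sup}.

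For the remaining ``exact'' terms, I would invoke the step-size conditions. The quadratic-in-increments terms $\langle \nabla_x\Phi(x_k,y_{k+1};\theta_k)-\nabla_x\Phi(x_{k-1},\dots),\cdot\rangle$, $\langle \nabla_y\Phi(x_k,y_{k+1};\theta_k)-\nabla_y\Phi(x_k,y_k;\theta_k),\cdot\rangle$, and the telescoping momentum remainder are absorbed by the negative Bregman terms $-\frac{1}{\tau_k}\mathbf D_{\mathcal X}(x_{k+1},x_k)$, $-\frac{1}{\sigma_k}\mathbf D_{\mathcal Y}(y_{k+1},y_k)$ exactly when Assumption~\ref{assump:step-size-cond2}(i)--(ii) hold: condition (i) controls the $L^{\Phi}_{xx}$ and $L^{\Phi}_{yx}$ contributions to the $x$-increment (splitting the cross term by Young's inequality with weight $\alpha_{k+1}$), and condition (ii) controls the $L^{\Phi}_{yy}$ contribution and the momentum overlap with weight $\eta_k(\alpha_k+\beta_k)$; this is the content of the test function $E_k\le 0$ in Definition~\ref{def:E_k} together with Assumption~\ref{assump:step-size-cond}(i), which for the constant-parameter choice reduces to Assumption~\ref{assump:step-size-cond2}. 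The monotonicity requirements $\frac{t_k}{\tau_k}\ge\frac{t_{k+1}}{\tau_{k+1}}$, $\frac{t_k}{\sigma_k}\ge\frac{t_{k+1}}{\sigma_{k+1}}$ and $t_k/t_{k+1}=\eta_{k+1}$ make the weighted Bregman terms telescope, leaving only the initial terms $\frac{1}{\tau_0}\mathbf D_{\mathcal X}(x,x_0)+\frac{1}{\sigma_0}\mathbf D_{\mathcal Y}(y,y_0)$ on the right. Dividing by $T_K=\sum_{k=0}^{K-1}t_k$, using Jensen's inequality / convexity to pass from the weighted average of gaps to $\mathcal G_{\bar Z}(\bar z_K)$, and finally taking the supremum over $z\in\bar Z$ gives \eqref{eq:naive-sup}.

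I expect the main obstacle to be the careful bookkeeping in the telescoping step: one must verify that, despite $\theta_k$ being frozen within iteration $k$, the momentum remainder still collapses correctly across iterations and that the ``$\theta_k\to\theta^*$'' substitution is performed only once per iterate (so that the residual sums are first-order in $\|\theta_k-\theta^*\|_{\Theta}$ rather than accumulating). A secondary subtlety is justifying that the supremum over the compact set $\bar Z$ is finite and may be interchanged with the sum — this is where compactness of $\mathbf{dom}\, f\times\mathbf{dom}\, h$ enters, ensuring $\|x_{k+1}-x\|_{\mathcal X}$ and $\|y_{k+1}-y\|_{\mathcal Y}$ are uniformly bounded so that the residual terms are well-defined; the validity of $\mathcal G_{\bar Z}$ as a gap function is already granted by Definition~\ref{def:gap-function}.
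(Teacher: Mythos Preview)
Your proposal is correct and matches the paper's approach in Appendix~\ref{AppndxB}: substitute $\theta_k\to\theta^*$ at the gradient level (bounding $\langle s_k - s^*_k, y_{k+1}-y\rangle$ and the analogous $x$-term via Cauchy--Schwarz and Assumption~\ref{assump:phi-lip}, which produces exactly the $(2\eta_k+1)L^{\Phi}_{y\theta}$ and $L^{\Phi}_{x\theta}$ residuals), then run the exact APD analysis at $\theta^*$ with the resulting $E^*_k\le 0$ guaranteed by Assumption~\ref{assump:step-size-cond2}, telescope, and take the supremum. One small clarification worth making explicit: the test function in play here is $E^*_k$ built entirely from $\theta^*$-gradients rather than the $E_k$ of Definition~\ref{def:E_k}; this is what makes the cross-iteration telescoping collapse cleanly (all $q^*_k$ share the same $\theta^*$) and is also precisely why backtracking is unavailable for the Naive method.
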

\begin{proof}
    See Appendix \ref{AppndxB} for the proof.
\end{proof}


\begin{remark}\label{rem:naive-stepsize} 
\ma{To satisfy Assumption~\ref{assump:step-size-cond2} with constant step-sizes, we fix $\eta_k = \eta =1$, and select constants $\alpha > 0$ and $\beta > 0$. Then the conditions are met by choosing $\tau = \left( \frac{(L_{yx}^{\Phi})^2}{\alpha} + L_{xx}^{\Phi} \right)^{-1}$ and $\sigma = \left( \alpha + \beta + \frac{2(L_{yy}^{\Phi})^2}{\beta} \right)^{-1}$, where $L^{\Phi}_{xx}, L^{\Phi}_{yx}, L^{\Phi}_{yy}$ are the Lipschitz constants.}
\end{remark}

\begin{remark}[Convergence of Learning Sequence]\label{rem:theta-rate}
Considering Assumption \ref{assump:learn-structure}, the accelerated primal-dual algorithm by \cite{hamedani2021primal} presented in Algorithm~\ref{alg:APD-learning} for solving the learning problem in \eqref{eq:sp-prob} satisfy
\begin{align*}
\|\theta_{k+1} - \theta^*\|^2_{\Theta} \leq \frac{C_{\theta}}{(k+1)^2}, 
\end{align*}
for some constant $C_{\theta} > 0$, where $\theta^*$ denotes the unique minimizer of the \textit{(Learning)} problem. 
\end{remark}

\begin{corollary}\label{cor:naive-rate}
Suppose Assumption \ref{assump:learn-structure} holds, $\textbf{dom}\; f$ and $\textbf{dom}\; h$ are compact, and let $\bar Z=\textbf{dom}\; f\times \textbf{dom}\; h$. 
Under the premises of Theorem~\ref{thm:naive-approach} with the step-sizes selected as in Remark \ref{rem:naive-stepsize}, for any $K \geq 1$, it holds that
\begin{align*}
\mathcal{G}_{\bar{Z}}(\bar{z}_K) \leq \frac{1}{K}\left(\frac{1}{\tau_0}\mathbf R_{x}+\frac{1}{\sigma_0}\mathbf R_{y} \right) + \frac{\log(K)}{K} \left(L_{x\theta}^{\Phi} \sqrt{\mathbf R_{x}} + 3L_{y\theta}^{\Phi}\sqrt{\mathbf R_{y}} \right)\sqrt{C_{\theta}},
\end{align*}
where $\mathbf R_{ x} \;\triangleq\;
\sup_{x,\bar x \in \textbf{dom}\; f} \mathbf D_{\mathcal X}(x,\bar x)<\infty$ and
$\mathbf R_{y} \;\triangleq\;
\sup_{y,\bar y\in \textbf{dom}\; h} \mathbf D_{\mathcal Y}(y,\bar y)<\infty$ are the Bregman diameters.
\end{corollary}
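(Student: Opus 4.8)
The plan is to specialize the bound \eqref{eq:naive-sup} of Theorem~\ref{thm:naive-approach} to the constant step-size regime of Remark~\ref{rem:naive-stepsize} and then absorb the coupling term using the $\mathcal{O}(1/k^2)$ learning rate of Remark~\ref{rem:theta-rate} together with compactness of the domains. First I would fix $\eta_k\equiv\eta=1$ and $\tau_0=\tau$, $\sigma_0=\sigma$ as prescribed there; since the $t_k$-sequence entering Theorem~\ref{thm:naive-approach} obeys $t_k/t_{k+1}=\eta_{k+1}=1$ with $t_0=1$, it follows that $t_k\equiv 1$, hence $T_K=\sum_{k=0}^{K-1}t_k=K$, the ergodic iterate $\bar z_K$ is the plain arithmetic mean of $z_1,\dots,z_K$, and $2\eta_k+1=3$. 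Substituting into \eqref{eq:naive-sup} leaves three sums to estimate.

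For the initialization term, since $\bar Z=\textbf{dom}\,f\times\textbf{dom}\,h$ contains $(x_0,y_0)$, I would bound $\sup_{z\in\bar Z}\big(\tfrac1{\tau_0}\mathbf{D}_{\mathcal{X}}(x,x_0)+\tfrac1{\sigma_0}\mathbf{D}_{\mathcal{Y}}(y,y_0)\big)\le \tfrac1{\tau_0}\mathbf{R}_{x}+\tfrac1{\sigma_0}\mathbf{R}_{y}$ directly from the definition of the Bregman diameters. For the coupling sums, note that $x_{k+1}$ and $y_{k+1}$ are the minimizers produced in lines~4--5 of Algorithm~\ref{alg:naive-mis-sp}, so $x_{k+1}\in\textbf{dom}\,f$ and $y_{k+1}\in\textbf{dom}\,h$; combining this with the strong-convexity lower bound $\mathbf{D}_{\mathcal{X}}(x,\bar x)\ge\tfrac12\|x-\bar x\|_{\mathcal{X}}^2$ from Definition~\ref{def:Breg-dis} gives $\|x_{k+1}-x\|_{\mathcal{X}}\le\sqrt{2\mathbf{R}_{x}}$ and $\|y_{k+1}-y\|_{\mathcal{Y}}\le\sqrt{2\mathbf{R}_{y}}$ uniformly over $z\in\bar Z$ and $k\ge 0$. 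Finally, Assumption~\ref{assump:learn-structure} activates Remark~\ref{rem:theta-rate}, giving $\|\theta_k-\theta^*\|_{\Theta}\le\sqrt{C_\theta}/k$ for $k\ge1$, while the lone $k=0$ term contributes only a bounded constant that can be folded into the $\mathcal{O}(1/K)$ part (or into $C_\theta$).

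Putting the pieces together, the second supremum in \eqref{eq:naive-sup} is at most $\big(3L_{y\theta}^{\Phi}\sqrt{2\mathbf{R}_{y}}+L_{x\theta}^{\Phi}\sqrt{2\mathbf{R}_{x}}\big)\sqrt{C_\theta}\sum_{k=1}^{K-1}\tfrac1k$ plus the $k=0$ constant; the harmonic estimate $\sum_{k=1}^{K-1}\tfrac1k=\mathcal{O}(\log K)$ and division by $T_K=K$ then produce the advertised $\tfrac{\log K}{K}$ term, and collecting everything yields the stated inequality (the numerical factor $\sqrt2$ being absorbed into $C_\theta$, and a constant into the $\mathcal{O}(1/K)$ term). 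I do not anticipate a genuine obstacle, since this is essentially a bookkeeping specialization of Theorem~\ref{thm:naive-approach}; the points that require care are (i) verifying $t_k\equiv1$ and $T_K=K$ so the average is uniformly weighted, (ii) recognizing that the $\mathcal{O}(1/k^2)$ learning decay is precisely what turns $\sum_k t_k\|\theta_k-\theta^*\|_{\Theta}\|x_{k+1}-x\|_{\mathcal{X}}$ into a harmonic (hence $\log K$) rather than linearly growing sum, and — the main subtlety — (iii) the use of compactness of $\textbf{dom}\,f\times\textbf{dom}\,h$, without which the iterate--reference distances $\|x_{k+1}-x\|$, $\|y_{k+1}-y\|$ cannot be bounded uniformly and the coupling term stays uncontrolled; this is exactly why the compactness hypothesis is needed here but not for the Learning-aware variant.
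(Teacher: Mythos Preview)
Your proposal is correct and follows essentially the same route as the paper's proof: specialize Theorem~\ref{thm:naive-approach} with $t_k\equiv1$, $T_K=K$, $\eta_k\equiv1$ (so $2\eta_k+1=3$), bound the Bregman and norm terms via the compactness-induced diameters $\mathbf R_x,\mathbf R_y$, invoke the $\mathcal O(1/k)$ decay of $\|\theta_k-\theta^*\|_\Theta$ from Remark~\ref{rem:theta-rate}, and reduce the coupling sums to a harmonic series. Your handling of the $k=0$ term and the stray $\sqrt{2}$ factor is in fact more explicit than the paper's, which simply absorbs these into $C_\theta$ without comment.
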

\begin{proof}
From Theorem~\ref{thm:naive-approach}, we have
\begin{align}\label{eq:repeat-thm}
\mathcal{G}_{\bar{Z}}(\bar{z}_K) &\leq \frac{1}{T_K} \sup_{z\in \bar{Z}} \left( \frac{1}{\tau_0} \mathbf D_{\mathcal{X}}(x,x_0) + \frac{1}{\sigma_0} \mathbf D_{\mathcal{Y}}(y,y_0) \right) \nonumber\\
&\quad + \frac{1}{T_K} \sup_{z\in \bar{Z}} \Bigg( L_{y\theta}^{\Phi} \sum_{k=0}^{K-1} t_k (2\eta_k+1)\|\theta_k - \theta^*\|_{\Theta} \|y_{k+1}-y\|_{\mathcal{Y}}\nonumber \\
& \quad + L_{x\theta}^{\Phi} \sum_{k=0}^{K-1} t_k \|\theta_k - \theta^*\|_{\Theta} \|x_{k+1}-x\|_{\mathcal{X}} \Bigg).
\end{align}
Note that in this setting, $t_k=1$, hence, $T_K = K$. Moreover, from the compactness of domains, we have that for any $(x,y)\in \bar Z$, 
$\|x_{k+1} - x\|_{\mathcal X}
\;\le\; \sqrt{2 \mathbf R_{x}}$ and $
\|y_{k+1} - y\|_{\mathcal Y}
\;\le\; \sqrt{2 \mathbf R_{y}}$.  
Therefore,
\begin{align*}
&\sum_{k=0}^{K-1} (2\eta_k+1)\|\theta_k - \theta^*\|_{\Theta} \|y_{k+1} - y\|_{\mathcal{Y}} \leq (2\eta + 1)\sqrt{2\mathbf R_{y}} \cdot \sqrt{C_{\theta}} \sum_{k=1}^K \frac{1}{k}, \\
&\sum_{k=0}^{K-1} \|\theta_k - \theta^*\|_{\Theta} \|x_{k+1} - x\|_{\mathcal{X}} \leq \sqrt{\mathbf R_{x}} \cdot \sqrt{C_{\theta}} \sum_{k=1}^K \frac{1}{k}.
\end{align*}
Combining the above inequalities with \eqref{eq:repeat-thm} leads to 
\begin{align*}
\mathcal{G}_{\bar{Z}}(\bar{z}_K)\leq \frac{1}{K}\left(\frac{1}{\tau_0}\mathbf R_{x}+\frac{1}{\sigma_0}\mathbf R_{y} \right) + \frac{\log(K)}{K} \cdot \left(L_{x\theta}^{\Phi} \sqrt{\mathbf R_{x}} + L_{y\theta}^{\Phi}(2\eta+1)\sqrt{\mathbf R_{y}} \right)\sqrt{C_{\theta}}. 
\end{align*}
Finally, using Remark~\ref{rem:naive-stepsize} to set $\eta = 1$ completes the proof.
\end{proof}

\subsection{Learning-aware Accelerated Primal-dual Method}\label{subsec:misspecified}
In this section, we propose a refined accelerated primal-dual algorithm for solving the misspecified saddle point (SP) problem in \eqref{eq:sp-prob}. 
In particular, one of the key innovations of this method lies in the way the parameter update $\theta_k$ is embedded in the algorithm. In contrast to the Naive method, where $\theta_k$ is kept fixed within each iteration, here $\theta_k$ is updated \textit{before} the primal variable update (line 7 of Algorithm~\ref{alg:mis-sp}) using the learning subroutine $\mathcal{M}$, and then injected into the subsequent primal update (line 8). More importantly, $\theta_k$ also influences the momentum term used in the dual update (line 5), where the extrapolated gradient takes the form:
\begin{align*}
s_k = (1 + \eta_k)\nabla_y \Phi(x_k, y_k; \theta_k) - \eta_k \nabla_y \Phi(x_{k-1}, y_{k-1}; \theta_{k-1}),
\end{align*}
which now explicitly includes the temporal variation of $\theta$. This enables the dual variable $y$ to adaptively track the changing dynamics introduced by the learning process. 
We refer to this approach as a \textit{Learning-aware} because it directly captures the effect of parameter misspecification in the acceleration step.

Another significant improvement over the Naive method is the development of a \textit{backtracking line search} for adaptive step-size selection. This is made possible by introducing the test function $E_k(x,y;\theta)$, defined in Definition~\ref{def:E_k}, which uses the generated iterate $\theta_{k+1}$ without requiring the knowledge of the unknown ground-truth $\theta^*$. This allows the algorithm to verify local smoothness conditions dynamically and adjust step-sizes $(\tau_k, \sigma_k)$ at each iteration (lines 9--13). Such a mechanism was infeasible in the Naive method, as the construction of such a test function would depend on $\theta^*$.  

\begin{assumption}\label{assump:lip-phi-theta}
    For any given $x\in\mathcal X$ and $y\in\mathcal Y$, function $\Phi(x,y;\cdot)$ is  Lipschitz continuous with constant $L_{\theta}\geq 0$; that is $\left|\Phi(x,y;\theta_1) - \Phi(x,y;\theta_2)\right| \leq L_{\theta} \|\theta_1 - \theta_2\|_{\Theta}$ for any $\theta_1,\theta_2 \in \Theta$. 
\end{assumption}

\begin{algorithm}
\caption{Learning-aware accelerated primal-dual algorithm with backtracking for misspecified saddle point problem}\label{alg:mis-sp}
\begin{algorithmic}[1]
\STATE \textbf{Input}: $c_{\alpha}, c_{\beta} \geq0$, $\rho \in (0,1)$, $\bar \tau, \gamma_0>0$, $(x_0,y_0) \in \mathcal{X} \times \mathcal{Y}$, $\theta_0 \in \mathbb R^m$, $(x_{-1},y_{-1};\theta_{-1}) \leftarrow (x_0,y_0;\theta_0)$, $\tau_0 \leftarrow \bar \tau$, $\sigma_{-1} \leftarrow \gamma_0 \tau_0$  

\FOR{$k \geq 0$}
\STATE \textbf{while} stopping criterion not satisfied \textbf{do}
\STATE
\hspace{0.5cm} $\sigma_k \gets \gamma_k \tau_k$, $\eta_k \gets \frac{\sigma_{k-1}}{\sigma_k}$, $\alpha_{k+1}\gets c_{\alpha}/\sigma_k$, $\beta_{k+1}\gets c_{\beta}/\sigma_k$
\STATE
\hspace{0.5cm} $s_k \gets (1+\eta)\nabla_y \Phi(x_k,y_k;\theta_k) -\eta \nabla_y \Phi(x_{k-1},y_{k-1};\theta_{k-1}) $
\STATE
\hspace{0.5cm} $y_{k+1} \gets \argmin_{y \in \mathcal{Y}} h(y) - \langle s_k, y \rangle + \frac{1}{\sigma_k}\mathbf D_{\mathcal{Y}}(y,y_k)$
\STATE
\hspace{0.5cm} $\theta_{k+1} \gets \mathcal{M}(\theta_k, w_k,\tau_k,\sigma_k,\eta_k)$
\STATE 
\hspace{0.5cm} $x_{k+1} \gets \argmin_{x \in \mathcal{X}} f(x) + \langle \nabla_x \Phi(x_k,y_{k+1};\theta_{k+1}),x \rangle + \frac{1}{\tau_k} \mathbf D_{\mathcal{X}}(x,x_k)$
\STATE \hspace{0.5cm} \textbf{if} $E_k(x_{k+1},y_{k+1};\theta_{k+1})\leq 0$ \textbf{then}
\STATE \hspace{1 cm} \textbf{go to} line 16
\STATE \hspace{0.5cm} \textbf{else}
\STATE \hspace{1 cm} $\tau_k \gets \rho \tau_k$
\STATE \hspace{1cm} \textbf{go to} line 4
\STATE \hspace{0.5cm} \textbf{end if}
\STATE \textbf{end while}
\STATE $\gamma_{k+1} \gets \gamma_k$, $\tau_{k+1} \gets \tau_k\sqrt{\frac{\gamma_k}{\gamma_{k+1}}}$, $k \gets k+1$
\ENDFOR
\end{algorithmic}
\end{algorithm}

Note that Assumption \ref{assump:lip-phi-theta} clearly holds if $\Theta$ is a compact set. Next, we establish some key properties of the sequences $\{\tau_k, \sigma_k, \eta_k\}$ associated with the Learning-aware algorithm, which will be used to derive the convergence rate in Theorem~\ref{thm:missp-approach}.

\begin{lemma}\label{lem:step-size}
     Suppose the sequence $\{ \tau_k, \sigma_k, \eta_k \}_{k \geq 0}$ satisfies the step-size conditions in Assumption \ref{assump:step-size-cond2} for some positive sequences $\{ t_k, \alpha_k \}_{k \geq 0}$, and nonnegative sequence $\{ \beta_k \}_{k \geq 0}$. Let $\{ x_k, y_k \}$ be the iterate sequence of Algorithm \ref{alg:mis-sp} corresponding to $\{ \tau_k, \sigma_k, \eta_k \}$. Then $\{ x_k, y_k \}$ and $\{ \tau_k, \sigma_k, \eta_k \}$ satisfy the condition in Assumption \ref{assump:step-size-cond}-(i) with the same sequences $\{ t_k, \alpha_k, \beta_k \}$.
\end{lemma}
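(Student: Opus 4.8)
The goal of Lemma~\ref{lem:step-size} is to show that the backtracking-based step-size rule in Algorithm~\ref{alg:mis-sp} is consistent with the abstract step-size conditions: specifically, that whenever the parameters $\{\tau_k,\sigma_k,\eta_k\}$ satisfy Assumption~\ref{assump:step-size-cond2}-(i),(ii), then the test $E_k(x_{k+1},y_{k+1};\theta_{k+1})\le 0$ of Assumption~\ref{assump:step-size-cond}-(i) is automatically verified. The plan is a direct \emph{term-by-term majorization} of $E_k(x_{k+1},y_{k+1};\theta_{k+1})$ using the Lipschitz bounds from Assumption~\ref{assump:phi-lip} and then invoking the two inequalities in Assumption~\ref{assump:step-size-cond2}. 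First I would write out $E_k(x_{k+1},y_{k+1};\theta_{k+1})$ from Definition~\ref{def:E_k} with $x=x_{k+1}$, $y=y_{k+1}$, $\theta=\theta_{k+1}$, so that each gradient-difference term has arguments differing only in the primal/dual iterate, which is exactly the form the Lipschitz constants $L^{\Phi}_{xx}, L^{\Phi}_{yx}, L^{\Phi}_{yy}$ control.

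Next I would bound the individual pieces. The first term $\langle \nabla_x\Phi(x_{k+1},y_{k+1};\theta_{k+1})-\nabla_x\Phi(x_k,y_{k+1};\theta_{k+1}),\,x_{k+1}-x_k\rangle$ is handled by Cauchy--Schwarz together with Assumption~\ref{assump:phi-lip}-(i) (with $\bar\theta=\theta_{k+1}$, so the $\theta$-term drops), giving $\le L^{\Phi}_{xx}\|x_{k+1}-x_k\|_{\mathcal X}^2$. The term $\frac{1}{2\alpha_{k+1}}\|\nabla_y\Phi(x_{k+1},y_{k+1};\theta_{k+1})-\nabla_y\Phi(x_k,y_{k+1};\theta_{k+1})\|^2_{\mathcal Y^*}$ is bounded using Assumption~\ref{assump:phi-lip}-(ii) (again $\bar\theta=\theta_{k+1}$, $\bar y=y_{k+1}$) by $\frac{(L^{\Phi}_{yx})^2}{2\alpha_{k+1}}\|x_{k+1}-x_k\|_{\mathcal X}^2$; similarly the term involving $\nabla_y\Phi(x_k,y_{k+1};\theta_k)-\nabla_y\Phi(x_k,y_k;\theta_k)$ is bounded via Assumption~\ref{assump:phi-lip}-(ii) by $\frac{(L^{\Phi}_{yy})^2}{\beta_{k+1}}\|y_{k+1}-y_k\|^2_{\mathcal Y}$. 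Using the lower bound $\mathbf D_{\mathcal Y}(y_{k+1},y_k)\ge \frac12\|y_{k+1}-y_k\|_{\mathcal Y}^2$ on the $-(\frac{1}{\sigma_k}-\eta_k(\alpha_k+\beta_k))\mathbf D_{\mathcal Y}(y_{k+1},y_k)$ term (valid when its coefficient is nonnegative, which is guaranteed by Assumption~\ref{assump:step-size-cond2}-(ii)), and the lower bound $\mathbf D_{\mathcal X}(x_{k+1},x_k)\ge\frac12\|x_{k+1}-x_k\|^2_{\mathcal X}$ on the last term, I would collect coefficients: the $\|x_{k+1}-x_k\|^2_{\mathcal X}$ terms sum to $\bigl(L^{\Phi}_{xx}+\frac{(L^{\Phi}_{yx})^2}{2\alpha_{k+1}}-\frac{1}{2\tau_k}\bigr)\|x_{k+1}-x_k\|^2_{\mathcal X}$, and the $\|y_{k+1}-y_k\|^2_{\mathcal Y}$ terms sum to $\bigl(\frac{(L^{\Phi}_{yy})^2}{\beta_{k+1}}-\frac12(\frac{1}{\sigma_k}-\eta_k(\alpha_k+\beta_k))\bigr)\|y_{k+1}-y_k\|^2_{\mathcal Y}$.

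Finally, both bracketed coefficients are $\le 0$ precisely by Assumption~\ref{assump:step-size-cond2}-(i) (for the $x$-coefficient: $\frac{1}{\tau_k}\ge \frac{(L^{\Phi}_{yx})^2}{\alpha_{k+1}}+L^{\Phi}_{xx}$ implies $L^{\Phi}_{xx}+\frac{(L^{\Phi}_{yx})^2}{2\alpha_{k+1}}\le \frac{1}{2\tau_k}$) and Assumption~\ref{assump:step-size-cond2}-(ii) (for the $y$-coefficient: $\frac{1}{\sigma_k}\ge\eta_k(\alpha_k+\beta_k)+\frac{2(L^{\Phi}_{yy})^2}{\beta_{k+1}}$ rearranges to $\frac{(L^{\Phi}_{yy})^2}{\beta_{k+1}}\le\frac12(\frac{1}{\sigma_k}-\eta_k(\alpha_k+\beta_k))$). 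Hence $E_k(x_{k+1},y_{k+1};\theta_{k+1})\le 0$, which is Assumption~\ref{assump:step-size-cond}-(i). I would note that Assumption~\ref{assump:step-size-cond}-(ii) is not claimed here (the lemma only asserts part (i)), so no further work is needed. The main subtlety to watch is making sure the coefficient $\frac{1}{\sigma_k}-\eta_k(\alpha_k+\beta_k)$ is nonnegative before replacing the Bregman term by its quadratic lower bound — this is exactly why Assumption~\ref{assump:step-size-cond2}-(ii) is stated with the $\eta_k(\alpha_k+\beta_k)$ term, and it is the one place where the direction of the inequality matters; everything else is routine Cauchy--Schwarz and bookkeeping.
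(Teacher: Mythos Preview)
Your final algebraic implication for the $x$-coefficient is wrong: from $\frac{1}{\tau_k}\ge \frac{(L^{\Phi}_{yx})^2}{\alpha_{k+1}}+L^{\Phi}_{xx}$ you can only deduce $\frac{L^{\Phi}_{xx}}{2}+\frac{(L^{\Phi}_{yx})^2}{2\alpha_{k+1}}\le \frac{1}{2\tau_k}$, not $L^{\Phi}_{xx}+\frac{(L^{\Phi}_{yx})^2}{2\alpha_{k+1}}\le \frac{1}{2\tau_k}$. An excess $\frac{L^{\Phi}_{xx}}{2}$ remains, so your $x$-coefficient is not nonpositive. Two things conspired here: (a) Cauchy--Schwarz on the raw inner product only gives $L^{\Phi}_{xx}\|x_{k+1}-x_k\|_{\mathcal X}^2$, with no factor $\tfrac12$ (take $\Phi(x,y;\theta)=\tfrac{L^{\Phi}_{xx}}{2}\|x\|^2$ to see this is tight); and (b) replacing $-\frac{1}{\tau_k}\mathbf D_{\mathcal X}(x_{k+1},x_k)$ by $-\frac{1}{2\tau_k}\|x_{k+1}-x_k\|_{\mathcal X}^2$ discards half of the negative term. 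Your $y$-coefficient argument is fine; the gap is only in the $x$-block.

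The paper sidesteps (b) by upper-bounding every positive piece in terms of the Bregman distances via $\|\cdot\|^2\le 2\mathbf D$, while leaving the negative terms $-\frac{1}{\tau_k}\mathbf D_{\mathcal X}$ and $-\bigl(\frac{1}{\sigma_k}-\eta_k(\alpha_k+\beta_k)\bigr)\mathbf D_{\mathcal Y}$ untouched; the resulting coefficient of $\mathbf D_{\mathcal X}(x_{k+1},x_k)$ is then exactly $\frac{(L^{\Phi}_{yx})^2}{\alpha_{k+1}}+L^{\Phi}_{xx}-\frac{1}{\tau_k}\le 0$. For (a), the paper bounds the first term by $\frac{L^{\Phi}_{xx}}{2}\|x_{k+1}-x_k\|_{\mathcal X}^2\le L^{\Phi}_{xx}\mathbf D_{\mathcal X}(x_{k+1},x_k)$; the factor $\tfrac12$ is the \emph{descent-lemma} bound, valid for the quantity $\Lambda_k=\Phi(x_{k+1},y_{k+1};\theta_{k+1})-\Phi(x_k,y_{k+1};\theta_{k+1})-\langle\nabla_x\Phi(x_k,y_{k+1};\theta_{k+1}),x_{k+1}-x_k\rangle$, which is precisely the first term of $E_k$ as it actually appears in the proof of Theorem~\ref{thm:missp-approach}. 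So read the first term of Definition~\ref{def:E_k} as $\Lambda_k$ and use the descent lemma (convexity plus $L^{\Phi}_{xx}$-Lipschitz gradient of $\Phi(\cdot,y;\theta)$) rather than Cauchy--Schwarz; then either your norm-based collection or the paper's Bregman-based collection closes cleanly under Assumption~\ref{assump:step-size-cond2}.
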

\begin{proof}
    Using Assumption \ref{assump:phi-lip}, and Definition \ref{def:Breg-dis}, for any $k\geq 0$, and $(x,y) \in \mathcal{X} \times \mathcal{Y}$, we obtain
    \begin{align*}
        \langle \nabla_x\Phi(x,y;\theta)-\nabla_x\Phi(x_k,y;\theta),x-x_k\rangle &\leq \frac{L^{\Phi}_{xx}}{2}\|x-x_k\|^2_{\mathcal{X}} \leq L^{\Phi}_{xx}\mathbf D_{\mathcal{X}}(x,x_k),\\
        \frac{1}{2}\|\nabla_y \Phi(x,y;\theta)-\nabla_y \Phi(x_k,y;\theta)\|^2_{\mathcal{Y}^*}&\leq \frac{(L^{\Phi}_{yx})^2}{2}\|x-x_k\|^2_{\mathcal{X}}\leq (L^{\Phi}_{yx})^2 \mathbf D_{\mathcal{X}}(x,x_k),\\
        \| \nabla_y \Phi(x_k,y;\theta_k) - \nabla_y \Phi(x_k,y_k;\theta_k)\|^2_{\mathcal{Y}^*}&\leq (L^{\Phi}_{yy})^2\|y-y_k\|^2_{\mathcal{Y}}\leq 2(L^{\Phi}_{yy})^2\mathbf D_{\mathcal{Y}}(y,y_k).
        \end{align*}
        
    By evaluating the above inequality at $(x,y;\theta)=(x_{k+1},y_{k+1};\theta_{k+1})$, and using Assumption \ref{assump:step-size-cond2} along with the nonnegativity of Bregman functions, we can show
    \begin{align*}
        E_k \triangleq E_k(x_{k+1},y_{k+1};\theta_{k+1})&\leq \big(\frac{(L^{\Phi}_{yx})^2}{\alpha_{k+1}}+L^{\Phi}_{xx}-\frac{1}{\tau_k} \big)\mathbf D_{\mathcal{X}}(x_{k+1},x_k)\\
        &\quad + \big(\frac{2(L^{\Phi}_{yy})^2}{\beta_{k+1}}+\eta_k(\alpha_k+\beta_k)-\frac{1}{\sigma_k} \big)\mathbf D_{\mathcal{Y}}(y_{k+1},y_k).
    \end{align*}
\end{proof}

\begin{lemma}\label{lem:t-cond}
Let $\{ \tau_k \}_{k \geq 0} \subset \mathbb{R}_{++}$ and suppose $\bar{\tau}, \gamma_0 > 0$. Define $\sigma_{-1} = \gamma_0 \bar{\tau}$ and set $\sigma_k = \gamma_k \tau_k$, $\eta_k = \sigma_{k-1} / \sigma_k$, and $\gamma_{k+1} = \gamma_k$ for all $k \geq 0$. Then the sequences $\{ \tau_k, \sigma_k, \eta_k \}$ satisfy Assumption~\ref{assump:step-size-cond}-(ii) for the choice $t_k = \sigma_k / \sigma_0$.
\end{lemma}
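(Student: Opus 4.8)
The plan is to use the fact that the update rule $\gamma_{k+1}=\gamma_k$ freezes $\gamma_k$ at its initial value, which makes $\sigma_k$ and $\tau_k$ proportional and collapses all three requirements of Assumption~\ref{assump:step-size-cond}-(ii) to identities.

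First I would note, by a trivial induction on $k$, that $\gamma_k=\gamma_0$ for every $k\ge 0$, so that $\sigma_k=\gamma_k\tau_k=\gamma_0\tau_k$ for all $k\ge 0$. Since $\tau_k>0$ and $\gamma_0>0$, this shows $\sigma_k>0$, hence $t_k=\sigma_k/\sigma_0$ is a well-defined positive sequence, and $t_0=\sigma_0/\sigma_0=1$, as required for an admissible choice in Assumption~\ref{assump:step-size-cond}.

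Next I would verify the three conditions by direct substitution. For the first, $\tfrac{t_k}{\tau_k}=\tfrac{\sigma_k}{\sigma_0\tau_k}=\tfrac{\gamma_0\tau_k}{\sigma_0\tau_k}=\tfrac{\gamma_0}{\sigma_0}$, which is independent of $k$, so $\tfrac{t_k}{\tau_k}=\tfrac{t_{k+1}}{\tau_{k+1}}$ and the inequality holds with equality. For the second, $\tfrac{t_k}{\sigma_k}=\tfrac{\sigma_k}{\sigma_0\sigma_k}=\tfrac{1}{\sigma_0}$, again independent of $k$, so $\tfrac{t_k}{\sigma_k}=\tfrac{t_{k+1}}{\sigma_{k+1}}$. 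For the third, $\tfrac{t_k}{t_{k+1}}=\tfrac{\sigma_k/\sigma_0}{\sigma_{k+1}/\sigma_0}=\tfrac{\sigma_k}{\sigma_{k+1}}$, which equals $\eta_{k+1}$ by the defining relation $\eta_{k+1}=\sigma_k/\sigma_{k+1}$.

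There is essentially no obstacle in this lemma: the content is entirely the observation that $\gamma_k$ is constant, after which everything is a one-line computation. The only points needing minor care are the index bookkeeping around $\sigma_{-1}$ (which is used only to define $\eta_0$ and therefore plays no role in condition (ii)) and confirming positivity of $\{t_k\}$ together with $t_0=1$ so that the choice is legitimate. I would keep the write-up to a few lines accordingly.
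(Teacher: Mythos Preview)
Your proof is correct and follows essentially the same approach as the paper's own proof: both use the constancy of $\gamma_k$ to reduce the verification of Assumption~\ref{assump:step-size-cond}-(ii) to direct computation showing each ratio is independent of $k$. The paper's version is marginally more compact (it collapses the three conditions to two by silently absorbing the trivially satisfied $t_k/\sigma_k=1/\sigma_0$ case), whereas you verify all three explicitly and also check $t_0=1$ and positivity; the underlying argument is identical.
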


\begin{proof}
By definition, $t_k = \sigma_k / \sigma_0$, and since $\tau_k > 0$ for all $k \geq 0$, Assumption~\ref{assump:step-size-cond}-(ii) is equivalent to verifying the two conditions:
\begin{align*}
\frac{\sigma_{k+1} \tau_k}{\sigma_k \tau_{k+1}} \leq 1 \quad \text{and} \quad \eta_{k+1} = \frac{\sigma_k}{\sigma_{k+1}}.
\end{align*}
The second condition holds directly by the definition of $\eta_k$. For the first condition, observe that since $\sigma_k = \gamma_k \tau_k$ and $\gamma_{k+1} = \gamma_k$, we have
\begin{align*}
\frac{\sigma_{k+1} \tau_k}{\sigma_k \tau_{k+1}} = \frac{\gamma_{k+1} \tau_{k+1} \tau_k}{\gamma_k \tau_k \tau_{k+1}} = \frac{\gamma_{k+1}}{\gamma_k} = 1.
\end{align*}
Hence, the inequality holds with equality, and the result follows.
\end{proof}
\begin{lemma}\label{lem:tau-cond-alt}
Suppose Assumption \ref{assump:phi-lip} holds. Consider Algorithm \ref{alg:mis-sp} with constants $c_{\alpha}, c_{\beta} \geq 0$ such that $c_{\alpha} + c_{\beta} \leq 1$. If $L^{\Phi}_{yy} > 0$, assume $c_{\alpha}, c_{\beta} > 0$; otherwise, if $L^{\Phi}_{yy} = 0$, take $c_{\alpha} > 0$ and $c_{\beta} = 0$. Then the iterates $\{x_k, y_k\}_{k \geq 0}$ and step-size sequences $\{\tau_k, \sigma_k, \eta_k\}_{k \geq 0}$ in Algorithm \ref{alg:mis-sp} are well-defined. Specifically, for every iteration index $k \geq 0$, the backtracking condition $E_k(x_{k+1}, y_{k+1}; \theta_{k+1}) \leq 0$ is satisfied after a finite number of backtracking steps. 
Furthermore, there exists a constant $\hat{\tau}_k > 0$ such that $\tau_k \geq \rho \hat{\tau}_k$. When $L^{\Phi}_{yy} = 0$, we have $\hat{\tau}_k \geq \Psi_1$ for all $k \geq 0$. In the case where $L^{\Phi}_{yy} > 0$, it holds that $\hat{\tau}_k \geq \min\{\Psi_1, \Psi_2\}$ for all $k \geq 0$, where 
$\Psi_1 \triangleq \frac{-L^{\Phi}_{xx} + \sqrt{(L^{\Phi}_{xx})^2 + \frac{4 (L^{\Phi}_{yx})^2 \gamma_k}{c_{\alpha}}}}{2 (L^{\Phi}_{yx})^2 \gamma_k / c_{\alpha}}$ and $\Psi_2 \triangleq \frac{\sqrt{c_{\beta}(1 - (c_{\alpha} + c_{\beta}))}}{\sqrt{2} \gamma_k L^{\Phi}_{yy}}$. 
\end{lemma}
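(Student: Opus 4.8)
\textbf{Proof proposal for Lemma~\ref{lem:tau-cond-alt}.}

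The plan is to reduce the backtracking termination to a sufficient condition on $\tau_k$ that is stated purely in terms of the Lipschitz constants and $\gamma_k$, and then show that condition holds as soon as $\tau_k$ drops below an explicit threshold. By Lemma~\ref{lem:step-size}, if the step-size sequence satisfies Assumption~\ref{assump:step-size-cond2}-(i)--(ii) at stage $k$, then $E_k(x_{k+1},y_{k+1};\theta_{k+1})\le 0$; hence it suffices to verify those two inequalities. With the parametrization from line~4 of Algorithm~\ref{alg:mis-sp}, namely $\sigma_k=\gamma_k\tau_k$, $\alpha_{k+1}=c_\alpha/\sigma_k$, $\beta_{k+1}=c_\beta/\sigma_k$, and $\eta_k=\sigma_{k-1}/\sigma_k$, the two conditions become (i) $\tfrac{1}{\tau_k}\ge \tfrac{(L^\Phi_{yx})^2\gamma_k\tau_k}{c_\alpha}+L^\Phi_{xx}$ and (ii) $\tfrac{1}{\sigma_k}\ge \eta_k(\alpha_k+\beta_k)+\tfrac{2(L^\Phi_{yy})^2\gamma_k\tau_k}{c_\beta}$ (interpreting the last term as $0$ when $L^\Phi_{yy}=0$ and $c_\beta=0$). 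For (ii), I would use the telescoping identity $\eta_k(\alpha_k+\beta_k)=\tfrac{\sigma_{k-1}}{\sigma_k}\cdot\tfrac{c_\alpha+c_\beta}{\sigma_{k-1}}=\tfrac{c_\alpha+c_\beta}{\sigma_k}$, so (ii) collapses to $\tfrac{1-(c_\alpha+c_\beta)}{\sigma_k}\ge \tfrac{2(L^\Phi_{yy})^2\gamma_k\tau_k}{c_\beta}$, i.e. $1-(c_\alpha+c_\beta)\ge \tfrac{2(L^\Phi_{yy})^2\gamma_k^2\tau_k^2}{c_\beta}$, which rearranges to $\tau_k\le \Psi_2$.

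For (i), multiply through by $\tau_k>0$ to get the quadratic inequality $\tfrac{(L^\Phi_{yx})^2\gamma_k}{c_\alpha}\tau_k^2+L^\Phi_{xx}\tau_k-1\le 0$; the positive root of the corresponding quadratic is exactly $\Psi_1$, so (i) holds iff $\tau_k\le\Psi_1$. Therefore, whenever $\tau_k\le\min\{\Psi_1,\Psi_2\}$ (or $\tau_k\le\Psi_1$ in the case $L^\Phi_{yy}=0$, where (ii) is vacuous), both step-size conditions hold and the backtracking test passes. Since each failed backtracking step multiplies $\tau_k$ by $\rho\in(0,1)$, after finitely many steps $\tau_k$ falls below this threshold; this proves finite termination. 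Moreover, the value of $\tau_k$ at termination is at least $\rho$ times its value one step before termination, and that earlier value must have exceeded $\min\{\Psi_1,\Psi_2\}$ (otherwise termination would have occurred earlier), or else it equals the initial $\bar\tau$; setting $\hat\tau_k$ to be this pre-termination value gives $\tau_k\ge\rho\hat\tau_k$ with $\hat\tau_k\ge\min\{\Psi_1,\Psi_2\}$ — strictly, $\hat\tau_k\ge\min\{\bar\tau,\Psi_1,\Psi_2\}$, and the stated bound follows by noting the threshold governs the worst case. I would also remark that since $\gamma_{k+1}=\gamma_k=\gamma_0$ under the Learning-aware scheme (line~16), $\Psi_1,\Psi_2$ are in fact constant in $k$, so the uniform lower bound on $\tau_k$ is immediate.

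The main obstacle I anticipate is bookkeeping rather than conceptual: carefully handling the degenerate case $L^\Phi_{yy}=0$ (where $c_\beta=0$ and the $\beta$-terms and $\Psi_2$ drop out, so one must check that $E_k$ is still well-defined and that condition (ii) reduces to $1-c_\alpha\ge 0$, guaranteed by $c_\alpha\le 1$), and making sure the "one step before termination" argument is stated correctly when termination happens at the very first inner iteration (so that $\hat\tau_k=\bar\tau$ and no backtracking occurred). A secondary subtlety is that $\alpha_k,\beta_k$ at stage $k$ are defined via $\sigma_{k-1}$, so the telescoping in (ii) must use $\alpha_k+\beta_k=(c_\alpha+c_\beta)/\sigma_{k-1}$ together with $\eta_k=\sigma_{k-1}/\sigma_k$; this is where the constraint $c_\alpha+c_\beta\le 1$ is genuinely used, and it should be invoked explicitly.
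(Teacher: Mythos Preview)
Your proposal is correct and follows essentially the same route as the paper's own proof: invoke Lemma~\ref{lem:step-size} to reduce the backtracking test to Assumption~\ref{assump:step-size-cond2}, substitute $\sigma_k=\gamma_k\tau_k$, $\alpha_{k+1}=c_\alpha/\sigma_k$, $\beta_{k+1}=c_\beta/\sigma_k$ and the telescoping identity $\eta_k(\alpha_k+\beta_k)=(c_\alpha+c_\beta)/\sigma_k$ to obtain the quadratic constraint giving $\Psi_1$ and the square-root constraint giving $\Psi_2$, and then conclude finite termination and the lower bound $\tau_k\ge\rho\hat\tau_k$ from the geometric reduction together with $\gamma_k\equiv\gamma_0$. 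Your additional bookkeeping remarks (the $L^{\Phi}_{yy}=0$ degenerate case and the first-inner-iteration edge case involving $\bar\tau$) are not made explicit in the paper's argument, but they are minor refinements rather than a different approach.
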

\begin{proof}
Fix any $k \geq 0$. From Lemma~\ref{lem:step-size}, we know that if Assumption~\ref{assump:step-size-cond2} holds, then Assumption~\ref{assump:step-size-cond}-(i) is automatically satisfied. We now demonstrate that there exists a positive threshold $\hat{\tau}_k$ such that Assumption~\ref{assump:step-size-cond2} is valid for all $\tau_k \in (0, \hat{\tau}_k]$.

Using the relationships $\sigma_k = \gamma_k \tau_k$ and $\eta_k = \sigma_{k-1}/\sigma_k$, the two inequalities in Assumption~\ref{assump:step-size-cond2} become:
\begin{align}\label{eq:assump-equivalent}
0 \geq -1 + L^{\Phi}_{xx} \tau_k + \frac{(L^{\Phi}_{yx})^2}{c_{\alpha}} \gamma_k \tau_k^2, \quad 
1 - (c_{\alpha} + c_{\beta}) \geq \frac{2 (L^{\Phi}_{yy})^2}{c_{\beta}} \gamma_k^2 \tau_k^2.
\end{align}

If $L^{\Phi}_{yy} > 0$, both inequalities must be satisfied. The first inequality leads to a quadratic condition in $\tau_k$ with a known positive root, and the second provides a bound based on a square root expression. Combining both gives:
\begin{align}\label{eq:tau-hat}
\hat{\tau}_k \triangleq \min\left\{ \frac{-L^{\Phi}_{xx} + \sqrt{(L^{\Phi}_{xx})^2 + \frac{4 (L^{\Phi}_{yx})^2 \gamma_k}{c_{\alpha}}}}{2 (L^{\Phi}_{yx})^2 \gamma_k / c_{\alpha}}, \; \frac{\sqrt{c_{\beta}(1 - (c_{\alpha} + c_{\beta}))}}{\sqrt{2} \gamma_k L^{\Phi}_{yy}} \right\}.
\end{align}

When $L^{\Phi}_{yy} = 0$, the second inequality in \eqref{eq:assump-equivalent} trivially holds, so only the first term in \eqref{eq:tau-hat} is relevant.

During the backtracking procedure, $\tau_k$ is geometrically reduced by a factor $\rho \in (0,1)$ at each inner step. Therefore, once the stopping criterion is met, it must hold that $\tau_k \geq \rho \hat{\tau}_k$.

Finally, from Line 16 of Algorithm \ref{alg:mis-sp}, we know $\gamma_k = \gamma_0$ for all $k \geq 0$. Hence, the expression for $\hat{\tau}_k$ remains constant over iterations, i.e., $\hat{\tau}_k = \hat{\tau}_0$. This implies that for $L^{\Phi}_{yy} = 0$, we have $\hat{\tau}_0 \geq \Psi_1$, and for $L^{\Phi}_{yy} > 0$, we have $\hat{\tau}_0 \geq \min\{\Psi_1, \Psi_2\}$, as desired.
\end{proof}

Next, we proceed with the iteration complexity analysis of the Learning-aware approach, as detailed in Theorem \ref{thm:missp-approach}.
\begin{theorem}\label{thm:missp-approach}
    Suppose Assumptions \ref{assump:phi-lip} and \ref{assump:lip-phi-theta} hold. {Let $z \triangleq [x^\top, y^\top]^\top$.} If $\{x_k,y_k,\theta_k\}_{k\geq 0}$ is generated by Algorithm \ref{alg:mis-sp}, using a parameter sequence $\{\tau_k,\sigma_k,\eta_k\}_{k\geq 0}$ that satisfies Assumptions \ref{assump:step-size-cond} or \ref{assump:step-size-cond2}. Then for any bounded set $\bar{Z} \subseteq \textbf{dom}\, f \times \textbf{dom}\, h$ containing a saddle point solution of \eqref{eq:sp-prob} and $K\geq 1$,
    \begin{align}\label{eq:missp-sup}
    \mathcal{G}_{\bar{Z}}(\bar{z}_K) &\leq \frac{1}{T_K} \sup_{z\in \bar{Z}} \Big(\frac{1}{\tau_0}\mathbf D_{\mathcal{X}}(x,x_0) + \frac{1}{\sigma_0}\mathbf D_{\mathcal{Y}}(y,y_0) \Big) \nonumber\\
    & \quad + \frac{2L_{\theta}}{T_K} \sum_{k=0}^{K-1}t_k\|\theta_{k+1}-\theta^*\|_{\Theta} +\frac{L^2_{y\theta}}{T_K} \sum_{k=0}^{K-1}\frac{t_k}{\beta_{k+1}}\|\theta_{k+1}-\theta_k\|^2_{\Theta},
\end{align}
\ma{where $\bar{z}_K \triangleq [\bar{x}_K^\top,\bar{y}_K^\top]^\top$ denotes the weighted average of the iterates 
$\{z_k\}_{k=0}^{K-1}$, i.e., $\bar{z}_K \triangleq \frac{1}{T_K} \sum_{k=0}^{K-1} t_k z_{k+1}$, and} $T_K = \sum_{k=0}^{K-1}t_k$.
\end{theorem}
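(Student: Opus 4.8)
\textbf{Proof proposal for Theorem~\ref{thm:missp-approach}.}

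The plan is to follow the standard accelerated primal-dual (APD) analysis of \cite{hamedani2021primal}, but carefully carrying the perturbation introduced by replacing the unknown $\theta^*$ with the evolving estimate $\theta_{k+1}$. First I would set up the one-step inequalities coming from the optimality conditions of the proximal updates in lines~6 and~8 of Algorithm~\ref{alg:mis-sp}. For the dual update for $y_{k+1}$, the three-point property of the Bregman distance gives, for all $y \in \mathcal{Y}$,
\begin{align*}
h(y_{k+1}) - \langle s_k, y_{k+1}\rangle + \tfrac{1}{\sigma_k}\mathbf{D}_{\mathcal{Y}}(y_{k+1},y_k) \le h(y) - \langle s_k, y\rangle + \tfrac{1}{\sigma_k}\mathbf{D}_{\mathcal{Y}}(y,y_k) - \tfrac{1}{\sigma_k}\mathbf{D}_{\mathcal{Y}}(y,y_{k+1}),
\end{align*}
and similarly for the primal update for $x_{k+1}$ using $\nabla_x\Phi(x_k,y_{k+1};\theta_{k+1})$. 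I would then add these, use convexity/concavity of $\Phi(\cdot,\cdot;\theta_{k+1})$ in $x$ and $y$ to relate the linearized terms to $\mathcal{L}(x_{k+1},y;\theta_{k+1}) - \mathcal{L}(x,y_{k+1};\theta_{k+1})$, and handle the momentum term $s_k = (1+\eta_k)\nabla_y\Phi(x_k,y_k;\theta_k) - \eta_k\nabla_y\Phi(x_{k-1},y_{k-1};\theta_{k-1})$ by the usual telescoping trick: split $s_k$ as $\nabla_y\Phi(x_k,y_k;\theta_k) + \eta_k(\nabla_y\Phi(x_k,y_k;\theta_k) - \nabla_y\Phi(x_{k-1},y_{k-1};\theta_{k-1}))$ and absorb the ``stale gradient'' difference into a term that telescopes when weighted by $t_k$, using the relation $t_k/t_{k+1} = \eta_{k+1}$ from Assumption~\ref{assump:step-size-cond}-(ii). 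The cross terms that are not immediately telescoping get bounded by Young's inequality, producing exactly the error-function $E_k$ of Definition~\ref{def:E_k} evaluated at $(x_{k+1},y_{k+1};\theta_{k+1})$, which is $\le 0$ by the backtracking condition (line~9, equivalently Assumption~\ref{assump:step-size-cond}-(i), guaranteed by Lemma~\ref{lem:step-size} when Assumption~\ref{assump:step-size-cond2} holds).

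Next I would account for the discrepancy between the surrogate Lagrangian $\mathcal{L}(\cdot,\cdot;\theta_{k+1})$ and the true one $\mathcal{L}(\cdot,\cdot;\theta^*)$. Since $\mathcal{L}(x,y;\theta) = f(x) + \Phi(x,y;\theta) - h(y)$ and only $\Phi$ depends on $\theta$, Assumption~\ref{assump:lip-phi-theta} gives $|\mathcal{L}(x_{k+1},y;\theta_{k+1}) - \mathcal{L}(x_{k+1},y;\theta^*)| \le L_\theta\|\theta_{k+1}-\theta^*\|_\Theta$ and likewise for the other term, so replacing $\theta_{k+1}$ by $\theta^*$ in the gap costs at most $2L_\theta\|\theta_{k+1}-\theta^*\|_\Theta$ per iteration — this produces the first error sum $\frac{2L_\theta}{T_K}\sum_k t_k\|\theta_{k+1}-\theta^*\|_\Theta$ in the bound. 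Separately, inside the momentum-telescoping argument the term $\nabla_y\Phi(x_{k-1},y_{k-1};\theta_{k-1})$ must be compared against $\nabla_y\Phi(x_k,y_k;\theta_k)$; the $\theta$-component of that difference is controlled via Assumption~\ref{assump:phi-lip}-(ii) by $L^\Phi_{y\theta}\|\theta_k - \theta_{k-1}\|_\Theta$, and after applying Young's inequality against a $\mathbf{D}_{\mathcal{Y}}$ term with weight $\tfrac{1}{\beta_{k+1}}$ (matching the last term of $E_k$), this yields the second error sum $\frac{L^2_{y\theta}}{T_K}\sum_k \tfrac{t_k}{\beta_{k+1}}\|\theta_{k+1}-\theta_k\|^2_\Theta$ after a re-indexing of the telescoped quantity.

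Finally, I would sum the one-step inequality over $k=0,\dots,K-1$ with weights $t_k$. Assumption~\ref{assump:step-size-cond}-(ii) (the monotonicity $t_k/\tau_k \ge t_{k+1}/\tau_{k+1}$ and $t_k/\sigma_k \ge t_{k+1}/\sigma_{k+1}$) ensures that the weighted Bregman-distance terms $\tfrac{t_k}{\tau_k}\mathbf{D}_{\mathcal{X}}(x,x_{k+1})$ and $\tfrac{t_k}{\sigma_k}\mathbf{D}_{\mathcal{Y}}(y,y_{k+1})$ telescope down to a nonnegative remainder that can be dropped, leaving only the initial terms $\tfrac{1}{\tau_0}\mathbf{D}_{\mathcal{X}}(x,x_0) + \tfrac{1}{\sigma_0}\mathbf{D}_{\mathcal{Y}}(y,y_0)$ (here $t_0=1$). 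The momentum boundary terms at $k=0$ vanish because $(x_{-1},y_{-1};\theta_{-1}) = (x_0,y_0;\theta_0)$, and at $k=K-1$ the leftover gradient-difference term is absorbed using the step-size conditions. Dividing by $T_K = \sum_k t_k$, using convexity-concavity of $\mathcal{L}(\cdot,\cdot;\theta^*)$ to pass from $\frac{1}{T_K}\sum_k t_k[\mathcal{L}(x_{k+1},y;\theta^*) - \mathcal{L}(x,y_{k+1};\theta^*)]$ to $\mathcal{L}(\bar{x}_K,y;\theta^*) - \mathcal{L}(x,\bar{y}_K;\theta^*)$ via Jensen's inequality, and taking the supremum over $z = [x^\top,y^\top]^\top \in \bar{Z}$ gives the claimed bound \eqref{eq:missp-sup}. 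The main obstacle I anticipate is the bookkeeping in the momentum step: correctly isolating which part of the $\nabla_y\Phi(x_{k-1},y_{k-1};\theta_{k-1})$ vs.\ $\nabla_y\Phi(x_k,y_k;\theta_k)$ discrepancy telescopes cleanly under the $t_k/t_{k+1}=\eta_{k+1}$ relation, which part is cancelled by $E_k \le 0$, and which part must be paid for as the $\|\theta_{k+1}-\theta_k\|_\Theta^2$ error — getting the constants and the index shift right here (so that the error sum matches the weight $\tfrac{t_k}{\beta_{k+1}}$ exactly, not $\tfrac{t_{k+1}}{\beta_{k+2}}$ or similar) is the delicate part; everything else is a fairly mechanical adaptation of the base APD proof.
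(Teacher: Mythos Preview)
Your proposal is correct and follows essentially the same route as the paper's proof. The only detail worth flagging is the mechanism you yourself identify as delicate: the paper handles it by writing $q_k = p^x_k + p^\theta_k + p^y_k$ (splitting the gradient difference along $x$, $\theta$, and $y$ separately), applying Young's inequality to $\langle q_k, y-y_k\rangle$ with the grouping $\tfrac{1}{2\alpha_k}\|p^x_k\|^2 + \tfrac{1}{2\beta_k}\|p^y_k + p^\theta_k\|^2$, and then using $\tfrac{1}{2}\|p^y_{k+1}+p^\theta_{k+1}\|^2 \le \|p^y_{k+1}\|^2 + \|p^\theta_{k+1}\|^2$ so that the $\|p^y_{k+1}\|^2/\beta_{k+1}$ piece is absorbed into $E_k$ (matching Definition~\ref{def:E_k}) while the $\|p^\theta_{k+1}\|^2/\beta_{k+1} \le L^2_{y\theta}\|\theta_{k+1}-\theta_k\|^2_\Theta/\beta_{k+1}$ piece survives as the second error sum --- this is exactly the bookkeeping and index alignment you anticipate.
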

\begin{proof}
Considering that in Algorithm \ref{alg:mis-sp}, the variable $x$ is updated after the variable $y$ and the parameter $\theta$ are updated.\\

Based on Algorithm \ref{alg:mis-sp}, we can define the following:
\begin{align*}
    &q_k \triangleq \nabla_y \Phi(x_k,y_k;\theta_k) - \nabla_y \Phi(x_{k-1},y_{k-1};\theta_{k-1}),\\
    &s_k \triangleq \nabla_y \Phi(x_k,y_k;\theta_k) + \eta_k q_k,
\end{align*}
where $q_k$ is the momentum parameter. For $k \geq  0$, using Lemma \ref{lem:f-bound} for the $y$- and $x$-subproblems in Algorithm \ref{alg:mis-sp}, we get the following two inequalities that hold for any $y \in \mathcal{Y}$ and $x \in \mathcal{X}$:
\begin{align}
    h(y_{k+1}) & - \langle s_k, y_{k+1}-y \rangle \nonumber\\
    &\leq h(y) + \frac{1}{\sigma_k}\Big[\mathbf D_{\mathcal{Y}}(y,y_k) - \mathbf D_{\mathcal{Y}}(y,y_{k+1}) - \mathbf D_{\mathcal{Y}}(y_{k+1},y_k)\Big] \label{eq:h-bound},\\
    f(x_{k+1}) & + \langle \nabla_x \Phi(x_k,y_{k+1};\theta_{k+1}),x_{k+1}-x \rangle \nonumber\\
    & \leq f(x) + \frac{1}{\tau_k}\Big[\mathbf D_{\mathcal{X}}(x,x_k)-\mathbf D_{\mathcal{X}}(x,x_{k+1})-\mathbf D_{\mathcal{X}}(x_{k+1},x_k)\Big]\label{eq:f-bound}.
\end{align}
For all $k \geq 0$, let $A_{k+1} \triangleq \frac{1}{\sigma_k}\Big[\mathbf D_{\mathcal{Y}}(y,y_k) - \mathbf D_{\mathcal{Y}}(y,y_{k+1}) - \mathbf D_{\mathcal{Y}}(y_{k+1},y_k)\Big]$ and $B_{k+1} \triangleq \frac{1}{\tau_k}\Big[\mathbf D_{\mathcal{X}}(x,x_k)-\mathbf D_{\mathcal{X}}(x,x_{k+1})-\mathbf D_{\mathcal{X}}(x_{k+1},x_k)\Big]$. By adding and subtracting $\nabla_x \Phi(x_k,y_{k+1};\theta_{k+1})x_k$, the inner product in \eqref{eq:f-bound} can be lower bounded using convexity of $\Phi(x_k,y_{k+1};\cdot)$ as follows:
\begin{align}\label{eq:inner-eq}
    &\langle \nabla_x \Phi(x_k,y_{k+1};\theta_{k+1}),x_{k+1}-x \rangle \nonumber\\
    & \quad= \langle \nabla_x \Phi(x_k,y_{k+1};\theta_{k+1}),x_{k}-x \rangle + \langle \nabla_x \Phi(x_k,y_{k+1};\theta_{k+1}),x_{k+1}-x_k \rangle,\nonumber\\
    &\quad \geq \Phi(x_k,y_{k+1};\theta_{k+1}) - \Phi(x,y_{k+1};\theta_{k+1})  + \langle \nabla_x \Phi(x_k,y_{k+1};\theta_{k+1}),x_{k+1}-x_k \rangle.
\end{align}
Using this inequality after adding $\Phi(x_{k+1},y_{k+1};\theta_{k+1})$ to both sides of \eqref{eq:f-bound}, we can get
\begin{align}\label{eq:f-bound2}
    f(x_{k+1}) + \Phi(x_{k+1},y_{k+1};\theta_{k+1}) \leq f(x) + \Phi(x,y_{k+1};\theta_{k+1}) + B_{k+1} + \Lambda_k,
\end{align}
where $\Lambda_k \triangleq \Phi(x_{k+1},y_{k+1};\theta_{k+1}) - \Phi(x_{k},y_{k+1};\theta_{k+1}) - \langle \nabla_x \Phi(x_k,y_{k+1};\theta_{k+1}),x_{k+1}-x_k \rangle$, for $k \geq 0$. For $k\geq 0$, summing \eqref{eq:h-bound} and \eqref{eq:f-bound2} and rearranging the terms lead to
\begin{align}\label{eq:L-bound1}
    &\mathcal{L}(x_{k+1},y;\theta_{k+1})-\mathcal{L}(x,y_{k+1};\theta_{k+1}) \nonumber\\
    & \quad= f(x_{k+1}) + \Phi(x_{k+1},y;\theta_{k+1})-h(y)-f(x)-\Phi(x,y_{k+1};\theta_{k+1})+h(y_{k+1})\nonumber\\
    &\quad \leq \Phi(x_{k+1},y;\theta_{k+1}) - \Phi(x_{k+1},y_{k+1};\theta_{k+1})+ \langle s_k, y_{k+1}-y \rangle + \Lambda_k  + A_{k+1} + B_{k+1}\nonumber\\
    &\quad \leq - \langle q_{k+1},y_{k+1}-y \rangle + \eta_k \langle q_k,y_{k+1}-y \rangle + \Lambda_k + A_{k+1} + B_{k+1},
\end{align}
where in the last inequality we use the concavity of $\Phi(x_{k+1},\cdot;\theta_{k+1})$.\\
To obtain a telescoping sum later, by adding and subtracting $\eta_k q_ky_k$, we can rewrite the bound in \eqref{eq:L-bound1} as
\begin{align}\label{eq:L-bound2}
    &\mathcal{L}(x_{k+1},y;\theta_{k+1})-\mathcal{L}(x,y_{k+1};\theta_{k+1}) \nonumber\\
    & \quad\leq \Big[\frac{1}{\tau_k}\mathbf D_{\mathcal{X}}(x,x_k) + \frac{1}{\sigma_k}\mathbf D_{\mathcal{Y}}(y,y_k) + \eta_k \langle q_k,y_k-y \rangle \Big]\nonumber\\
    & \qquad - \Big[\frac{1}{\tau_k}\mathbf D_{\mathcal{X}}(x,x_{k+1}) + \frac{1}{\sigma_k}\mathbf D_{\mathcal{Y}}(y,y_{k+1}) + \langle q_{k+1},y_{k+1}-y \rangle \Big] \nonumber\\
    & \qquad + \Lambda_k - \frac{1}{\tau_k}\mathbf D_{\mathcal{X}}(x_{k+1},x_k) - \frac{1}{\sigma_k}\mathbf D_{\mathcal{Y}}(y_{k+1},y_k) + \eta_k \langle q_k, y_{k+1} - y_k \rangle.
\end{align}
One can bound $\langle q_k,y-y_k \rangle$ for any given $y \in \mathcal{Y}$ as follows. Let $p^x_k \triangleq \nabla_y \Phi(x_k,y_k;\theta_k) - \nabla_y \Phi(x_{k-1},y_k;\theta_k)$, $p^{\theta}_k \triangleq \nabla_y \Phi(x_{k-1},y_k;\theta_k)-\nabla_y\Phi(x_{k-1},y_k;\theta_{k-1})$, and $p^y_k \triangleq \nabla_y \Phi(x_{k-1},y_k;\theta_{k-1}) - \nabla_y \Phi(x_{k-1},y_{k-1};\theta_{k-1})$ which imply that $q_k = p^x_k + p^{\theta}_k+p^y_k$. Moreover, for any $y\in \mathcal{Y}$, $y'\in \mathcal{Y}^*$, and $a>0$, we have $|\langle y',y \rangle| \leq \frac{a}{2} \|y\|^2_{\mathcal{Y}} + \frac{1}{2a} \|y'\|^2_{\mathcal{Y}^*}$. Hence, using this inequality for each $\langle p^x_k,y-y_k \rangle$, and $\langle p^y_k+p^{\theta}_k,y-y_k \rangle$ and the fact that $\mathbf D_{\mathcal{Y}}(y,\overline{y})\geq \frac{1}{2}\|y-\overline{y}\|^2_{\mathcal{Y}}$, we obtain for all $k\geq0$:
\begin{align}\label{eq:pxyk}
    |\langle q_k,y-y_k \rangle| &\leq \alpha_k \mathbf D_{\mathcal{Y}}(y,y_k) + \frac{1}{2\alpha_k}\|p^x_k\|^2_{\mathcal{Y}^*} + \beta_k \mathbf D_{\mathcal{Y}}(y,y_k) + \frac{1}{2\beta_k}\|p^y_k+p^{\theta}_k\|^2_{\mathcal{Y}^*},
\end{align}
which holds for any $\alpha_k, \beta_k >0$. Therefore, using \eqref{eq:pxyk} within \eqref{eq:L-bound2}, we can show for $k\geq0$
\begin{subequations}
\begin{align}
    &\mathcal{L}(x_{k+1},y;\theta_{k+1})-\mathcal{L}(x,y_{k+1};\theta_{k+1})  \leq Q_k(z) - R_{k+1}(z) + E_k + \frac{1}{\beta_{k+1}}\|p^{\theta}_{k+1}\|^2_{\mathcal{Y}^*}\label{eq:L-bound-fnl},\\
    &Q_k(z) \triangleq \frac{1}{\tau_k}\mathbf D_{\mathcal{X}}(x,x_k) + \frac{1}{\sigma_k}\mathbf D_{\mathcal{Y}}(y,y_k) + \eta_k \langle q_k,y_k-y \rangle + \frac{\eta_k}{2\alpha_k}\|p^x_k\|^2_{\mathcal{Y}^*} \nonumber\\
    &\qquad \qquad+ \frac{\eta_k}{2\beta_k}\|p^y_k+p^{\theta}_k\|^2_{\mathcal{Y}^*},\\
    &R_{k+1}(z) \triangleq \frac{1}{\tau_k}\mathbf D_{\mathcal{X}}(x,x_{k+1}) + \frac{1}{\sigma_k}\mathbf D_{\mathcal{Y}}(y,y_{k+1}) + \langle q_{k+1},y_{k+1}-y \rangle + \frac{1}{2\alpha_{k+1}}\|p^x_{k+1}\|^2_{\mathcal{Y}^*}\nonumber\\
    & \qquad \qquad + \frac{1}{2\beta_{k+1}}\|p^y_{k+1}+p^{\theta}_{k+1}\|^2_{\mathcal{Y}^*},\\
    & E_k \triangleq \Lambda_k + \frac{1}{2\alpha_{k+1}}\|p^x_{k+1}\|^2_{\mathcal{Y}^*} - \frac{1}{\tau_k}\mathbf D_{\mathcal{X}}(x_{k+1},x_k) + \frac{1}{\beta_{k+1}}\|p^y_{k+1}\|^2_{\mathcal{Y}^*} \nonumber\\
    & \qquad \qquad - \Big(\frac{1}{\sigma_k} - \eta_k(\alpha_k + \beta_k) \Big)\mathbf D_{\mathcal{Y}}(y_{k+1},y_k),
\end{align}
\end{subequations}

It should be mentioned that $E_k=E_k(x_{k+1},y_{k+1};\theta_{k+1})$ where $E_k(\cdot,\cdot;\cdot)$ is defined as in Definition \ref{def:E_k} for $k\geq 0$.\\
Considering Assumption \ref{assump:lip-phi-theta}, we can show
\begin{align*}
    &|\mathcal{L}(x_{k+1},y;\theta_{k+1}) - \mathcal{L}(x_{k+1},y;\theta^*)| \leq L_{\theta} \|\theta_{k+1} - \theta^*\|_{\Theta}\\
    &|\mathcal{L}(x,y_{k+1};\theta_{k+1}) - \mathcal{L}(x,y_{k+1};\theta^*)| \leq L_{\theta} \|\theta_{k+1} - \theta^*\|_{\Theta}
\end{align*}
For $\theta^*, \theta_{k+1} \in \Theta$ we can show
\begin{align*}
    &{\mathcal{L}(x_{k+1},y;\theta^*) - \mathcal{L}(x,y_{k+1};\theta^*)} \nonumber\\
    & \quad= \mathcal{L}(x_{k+1},y;\theta^*) - \mathcal{L}(x_{k+1},y;\theta_{k+1})  + \mathcal{L}(x_{k+1},y;\theta_{k+1}) - \mathcal{L}(x,y_{k+1};\theta_{k+1}) \nonumber\\
    & \qquad + \mathcal{L}(x,y_{k+1};\theta_{k+1}) - \mathcal{L}(x,y_{k+1};\theta^*)
\end{align*}
Then, using the triangle inequality, we can obtain
\begin{align}\label{eq:L-bound-missp}
    &{\mathcal{L}(x_{k+1},y;\theta^*) - \mathcal{L}(x,y_{k+1};\theta^*)} \nonumber\\
    &\quad \leq | \mathcal{L}(x_{k+1},y;\theta^*) - \mathcal{L}(x_{k+1},y;\theta_{k+1})| + \mathcal{L}(x_{k+1},y;\theta_{k+1}) - \mathcal{L}(x,y_{k+1};\theta_{k+1}) \nonumber\\
    & \qquad + | \mathcal{L}(x,y_{k+1};\theta_{k+1}) - \mathcal{L}(x,y_{k+1};\theta^*)|\nonumber\\
    &\quad\leq 2L_{\theta}\|\theta_{k+1}-\theta^*\|_{\Theta} + Q_k(z) - R_{k+1}(z)+ E_k + \frac{1}{\beta_{k+1}}\|p^{\theta}_{k+1}\|^2_{\mathcal{Y}^*}\nonumber\\
    &\quad\leq Q_k(z) - R_{k+1}(z)+ E_k + 2L_{\theta}\|\theta_{k+1}-\theta^*\|_{\Theta} + \frac{L^2_{y\theta}}{\beta_{k+1}}\|\theta_{k+1}-\theta_k\|^2_{\Theta},
\end{align}
where the inequality in \eqref{eq:L-bound-missp} is obtained by using Assumption \ref{assump:lip-phi-theta} in \eqref{eq:L-bound-fnl} and Assumption \ref{assump:phi-lip}-(ii).\\

All the derivations up to now, including \eqref{eq:L-bound-missp}, hold for any Bregman distance functions $\mathbf D_{\mathcal{X}}$ and $\mathbf D_{\mathcal{Y}}$. Now, multiplying both sides by $t_k>0$, summing over $k=0$ to $K-1$, and then using Jensen's inequality, we obtain
\begin{align}\label{eq:sum-bound-missp}
    &T_K (\mathcal{L}(\overline{x}_K,y;\theta^*) - \mathcal{L}(x,\overline{y}_K;\theta^*)) \nonumber\\
    & \quad \leq \sum_{k=0}^{K-1} t_k \Big(Q_k(z) - R_{k+1}(z) + E_k + 2L_{\theta}\|\theta_{k+1}-\theta^*\|_{\Theta}+\frac{L^2_{y\theta}}{\beta_{k+1}}\|\theta_{k+1}-\theta_k\|^2_{\Theta} \Big),\nonumber\\
    &\quad\leq t_0Q_0(z) - t_{K-1}R_K(z) + \sum_{k=0}^{K-1}t_kE_k + 2L_{\theta}\sum_{k=0}^{K-1}t_k\|\theta_{k+1}-\theta^*\|_{\Theta} \nonumber\\
    & \qquad+ L^2_{y\theta}\sum_{k=0}^{K-1}\frac{t_k}{\beta_{k+1}}\|\theta_{k+1}-\theta_k\|^2_{\Theta},
\end{align}
where $T_K = \sum_{k=0}^{K-1}t_k$ and the last inequality follows from the step-size conditions in Assumption \ref{assump:step-size-cond}-(ii), which imply that $t_{k+1}Q_{k+1}(z) - t_kR_{k+1}(z)\leq0$ for $k=0$ to $K-2$.\\

According to Assumption \ref{assump:step-size-cond}, $\tau_k,\sigma_k$ and $\eta_k$ are chosen such that $E_k \leq0$ for $k=0,...,K-1$; then \eqref{eq:sum-bound-missp} implies that
\begin{align}\label{eq:sum-bound-missp2}
    &T_K (\mathcal{L}(\overline{x}_K,y;\theta^*) - \mathcal{L}(x,\overline{y}_K;\theta^*)) \nonumber\\
    & \quad \leq t_0Q_0(z) - t_{K-1}R_K(z) + 2L_{\theta}\sum_{k=0}^{K-1}t_k\|\theta_{k+1}-\theta^*\|_{\Theta}+ L^2_{y\theta}\sum_{k=0}^{K-1}\frac{t_k}{\beta_{k+1}}\|\theta_{k+1}-\theta_k\|^2_{\Theta},\nonumber\\
    &\quad\leq \frac{t_0}{\tau_0} \mathbf D_{\mathcal{X}}(x,x_0) + \frac{t_0}{\sigma_0} \mathbf D_{\mathcal{Y}}(y,y_0) + t_K \eta_K \langle q_K,y-y_K\rangle - t_K \big[\frac{1}{\tau_K} \mathbf D_{\mathcal{X}}(x,x_K)  \nonumber\\
    & \qquad  + \frac{1}{\sigma_K} \mathbf D_{\mathcal{Y}}(y,y_K)+ \frac{\eta_K}{2\alpha_K}\|p^x_K \|^2_{\mathcal{Y}^*}+ \frac{\eta_K}{2\beta_K}\|p^y_K +p^{\theta}_K\|^2_{\mathcal{Y}^*}\big] +2L_{\theta}\sum_{k=0}^{K-1}t_k\|\theta_{k+1}-\theta^*\|_{\Theta} \nonumber\\
    & \qquad + L^2_{y\theta}\sum_{k=0}^{K-1}\frac{t_k}{\beta_{k+1}}\|\theta_{k+1}-\theta_k\|^2_{\Theta},
\end{align}
where in the last inequality we use $t_KQ_K(z) \leq t_{K-1}R_K(z)$ and $q_0 = p^x_0 = p^{\theta}_0 = p^y_0 = \bold{0}$. Using \eqref{eq:pxyk} to bound $\langle q_K,y-y_K\rangle$ and dividing both sides of \eqref{eq:sum-bound-missp2} by $T_K$, we can get
\begin{align}\label{eq:missp-bound}
    &\mathcal{L}(\overline{x}_K,y;\theta^*) - \mathcal{L}(x,\overline{y}_K;\theta^*)\nonumber\\
    & \quad\leq \frac{1}{T_K} \Big(\frac{1}{\tau_0}\mathbf D_{\mathcal{X}}(x,x_0) + \frac{1}{\sigma_0}\mathbf D_{\mathcal{Y}}(y,y_0) \Big) + \frac{2L_{\theta}}{T_K} \sum_{k=0}^{K-1}t_k\|\theta_{k+1}-\theta^*\|_{\Theta} - \frac{t_K}{T_K}\Big[\frac{1}{\tau_K}\mathbf D_{\mathcal{X}}(x,x_K)  \nonumber\\
    &\qquad + \Big(\frac{1}{\sigma_K}-\eta_K(\alpha_K+ \beta_K) \Big)\mathbf D_{\mathcal{Y}}(y,y_K) \Big] +\frac{L^2_{y\theta}}{T_K} \sum_{k=0}^{K-1}\frac{t_k}{\beta_{k+1}}\|\theta_{k+1}-\theta_k\|^2_{\Theta}.
\end{align}
By applying $sup$ to both sides of the inequality \eqref{eq:missp-bound} and using Definition \ref{def:gap-function}, we obtain the desired result in \eqref{eq:missp-sup}.
\end{proof}

\begin{corollary}\label{cor:missp-rate}
Under the premises of Theorem~\ref{thm:missp-approach}, for any $K \geq 1$, it holds that:
\begin{align*}
\mathcal{G}_{\bar{Z}}(\bar{z}_K) &\leq \frac{1}{T_K} \ma{\left( \frac{1}{\tau_0} \mathbf{R}_{x} + \frac{1}{\sigma_0} \mathbf{R}_{y} \right)}
+ \frac{2L_{\theta} \sqrt{C_{\theta}}}{T_K} (\log K + 1)
+ \frac{4C_{\theta} L_{y\theta}^2 \pi^2}{6T_K \beta},
\end{align*}
where $T_K = \sum_{k=0}^{K-1}t_k$, and \ma{$\mathbf R_{x} \;\triangleq\;
\sup_{x,\bar x \in \bar{ X}} \mathbf D_{\mathcal X}(x,\bar x)<\infty$ and
$\mathbf R_{y}\;\triangleq\;
\sup_{y,\bar y \in \bar{Y}} \mathbf D_{\mathcal Y}(y,\bar y)<\infty$ are the finite Bregman distances}. 
\end{corollary}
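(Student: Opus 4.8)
The plan is to specialize the bound of Theorem~\ref{thm:missp-approach} to the step-sizes produced by Algorithm~\ref{alg:mis-sp} and then estimate its three right-hand-side summands. For the first summand, since $x_0$ and the $x$-component of $\bar Z$ all lie in $\bar X$ (and likewise for $\bar Y$), we have $\sup_{z\in\bar Z}\mathbf{D}_{\mathcal X}(x,x_0)\le\mathbf{R}_{x}$ and $\sup_{z\in\bar Z}\mathbf{D}_{\mathcal Y}(y,y_0)\le\mathbf{R}_{y}$, which gives the term $\tfrac{1}{T_K}\big(\tfrac{1}{\tau_0}\mathbf{R}_{x}+\tfrac{1}{\sigma_0}\mathbf{R}_{y}\big)$. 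Before handling the remaining summands I would record two consequences of the step-size updates: by Lemma~\ref{lem:t-cond} one may take $t_k=\sigma_k/\sigma_0$, and since line~16 of Algorithm~\ref{alg:mis-sp} sets $\gamma_{k+1}=\gamma_k$ (so $\tau_{k+1}=\tau_k$ before backtracking) while the backtracking loop only multiplies $\tau_k$ by $\rho\in(0,1)$, the sequences $\{\tau_k\}$ and $\{\sigma_k=\gamma_0\tau_k\}$ are non-increasing; hence $t_k\le t_0=1$ and $\beta_{k+1}=c_\beta/\sigma_k\ge c_\beta/\sigma_0=:\beta$ for all $k\ge0$.

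For the second summand I would invoke the learning rate of Remark~\ref{rem:theta-rate} (which requires Assumption~\ref{assump:learn-structure}), namely $\|\theta_{k+1}-\theta^*\|_\Theta\le\sqrt{C_\theta}/(k+1)$. Using $t_k\le1$ and reindexing $j=k+1$,
\[
\sum_{k=0}^{K-1}t_k\|\theta_{k+1}-\theta^*\|_\Theta\;\le\;\sqrt{C_\theta}\sum_{j=1}^{K}\frac1j\;\le\;\sqrt{C_\theta}\,(\log K+1),
\]
by the elementary bound $\sum_{j=1}^K 1/j\le1+\log K$; multiplying by $2L_\theta/T_K$ produces the term $\tfrac{2L_\theta\sqrt{C_\theta}}{T_K}(\log K+1)$.

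For the third summand I would control the consecutive increments with the triangle inequality and $(a+b)^2\le2a^2+2b^2$, combined with the same learning rate: $\|\theta_{k+1}-\theta_k\|_\Theta^2\le2\|\theta_{k+1}-\theta^*\|_\Theta^2+2\|\theta_k-\theta^*\|_\Theta^2$, which (using $1/k^2\le4/(k+1)^2$ for $k\ge1$ and enlarging $C_\theta$ so that it also dominates the initialization error $\|\theta_0-\theta^*\|_\Theta^2$) yields $\|\theta_{k+1}-\theta_k\|_\Theta^2\le 4C_\theta/(k+1)^2$. Then, with $t_k/\beta_{k+1}\le1/\beta$ and $\sum_{j\ge1}1/j^2=\pi^2/6$,
\[
\sum_{k=0}^{K-1}\frac{t_k}{\beta_{k+1}}\|\theta_{k+1}-\theta_k\|_\Theta^2\;\le\;\frac{4C_\theta}{\beta}\sum_{j=1}^{K}\frac1{j^2}\;\le\;\frac{4C_\theta\pi^2}{6\beta},
\]
so the third summand is at most $\tfrac{4C_\theta L_{y\theta}^2\pi^2}{6T_K\beta}$. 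Adding the three estimates gives the stated inequality.

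The genuinely load-bearing steps are the two monotonicity observations on $\{\tau_k,\sigma_k\}$ (which justify $t_k\le1$ and the uniform lower bound $\beta$ on $\beta_{k+1}$) and lining up the index range of Remark~\ref{rem:theta-rate}, stated for the learning outputs $\theta_{k+1}$ with $k\ge0$, against the consecutive-difference and $\sum 1/j^2$ sums; the remainder is substitution. I expect the only real (if routine) obstacle to be getting the constants $4$ and $\pi^2/6$ exactly, which forces a short reindexing argument and folding the first-iterate discrepancy $\|\theta_0-\theta^*\|_\Theta^2$ into $C_\theta$.
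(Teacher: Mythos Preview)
Your approach is essentially the paper's: bound the first summand by the Bregman diameters, use Remark~\ref{rem:theta-rate} and $\sum_{j=1}^K 1/j\le 1+\log K$ for the second, and the triangle/Young inequality plus $\sum 1/j^2=\pi^2/6$ for the third. You are in fact more explicit than the paper in justifying $t_k\le 1$ and $\beta_{k+1}\ge\beta:=c_\beta/\sigma_0$ from the monotonicity of $\{\tau_k\},\{\sigma_k\}$ under the backtracking rule.

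There is one arithmetic slip in your third-term estimate. The inequality $1/k^2\le 4/(k+1)^2$ does not give the pointwise bound $\|\theta_{k+1}-\theta_k\|_\Theta^2\le 4C_\theta/(k+1)^2$: from $2C_\theta/(k+1)^2+2C_\theta/k^2$ it only yields $2C_\theta/(k+1)^2+8C_\theta/(k+1)^2=10C_\theta/(k+1)^2$, which would produce the constant $10$ rather than $4$ in the final bound. The paper sidesteps this by not seeking a pointwise $1/(k+1)^2$ estimate; it keeps the two-term bound $\|\theta_{k+1}-\theta_k\|_\Theta^2\le 2C_\theta\bigl(\tfrac{1}{(k+1)^2}+\tfrac{1}{k^2}\bigr)$ and sums each piece separately against $\sum_{j\ge 1}1/j^2=\pi^2/6$, arriving at $\tfrac{4C_\theta}{\beta}\cdot\tfrac{\pi^2}{6}$. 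With that adjustment (and your explicit handling of the $k=0$ term by folding $\|\theta_0-\theta^*\|_\Theta^2$ into $C_\theta$), your argument matches the paper's.
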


\begin{proof}
From Theorem~\ref{thm:missp-approach}, we have the bound:
\begin{align}\label{eq:cor7-1}
\mathcal{G}_{\bar{Z}}(\bar{z}_K) &\leq \frac{1}{T_K} \sup_{z\in \bar{Z}} \left( \frac{1}{\tau_0} \mathbf D_{\mathcal{X}}(x,x_0) + \frac{1}{\sigma_0} \mathbf D_{\mathcal{Y}}(y,y_0) \right) \nonumber\\
&\quad + \underbrace{\frac{2L_{\theta}}{T_K} \sum_{k=0}^{K-1} t_k\|\theta_{k+1} - \theta^*\|_{\Theta}}_{(I)} + \underbrace{\frac{L_{y\theta}^2}{T_K} \sum_{k=0}^{K-1} \frac{t_k}{\beta_{k+1}} \|\theta_{k+1} - \theta_k\|^2_{\Theta}}_{(II)}.
\end{align}
For the term (I), using Remark \ref{rem:theta-rate}, we get:
\begin{align}\label{eq:cor7-2}
\frac{2L_{\theta}}{T_K} \sum_{k=0}^{K-1}t_k \|\theta_{k+1} - \theta^*\|_{\Theta} \leq \frac{2L_{\theta}}{T_K} \sqrt{C_{\theta}} \sum_{k=1}^{K} \frac{1}{k} \leq \frac{2L_{\theta}\sqrt{C_{\theta}}}{T_K}  (\log K + 1).
\end{align}
On the other hand, by adding and subtracting $\theta^*$, using Young's inequality and Remark \ref{rem:theta-rate}:
\begin{align*}
\|\theta_{k+1} - \theta_k\|^2_{\Theta} \leq 2 \left( \|\theta_{k+1} - \theta^*\|^2_{\Theta} + \|\theta_k - \theta^*\|^2_{\Theta} \right) \leq 2C_{\theta} \left( \frac{1}{(k+1)^2} + \frac{1}{k^2} \right),
\end{align*}
Therefore,
\begin{align*}
\sum_{k=0}^{K-1} \frac{t_k}{\beta_{k+1}} \|\theta_{k+1} - \theta_k\|^2_{\Theta} \leq \frac{2C_{\theta}}{\beta} \sum_{k=1}^{K} \left( \frac{1}{k^2} + \frac{1}{(k+1)^2} \right) \leq \frac{4C_{\theta}}{\beta} \cdot \frac{\pi^2}{6},
\end{align*}
Therefore, the term (II) is bounded by
\begin{align}\label{eq:cor7-3}
\frac{L_{y\theta}^2}{T_K} \sum_{k=0}^{K-1} \frac{t_k}{\beta_{k+1}} \|\theta_{k+1} - \theta_k\|^2_{\Theta} \leq \frac{4C_{\theta} L_{y\theta}^2 \pi^2}{6T_K \beta}.
\end{align}
Combining the inequalities in \eqref{eq:cor7-2} and \eqref{eq:cor7-3} within \eqref{eq:cor7-1} we obtain, 
\begin{align*}
\mathcal{G}_{\bar{Z}}(\bar{z}_K) &\leq \frac{1}{T_K} \ma{\left( \frac{1}{\tau_0} \mathbf{R}_{x} + \frac{1}{\sigma_0} \mathbf{R}_{y} \right)}
+ \frac{2L_{\theta} \sqrt{C_{\theta}}}{T_K} (\log K + 1)
+ \frac{4C_{\theta} L_{y\theta}^2 \pi^2}{6T_K \beta},
\end{align*}
which concludes the proof.
\end{proof}

\begin{remark}\label{rem:app-compr}
When comparing the convergence bounds of the Naive and Learning-aware approaches (Corollaries \ref{cor:naive-rate} and \ref{cor:missp-rate}), we see that both share a leading term of order $\frac{1}{K}\left(\frac{1}{\tau_0}\mathbf R_{x}+\frac{1}{\sigma_0}\mathbf R_{y} \right)$ and noting that $T_K=\Omega(K)$, which reflects the baseline convergence guaranteed by the primal–dual updates. The main distinction lies in the subsequent terms: in particular, the $\mathcal{O}(1)$ contribution, corresponding to the terms with a slower rate of $\log(K)/K$, captures how each method accounts for the interaction between optimization and learning dynamics. Assuming Lipschitz constants are of order $\mathcal{O}(1)$, the Naive approach (Corollary~\ref{cor:naive-rate}) depends on the domain diameter $\sqrt{\mathbf R_x}+\sqrt{\mathbf R_y}$, while the Learning-aware approach (Corollary~\ref{cor:missp-rate}) does not, which demonstrates the benefit of the proposed Learning-aware approach.
\end{remark}

\section{Multiple Learning Solutions}\label{sec:extension}

In this section, we extend our framework to a setting where the learning subproblem may admit multiple optimal solutions by focusing on a scenario where the learning problem is a minimization of the form: $\min_{\theta \in \Theta} \ell(\theta)$ for some convex and continuously differentiable function $\ell(\cdot)$. In this case, the structure becomes more complex, as the decision-making process requires determining which element of the learning solution set $\Theta^*$ should be selected in the objective function. To address this, we adopt a pessimistic (worst-case) formulation of the original saddle point problem in \eqref{eq:sp-prob}, leading to the following formulation. 
\begin{align}\label{eq:sp-ext}
    \min_{x \in \mathcal{X}} \max_{y \in \mathcal{Y}} \max_{\theta \in \Theta^*} \mathcal{L}(x, y, \theta),
\end{align}
where $\Theta^* \triangleq \argmin_{\theta \in \Theta} \ell(\theta) = \left\{ \theta \in \Theta \mid \ell(\theta) \leq \ell^* \right\}$ and $\ell^* \triangleq \min_{\theta \in \Theta} \ell(\theta)$. In this pessimistic formulation, $\theta$ is treated as a decision variable that captures the worst-case performance of the objective function $\mathcal L$ with respect to the learning solution set $\Theta^*$.

This setting introduces two key challenges: First, the objective $\mathcal{L}(x,\cdot,\cdot)$ in \eqref{eq:sp-ext} is not necessarily jointly concave in $y$ and $\theta$.
Second, the set $\Theta^*$ is implicitly defined through the optimal value $\ell^*$ and is not directly available. To overcome the first difficulty, we focus on a structured case in which the roles of $\theta$ and $y$ are decoupled. Specifically, we assume that the saddle function $\Phi$ can be written as 
$\Phi(x, y, \theta) = g_1(x, \theta) + g_2(x, y)$,
where $g_1$ and $g_2$ are continuously differentiable and convex–concave. This structure commonly appears in constrained optimization problems where $\theta$ affects only the objective (via $g_1$), while $y$ represents the dual variables for the constraints (via $g_2$) -- see the portfolio optimization example in Section~\ref{subsec:portfolio} for an instance of this setting. 
Under this structure, we can rewrite \eqref{eq:sp-ext} as:
\begin{align}\label{eq:pes-form1}
    \min_{x \in \mathcal{X}} \max_{y \in \mathcal{Y}} \max_{\theta \in \Theta^*} f(x) + g_1(x, \theta) + g_2(x, y) - h(y).
\end{align}
To address the second challenge and handling the constraint $\ell(\theta) \leq \ell^*$, we introduce a Lagrange multiplier $w$ and write the saddle point reformulation for the inner maximization in \eqref{eq:pes-form1} as follows:
\begin{align}\label{eq:lag-ext}
    \max_{\theta \in \Theta} \min_{w \geq 0} \left\{ g_1(x, \theta) - \langle w, \ell(\theta) - \ell^* \rangle \right\}.
\end{align}
Note that since the feasible set $\{\theta \in \Theta \mid \ell(\theta) \leq \ell^*\}$ does not satisfy strict feasibility, Slater's condition fails and strong duality may not hold. To remedy this, we relax the constraint by introducing an $\epsilon$-approximation and define the relaxed solution set: $\left\{ \theta \in \Theta \mid \ell(\theta) \leq \ell^* + \epsilon \right\}$. This relaxation ensures the Slater's condition and subsequently strong duality hold \cite{boyd2004convex}, allowing us to safely exchange the max–min in \eqref{eq:lag-ext} with a min–max. Substituting the dual formulation into \eqref{eq:pes-form1}, we obtain the following min-max reformulation:
\begin{align}\label{eq:f-ext}
    &\min_{x \in \mathcal{X},\, w \geq 0} \max_{y \in \mathcal{Y},\, \theta \in \Theta} 
    \mathcal{L}(x, w, y, \theta; \ell^*) \\
     & \mathcal{L}(x, w, y, \theta; \ell^*)\triangleq f(x) + g_1(x, \theta) + g_2(x, y) - \langle w, \ell(\theta) - \ell^* - \epsilon \rangle - h(y).\nonumber
\end{align}
In this setting, the problem becomes a misspecified saddle-point problem where $\ell^*$ serves as an unknown parameter estimated through a distinct learning process. The variable $\theta$ now selects among multiple near-optimal learning solutions, while $w$ acts as the dual multiplier associated with the learning constraint.

To solve problem \eqref{eq:f-ext}, we employ a modified version of our Learning-aware APD algorithm with backtracking proposed in Section \ref{subsec:misspecified}, incorporating the following adjustments.  
In particular, the minimization variables are $(x,w)$ and the maximization variables are $(y,\theta)$.  
Accordingly, we define two momentum terms, $s^y_k$ and $s^{\theta}_k$, which are used in the maximization variable updates (lines 5--6).  A key difference from the standard setting is that we introduce an auxiliary sequence $\{\tilde{\theta}\}_k$ updated via one-step of accelerated projected gradient descent \cite{nesterov1983method},  
\begin{align*}
\zeta_k\gets \tilde{\theta}_k + \frac{k-2}{k+1}(\tilde{\theta}_k-\tilde{\theta}_{k-1}),\quad 
\tilde{\theta}_{k+1} \gets \mathcal{P}_{\Theta}\Big(\zeta_k
          - \gamma_{\theta}\nabla\ell\big(\zeta_k\big)\Big),
\end{align*}
and then evaluated at the learning objective $\ell$ to estimate the objective value of the learning problem, i.e., $\ell_{k+1} = \ell(\tilde{\theta}_{k+1})$ \ma{(see lines 9-11 in Algorithm \ref{alg:ext-multisol})}. 


Next, to adaptively select the step-sizes using local Lipschitz constants in our method, we employ a backtracking strategy. At each iteration $k \geq 0$, we check whether the chosen step-sizes are consistent with the local Lipschitz behavior by introducing a test function $\bar E_k(\cdot,\cdot,\cdot,\cdot;\cdot)$. Note that, compared to the test function $E_k$ in the Learning-aware approach in Definition \ref{def:E_k}, we further need to estimate the local Lipschitz constants corresponding to variables $w$ and $\theta$. 

\begin{definition}\label{def:E_k-ext}
Given a free parameter sequence $\{ \alpha_k, \beta_k \}_{k \geq 0} \subseteq \mathbb{R}_+$, we define
    \begin{align*}
        &\bar E_k(x,w,y,\theta;\ell)\\
        & \quad\triangleq \langle \nabla_xg_1(x,\theta)-\nabla_x g_1(x_k,\theta),x-x_k\rangle +\langle \nabla_xg_2(x,y)-\nabla_x g_2(x_k,y),x-x_k\rangle \\
        & \qquad + \frac{1}{2\alpha_{k+1}}\| \nabla_y g_2(x,y) - \nabla_y g_2(x_k,y)\|^2_{\mathcal{Y}^*} + \frac{1}{\beta_{k+1}}\| \nabla_y g_2(x_k,y) - \nabla_y g_2(x_k,y_k)\|^2_{\mathcal{Y}^*}\\
        & \qquad + \frac{1}{\alpha_{k+1}}\| \nabla_{\theta} g_1(x,\theta) - \nabla_{\theta} g_1(x_k,\theta)\|^2_{\Theta^*} + \frac{1}{\alpha_{k+1}}\| \nabla_{\theta} g_1(x_k,\theta) - \nabla_{\theta} g_1(x_k,\theta_k)\|^2_{\Theta^*} \\
        & \qquad + \frac{1}{\beta_{k+1}}\|-w  \nabla \ell(\theta) + w_k \nabla \ell(\theta)\|^2_{\Theta^*} + \frac{1}{\beta_{k+1}}\|-w_k  \nabla \ell(\theta) + w_k \nabla \ell(\theta_k)\|^2_{\Theta^*}  \\
        &\qquad - \Big(\frac{1}{\sigma_k}- \eta_k(\alpha_k + \beta_k) \Big)\Big(\mathbf D_{\mathcal{Y}}(y,y_k) + \mathbf D_{\Theta}(\theta,\theta_k) \Big) - \frac{1}{\tau_k} \mathbf D_{\mathcal{X}}(x,x_k)- \frac{1}{\tau_k} \mathbf D_{W}(w,w_k).
    \end{align*} 
 \noindent
\end{definition}
Later, we demonstrate that for given $\alpha_k, \beta_k \geq 0$ and $\eta_k\in[0,1]$, the inequality $\bar E_k(x,w, y, \theta;\ell) \leq 0$ determines the selection of step-sizes $\{ \tau_k, \sigma_k \}$ such that the condition in line 14 of Algorithm~\ref{alg:ext-multisol} is satisfied. 

\begin{algorithm}
\caption{Learning-aware accelerated primal-dual algorithm with backtracking for misspecified saddle point problem with multiple learning solutions}\label{alg:ext-multisol}
\begin{algorithmic}[1]
\STATE \textbf{Input}: $c_{\alpha}, c_{\beta} \geq0$, $\rho \in (0,1)$, $\bar \tau, \gamma_0>0$, $(x_0,w_0,y_0,\theta_0) \in \textbf{dom}\;f \times W \times \textbf{dom}\;h \times \Theta$, $\ell_0 \in \mathbb R^m$, $(x_{-1}, w_{-1},y_{-1},\theta_{-1};\ell_{-1}) \leftarrow (x_0,w_0,y_0,\theta_0;\ell_0)$, $\tau_0 \leftarrow \bar \tau$, $\sigma_{-1} \leftarrow \gamma_0 \tau_0$, \ma{$\tilde{\theta}_0\gets \theta_0$,  $\zeta_{0} \gets {\theta}_0$, $\tilde{\theta}_{-1} \gets \theta_0$, $\gamma_\theta \le 1/L_{\theta}$} 

\FOR{$k \geq 0$}
\STATE \textbf{while} stopping criterion not satisfied \textbf{do}
\STATE
\hspace{0.5cm} $\sigma_k \gets \gamma_k \tau_k$, $\eta_k \gets \frac{\sigma_{k-1}}{\sigma_k}$, $\alpha_{k+1}\gets c_{\alpha}/\sigma_k$, $\beta_{k+1}\gets c_{\beta}/\sigma_k$
\STATE
\hspace{0.5cm} $s^y_k \gets (1+\eta_k) \nabla_y g_2(x_k, y_k) - \eta_k \nabla_y g_2(x_{k-1}, y_{k-1})$
\STATE
\hspace{0.5cm} $s^{\theta}_k \gets (1+\eta_k)\!\left(\nabla_{\theta} g_1(x_k,\theta_k)-w_k\nabla\ell(\theta_k)\right)$ $-\ \eta_k\!\left(\nabla_{\theta} g_1(x_{k-1},\theta_{k-1})-w_{k-1}\nabla\ell(\theta_{k-1})\right)$
\STATE
\hspace{0.5cm}  $y_{k+1} \gets \argmin_{y \in \mathcal{Y}} h(y) - \langle s^y_k, y \rangle + \frac{1}{\sigma_k}\mathbf D_{\mathcal{Y}}(y,y_k)$
\STATE
\hspace{0.5cm} $\theta_{k+1} \gets \argmin_{\theta \in \Theta} - \langle s^{\theta}_k, \theta \rangle + \frac{1}{\sigma_k}\mathbf D_{\Theta}(\theta,\theta_k)$
\STATE \hspace{0.5cm} $\zeta_k\gets \tilde{\theta}_k + \frac{k-2}{k+1}(\tilde{\theta}_k-\tilde{\theta}_{k-1})$
    \STATE \hspace{0.5cm} $\tilde{\theta}_{k+1} \gets \mathcal{P}_{\Theta}\Big(\zeta_k
          - \gamma_{\theta}\nabla\ell\big(\zeta_k\big)\Big)$
\STATE \hspace{0.5cm} $\ell_{k+1} \gets \ell(\tilde{\theta}_{k+1})$
\STATE 
\hspace{0.5cm} $x_{k+1} \gets \argmin\limits_{x\in\mathcal{X}} \Big\{ f(x) + \langle \nabla_x g_1(x_k,\theta_{k+1})+\nabla_x g_2(x_k,y_{k+1}), x \rangle$
$+\ \tfrac{1}{\tau_k}\,\mathbf D_{\mathcal X}(x,x_k) \Big\}$
\STATE 
\hspace{0.5cm} $w_{k+1} \gets \argmin_{w\geq0} \left\{-\langle \ell(\theta_{k+1})-\ell_{k+1}-\epsilon,w \rangle +\frac{1}{\tau_k} \mathbf{D}_{W}(w, w_k)\right\}$
\STATE \hspace{0.5cm} \textbf{if} $\bar E_k(x_{k+1},w_{k+1},y_{k+1},\theta_{k+1};\ell_{k+1})\leq 0$ \textbf{then}
\STATE \hspace{1 cm} \textbf{go to} line 21
\STATE \hspace{0.5cm} \textbf{else}
\STATE \hspace{1 cm} $\tau_k \gets \rho \tau_k$
\STATE \hspace{1cm} \textbf{go to} line 4
\STATE \hspace{0.5cm} \textbf{end if}
\STATE \textbf{end while}
\STATE $\gamma_{k+1} \gets \gamma_k$, $\tau_{k+1} \gets \tau_k\sqrt{\frac{\gamma_k}{\gamma_{k+1}}}$, $k \gets k+1$
\ENDFOR
\end{algorithmic}
\end{algorithm}

\begin{assumption}\label{assump:bounded-domains}
We assume that the feasible sets $\Theta$ and $\textbf{dom}\, f$ are bounded. 
\end{assumption}

Recall that by introducing the $\epsilon$-approximation (with $\epsilon>0$), Slater’s condition holds for the relaxed constraint set $\{\theta \in \Theta \mid \ell(\theta) \leq \ell^* + \epsilon \}$. In particular, for any given $x\in\mathcal X$, there exists a Slater point $\theta^{\mathrm S}$ such that the constraint violation is strictly negative.  
This allows us to bound the optimal dual variable $w^*(x)\in\argmin_{w\geq 0}\max_{\theta\in\Theta}\left\{ g_1(x, \theta) - \langle w, \ell(\theta) - \ell^*-\epsilon \rangle \right\}$. 
Following \cite[Lemma~1.1]{nedic201010}, by selecting $\theta^S\in\Theta^*$, we conclude that for any given $x$, 
\[
\|w^*(x)\|_2 \;\le\; \frac{g_1(x,\theta^{\mathrm S}) - \inf_{\theta\in \Theta}g_1(x,\theta)} {\epsilon}.
\]
Taking supremum over $x\in\textbf{dom}\; f$ and noting that $g_1$ is a continuous function and $\textbf{dom}\; f$ and $\Theta$ are compact sets, we conclude that 
$\sup_{x\in\textbf{dom} f}\|w^*(x)\|_2 \;\le\; \sup_{x\in\textbf{dom}\; f}\frac{g_1(x,\theta^{\mathrm S}) - \inf_{\theta\in \Theta}g_1(x,\theta)} {\epsilon}<+\infty$. Now, let us define
\begin{equation*}
    B\triangleq 1+\sup_{x\in\textbf{dom}\; f}\frac{g_1(x,\theta^{\mathrm S}) - \inf_{\theta\in \Theta}g_1(x,\theta)} {\epsilon}<+\infty,
\end{equation*}
and let $W\triangleq [0,B]$, then problem \eqref{eq:f-ext} can be equivalently rewritten as
\begin{align}\label{eq:f-ext-2}
    &\min_{x \in \mathcal{X},\, w \in  W} \max_{y \in \mathcal{Y},\, \theta \in \Theta} 
    \mathcal{L}(x, w, y, \theta; \ell^*). 
\end{align}
The boundedness of the dual multiplier $w$ corresponding to the learning constraint helps us to derive a Lipschitz constant $L_\ell>0$ for the objective function $\mathcal L(x,w,y,\theta;\ell^*)$ with respect to the parameter $\ell^*$. In particular, one can observe that for any $(x,w,y,\theta)\in \textbf{dom}\;f \times W \times \textbf{dom}\;h \times \Theta$, 
\begin{align}\label{eq:lip-L-ext}
\left|\mathcal{L}(x,w,y,\theta;\ell_1) - \mathcal{L}(x,w,y,\theta;\ell_2)\right| \leq \|w\|_2 \|\ell_1 - \ell_2\|_2\leq B \|\ell_1 - \ell_2\|_2.
\end{align}
In fact, this implies $L_\ell=B=\mathcal O(1/\epsilon)$. 

{Next, we present counterparts of Assumptions~\ref{assump:phi-lip}, \ref{assump:step-size-cond}, and \ref{assump:step-size-cond2} in the form of Assumptions~\ref{assump:lipschitz-block}, \ref{assump:step-size-cond-ext}, and \ref{assump:step-size-cond2-ext}, respectively. 
These alternative formulations mirror the original assumptions but are adapted to the current setting for notational convenience.}

\begin{assumption}\label{assump:lipschitz-block}
Let $\mathcal{X},\mathcal{Y},\Theta,W$ be finite-dimensional normed spaces with norms
$\|\cdot\|_{\mathcal{X}},\|\cdot\|_{\mathcal{Y}},\|\cdot\|_{\Theta},\|\cdot\|_{W}$ and corresponding dual norms
$\|\cdot\|_{\mathcal{X}^*},\|\cdot\|_{\mathcal{Y}^*},\|\cdot\|_{\Theta^*},\|\cdot\|_{W^*}$. Furthermore, we let $\|\cdot\|_{W}=\|\cdot\|_2$. 
Assume $g_1:\mathcal{X}\times\Theta\to\mathbb{R}$, $g_2:\mathcal{X}\times\mathcal{Y}\to\mathbb{R}$, and $\ell:\Theta\to\mathbb{R}$ are continuously differentiable on open sets containing $ \mathrm{\mathbf {dom}}\,f\times\Theta$, $\mathrm{\mathbf {dom}}\,f\times\mathrm{\mathbf {dom}}\,h$, and $\Theta$, respectively. There exist nonnegative constants
 $L^{g_1}_{xx},\,L^{g_1}_{x\theta},\,L^{g_2}_{xx},\,L^{g_2}_{xy},\,L^{g_2}_{yx},\,L^{g_2}_{yy},\,L^{g_1}_{\theta x},\,L^{g_1}_{\theta\theta},\,L^{\ell}_{\theta},\,L^{\ell}_{w}\ \ge 0$,
 such that the following inequalities hold for all $x,\bar x\in \mathrm{\mathbf dom}\,f\subseteq\mathcal{X}$, $y,\bar y\in \mathrm{\mathbf dom}\,h\subseteq\mathcal{Y}$, $\theta,\bar\theta\in\Theta$, and $w,\bar w\in W$:
\begin{enumerate}[label=(\roman*)]
\item $\big\|\nabla_x g_1(x,\theta)-\nabla_x g_1(\bar x,\bar\theta)\big\|_{\mathcal{X}^*}
\le L^{g_1}_{xx}\,\|x-\bar x\|_{\mathcal{X}} + L^{g_1}_{x\theta}\,\|\theta-\bar\theta\|_{\Theta}$, 
\item $\big\|\nabla_x g_2(x,y)-\nabla_x g_2(\bar x,\bar y)\big\|_{\mathcal{X}^*}
\le L^{g_2}_{xx}\,\|x-\bar x\|_{\mathcal{X}} + L^{g_2}_{xy}\,\|y-\bar y\|_{\mathcal{Y}}$, 
\item $\big\|\nabla_y g_2(x,y)-\nabla_y g_2(\bar x,\bar y)\big\|_{\mathcal{Y}^*}
\le L^{g_2}_{yx}\,\|x-\bar x\|_{\mathcal{X}} + L^{g_2}_{yy}\,\|y-\bar y\|_{\mathcal{Y}}$, 
\item $\big\|\nabla_{\theta} g_1(x,\theta)-\nabla_{\theta} g_1(\bar x,\bar\theta)\big\|_{\Theta^*}
\le L^{g_1}_{\theta x}\,\|x-\bar x\|_{\mathcal{X}} + L^{g_1}_{\theta\theta}\,\|\theta-\bar\theta\|_{\Theta}$, 
\item $\big\|{-}\,w\,\nabla\ell(\theta) + \bar w\,\nabla\ell(\bar\theta)\big\|_{\Theta^*}
\le L^{\ell}_{\theta}\,\|\theta-\bar\theta\|_{\Theta} + L^{\ell}_{w}\,\|w-\bar w\|_{W}$. 
\end{enumerate}
\end{assumption}

\ma{Before presenting the step-size assumptions, we clarify that these conditions are required to establish the convergence guarantees of the proposed algorithm.}
\begin{assumption}[step-size condition I]\label{assump:step-size-cond-ext}
    There exists a sequence $\{\tau_k,\sigma_k,\eta_k\}_{k\geq 0}$ such that the iterates $\{(x_k,w_k,y_k,\theta_k)\}_{k \geq 0}$, generated by Algorithm \ref{alg:ext-multisol}, along with the step-size sequence, satisfy the following conditions for all $k \geq 0$:
    \begin{enumerate}[label=(\roman*)]
        \item $\bar E_k(x_{k+1},w_{k+1},y_{k+1},\theta_{k+1};\ell_{k+1}) \leq 0$,
        \item $\frac{t_k}{\tau_k} \geq \frac{t_{k+1}}{\tau_{k+1}}$,\quad $\frac{t_k}{\sigma_k} \geq \frac{t_{k+1}}{\sigma_{k+1}}$,\quad $\frac{t_k}{t_{k+1}}= \eta_{k+1}$
    \end{enumerate}
    for some positive sequences $\{t_k, \alpha_k\}_{k \geq 0}$ with $t_0 = 1$, and a nonnegative sequence $\{\beta_k\}_{k\geq 0}$, where $\bar E_k$ is defined in Definition \ref{def:E_k-ext} using the parameters $\{\alpha_k,\beta_k,\eta_k\}$ as specified above. 
\end{assumption}

\begin{assumption}[step-size condition II]\label{assump:step-size-cond2-ext}
    For any $k \geq 0$, the step-sizes $\tau_k$ and $\sigma_k$, along with the momentum parameter $\eta_k$, satisfy $\eta_0 = 1$ as well as condition in Assumption \ref{assump:step-size-cond-ext}-(i), and
    \begin{enumerate}[label=(\roman*)]
        \item $\frac{1}{\tau_k}\geq \max \left\{\frac{(L^{g_2}_{yx})^2+2(L^{g_1}_{\theta x})^2}{\alpha_{k+1}}+L^{g_1}_{xx} + L^{g_2}_{xx}, \frac{2(L^{\ell}_w)^2}{\beta_{k+1}} \right\}$,
        \item $\frac{1}{\sigma_k} \geq \max \left\{ \eta_k (\alpha_k + \beta_k) + \frac{(L^{g_2}_{yy})^2}{\beta_{k+1}}, \eta_k (\alpha_k + \beta_k) +\frac{2(L^{g_1}_{\theta \theta})^2}{\alpha_{k+1}}+\frac{2(L^{\ell}_{\theta})^2}{\beta_{k+1}} \right\}$,
    \end{enumerate}
    for some positive sequences $\{t_k, \alpha_k\}_{k \geq 0}$ with $t_0 = 1$, and a nonnegative sequence $\{\beta_k\}_{k\geq 0}$.
\end{assumption}

\ma{Next, we derive a key property of the parameter sequences $\{\tau_k, \sigma_k, \eta_k\}$ generated by Algorithm~\ref{alg:ext-multisol}, which play a central role in establishing the convergence rate stated in Theorem~\ref{thm:ext-multsol}.}

\begin{lemma}\label{lem:step-size-ext}
     Suppose the sequence $\{ \tau_k, \sigma_k, \eta_k \}_{k \geq 0}$ satisfies the step-size conditions in Assumption \ref{assump:step-size-cond2-ext} for some positive sequences $\{ t_k, \alpha_k \}_{k \geq 0}$, and nonnegative sequence $\{ \beta_k \}_{k \geq 0}$. Let $\{ x_k,w_k, y_k,\theta_k \}$ be the iterate sequence of Algorithm \ref{alg:ext-multisol} corresponding to $\{ \tau_k, \sigma_k, \eta_k \}$. Then $\{ x_k,w_k, y_k,\theta_k \}$ and $\{ \tau_k, \sigma_k, \eta_k \}$ satisfy the condition in Assumption \ref{assump:step-size-cond-ext}-(i) with the same sequences $\{ t_k, \alpha_k, \beta_k \}$.
\end{lemma}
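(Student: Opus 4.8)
The plan is to reproduce the argument of Lemma~\ref{lem:step-size}, adapted to the four blocks $(x,w,y,\theta)$: assuming the two inequality conditions of Assumption~\ref{assump:step-size-cond2-ext}, I will show $\bar E_k(x_{k+1},w_{k+1},y_{k+1},\theta_{k+1};\ell_{k+1})\le 0$ for every $k\ge 0$, which is exactly the condition in Assumption~\ref{assump:step-size-cond-ext}-(i); the sequence conditions on $\{t_k,\alpha_k,\beta_k\}$ are inherited verbatim. The mechanism is to bound each inner-product and squared-gradient term in Definition~\ref{def:E_k-ext} by an appropriate multiple of one of the four Bregman distances $\mathbf D_{\mathcal X}(\cdot,x_k)$, $\mathbf D_{W}(\cdot,w_k)$, $\mathbf D_{\mathcal Y}(\cdot,y_k)$, $\mathbf D_{\Theta}(\cdot,\theta_k)$, substitute the current iterate, and collect coefficients.

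For the term-by-term bounds one proceeds exactly as in the proof of Lemma~\ref{lem:step-size}, now invoking the block-Lipschitz conditions of Assumption~\ref{assump:lipschitz-block} together with $\mathbf D(\cdot,\cdot)\ge\tfrac12\|\cdot\|^2$. By Cauchy--Schwarz and Assumption~\ref{assump:lipschitz-block}-(i),(ii), the two inner products $\langle\nabla_x g_1(x,\theta)-\nabla_x g_1(x_k,\theta),x-x_k\rangle$ and $\langle\nabla_x g_2(x,y)-\nabla_x g_2(x_k,y),x-x_k\rangle$ are controlled by $L^{g_1}_{xx}\mathbf D_{\mathcal X}(x,x_k)$ and $L^{g_2}_{xx}\mathbf D_{\mathcal X}(x,x_k)$. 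Each squared-gradient term in $\bar E_k$ has exactly one ``moving'' block while the remaining arguments are identical in the two gradient evaluations, so precisely one Lipschitz constant applies: $\|\nabla_y g_2(x,y)-\nabla_y g_2(x_k,y)\|_{\mathcal Y^*}^2\le (L^{g_2}_{yx})^2\|x-x_k\|_{\mathcal X}^2$ and $\|\nabla_\theta g_1(x,\theta)-\nabla_\theta g_1(x_k,\theta)\|_{\Theta^*}^2\le (L^{g_1}_{\theta x})^2\|x-x_k\|_{\mathcal X}^2$ (both feeding $\mathbf D_{\mathcal X}$), $\|\nabla_y g_2(x_k,y)-\nabla_y g_2(x_k,y_k)\|_{\mathcal Y^*}^2\le (L^{g_2}_{yy})^2\|y-y_k\|_{\mathcal Y}^2$ (feeding $\mathbf D_{\mathcal Y}$), and $\|\nabla_\theta g_1(x_k,\theta)-\nabla_\theta g_1(x_k,\theta_k)\|_{\Theta^*}^2\le (L^{g_1}_{\theta\theta})^2\|\theta-\theta_k\|_{\Theta}^2$ (feeding $\mathbf D_{\Theta}$). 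Finally, Assumption~\ref{assump:lipschitz-block}-(v) with one argument frozen gives $\|-w\nabla\ell(\theta)+w_k\nabla\ell(\theta)\|_{\Theta^*}^2\le (L^{\ell}_w)^2\|w-w_k\|_{W}^2$ (feeding $\mathbf D_W$) and $\|-w_k\nabla\ell(\theta)+w_k\nabla\ell(\theta_k)\|_{\Theta^*}^2\le (L^{\ell}_\theta)^2\|\theta-\theta_k\|_{\Theta}^2$ (feeding $\mathbf D_{\Theta}$); note that the estimate $\ell_{k+1}$ itself does not enter these bounds, since the test only involves the fixed map $\ell$ through its gradient.

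Evaluating $\bar E_k$ at $(x_{k+1},w_{k+1},y_{k+1},\theta_{k+1};\ell_{k+1})$, multiplying the squared-norm bounds by the prefactors $\tfrac1{2\alpha_{k+1}},\tfrac1{\alpha_{k+1}},\tfrac1{\beta_{k+1}}$ prescribed in Definition~\ref{def:E_k-ext}, and combining with the explicit negative terms $-\tfrac1{\tau_k}\mathbf D_{\mathcal X}$, $-\tfrac1{\tau_k}\mathbf D_W$ and $-\big(\tfrac1{\sigma_k}-\eta_k(\alpha_k+\beta_k)\big)(\mathbf D_{\mathcal Y}+\mathbf D_{\Theta})$, one obtains $\bar E_k$ as a linear combination of the four nonnegative quantities $\mathbf D_{\mathcal X}(x_{k+1},x_k)$, $\mathbf D_W(w_{k+1},w_k)$, $\mathbf D_{\mathcal Y}(y_{k+1},y_k)$, $\mathbf D_{\Theta}(\theta_{k+1},\theta_k)$ whose coefficients are nonpositive precisely when the two inequalities of Assumption~\ref{assump:step-size-cond2-ext} hold. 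The only structural point that goes beyond Lemma~\ref{lem:step-size} — and the source of the $\max\{\cdot,\cdot\}$ in both conditions — is that $\mathbf D_{\mathcal X}$ and $\mathbf D_W$ share the same linesearch coefficient $-1/\tau_k$, so $1/\tau_k$ must simultaneously dominate the $x$-block combination $L^{g_1}_{xx}+L^{g_2}_{xx}+\tfrac{(L^{g_2}_{yx})^2+2(L^{g_1}_{\theta x})^2}{\alpha_{k+1}}$ and the $w$-block combination $\tfrac{2(L^{\ell}_w)^2}{\beta_{k+1}}$ (this is part (i)); symmetrically, $\mathbf D_{\mathcal Y}$ and $\mathbf D_{\Theta}$ share $-(1/\sigma_k-\eta_k(\alpha_k+\beta_k))$, forcing $1/\sigma_k$ to dominate both the $y$-block and the $\theta$-block combinations (this is part (ii)). Under these conditions all four coefficients are $\le 0$, and nonnegativity of the Bregman distances yields $\bar E_k(x_{k+1},w_{k+1},y_{k+1},\theta_{k+1};\ell_{k+1})\le 0$, with the sequences $\{t_k,\alpha_k,\beta_k\}$ unchanged, as claimed. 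There is no conceptual obstacle; the only care required is the coefficient bookkeeping across the four blocks and keeping each maximum aligned with the step-size coefficient it shares ($1/\tau_k$ for $(x,w)$ and $1/\sigma_k$ for $(y,\theta)$).
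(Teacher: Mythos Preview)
Your proposal is correct and follows essentially the same approach as the paper: bound each inner-product and squared-gradient term in $\bar E_k$ using the block-Lipschitz constants of Assumption~\ref{assump:lipschitz-block} and the inequality $\mathbf D\ge\tfrac12\|\cdot\|^2$, evaluate at $(x_{k+1},w_{k+1},y_{k+1},\theta_{k+1};\ell_{k+1})$, and collect the resulting coefficients of the four Bregman distances to see they are nonpositive under Assumption~\ref{assump:step-size-cond2-ext}. Your added observation that the $\max\{\cdot,\cdot\}$ structure arises because the $(x,w)$-blocks share $-1/\tau_k$ and the $(y,\theta)$-blocks share $-(1/\sigma_k-\eta_k(\alpha_k+\beta_k))$ is a helpful clarification but does not change the argument.
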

\begin{proof}
    Using Assumption \ref{assump:lipschitz-block}, and Definition \ref{def:Breg-dis}, for any $k\geq 0$, and $(x,w,y,\theta) \in \textbf{dom}\;f \times W \times \textbf{dom}\;h \times \Theta$, we obtain
    \begin{align*}
        \langle \nabla_xg_1(x,\theta)-\nabla_xg_1(x_k,\theta),x-x_k\rangle &\leq \frac{L^{g_1}_{xx}}{2}\|x-x_k\|^2_{\mathcal{X}} \leq L^{g_1}_{xx}\mathbf D_{\mathcal{X}}(x,x_k),\\
        \langle \nabla_xg_2(x,y)-\nabla_xg_2(x_k,y),x-x_k\rangle &\leq \frac{L^{g_2}_{xx}}{2}\|x-x_k\|^2_{\mathcal{X}} \leq L^{g_2}_{xx}\mathbf D_{\mathcal{X}}(x,x_k),\\
        \frac{1}{2}\|\nabla_y g_2(x,y)-\nabla_y g_2(x_k,y)\|^2_{\mathcal{Y}^*}&\leq \frac{(L^{g_2}_{yx})^2}{2}\|x-x_k\|^2_{\mathcal{X}}\leq (L^{g_2}_{yx})^2 \mathbf D_{\mathcal{X}}(x,x_k),\\
        \frac{1}{2}\|\nabla_y g_2(x_k,y)-\nabla_y g_2(x_k,y_k)\|^2_{\mathcal{Y}^*}&\leq \frac{(L^{g_2}_{yy})^2}{2}\|y-y_k\|^2_{\mathcal{Y}}\leq (L^{g_2}_{yy})^2 \mathbf D_{\mathcal{Y}}(y,y_k),\\
        \|\nabla_{\theta} g_1(x,\theta)-\nabla_{\theta} g_1(x_k,\theta)\|^2_{\Theta^*}&\leq (L^{g_1}_{\theta x})^2\|x-x_k\|^2_{\mathcal{X}}\leq 2(L^{g_1}_{\theta x})^2 \mathbf D_{\mathcal{X}}(x,x_k),\\
        \|\nabla_{\theta} g_1(x_k,\theta)-\nabla_{\theta} g_1(x_k,\theta_k)\|^2_{\Theta^*}&\leq (L^{g_1}_{\theta \theta})^2\|\theta-\theta_k\|^2_{\Theta}\leq 2(L^{g_1}_{\theta \theta})^2 \mathbf D_{\Theta}(\theta,\theta_k),\\
        \|-w \nabla \ell(\theta) + w_k \nabla \ell(\theta)\|^2_{\Theta^*}&\leq (L^{\ell}_w)^2 \|w-w_k\|^2_{W} \leq 2(L^{\ell}_w)^2 \mathbf D_{W}(w,w_k),\\
        \|-w_k \nabla \ell(\theta) + w_k \nabla \ell(\theta_k)\|^2_{\Theta^*}&\leq (L^{\ell}_\theta)^2 \|\theta-\theta_k\|^2_{\Theta} \leq 2(L^{\ell}_\theta)^2 \mathbf D_{\Theta}(\theta,\theta_k).
        \end{align*}
        
    By evaluating the above inequality at $(x,w,y,\theta;\ell)=(x_{k+1},w_{k+1},y_{k+1},\theta_{k+1};\ell_{k+1})$, and using Assumption \ref{assump:step-size-cond2-ext} along with the nonnegativity of Bregman functions, we can show
    \begin{align*}
        \bar E_k &\triangleq \bar E_k(x_{k+1},w_{k+1},y_{k+1},\theta_{k+1};\ell_{k+1})\\
        &\leq \big(L^{g_1}_{xx}+L^{g_2}_{xx}+\frac{L^{g_2}_{yx}}{\alpha_{k+1}}+\frac{2(L^{g_1}_{\theta x})^2}{\alpha_{k+1}}-\frac{1}{\tau_k} \big)\mathbf D_{\mathcal{X}}(x_{k+1},x_k)\\
        &\quad + \big(\frac{(L^{g_2}_{yy})^2}{\beta_{k+1}}+\eta_k(\alpha_k+\beta_k)-\frac{1}{\sigma_k} \big)\mathbf D_{\mathcal{Y}}(y_{k+1},y_k)\\
        &\quad + \big(\frac{2(L^{g_1}_{\theta \theta})^2}{\alpha_{k+1}}+\frac{2(L^{\ell}_{\theta})^2}{\beta_{k+1}}+\eta_k(\alpha_k+\beta_k)-\frac{1}{\sigma_k} \big)\mathbf D_{\Theta}(\theta_{k+1},\theta_k)\\
        &\quad + \big(\frac{2(L^{\ell}_{w})^2}{\beta_{k+1}}-\frac{1}{\tau_k} \big)\mathbf D_{W}(w_{k+1},w_k).
    \end{align*}
\end{proof}

We now present the iteration complexity analysis of the proposed Learning-aware approach in the setting with multiple learning solutions in the following theorem and the corollary afterwards.

\begin{definition}\label{def:gap-function-ext}
    \ma{Let $\bar{S} \triangleq \textbf{dom}\, f \times W\times \textbf{dom}\, h \times \Theta$ 
    and $s \triangleq [x^\top, w^\top, y^\top,\theta^\top]^\top$. 
    The gap function is defined as}
    \begin{align*}
        \mathcal{G}_{\bar{S}}({s}) \triangleq \sup_{s\in \bar{S}} \left\{ \mathcal{L}({x},{w},y,\theta;\ell^*) - \mathcal{L}(x,w,{y},\theta;\ell^*) \right\}.
    \end{align*}

\end{definition}

Now we are ready to present the main result of this section.
    
\begin{theorem}\label{thm:ext-multsol}
    Suppose Assumptions \ref{assump:bounded-domains} and \ref{assump:lipschitz-block} hold and {let $s\triangleq [x^\top,w^\top, y^\top,\theta^\top]^\top$}. Let $\{x_k,w_k,y_k, \theta_k, \ell_k\}_{k\geq 0}$ be the sequence generated by Algorithm \ref{alg:ext-multisol}, using a parameter sequence $\{\tau_k,\sigma_k,\eta_k\}_{k\geq 0}$ that satisfies Assumptions \ref{assump:step-size-cond-ext} or \ref{assump:step-size-cond2-ext}. \ma{Then, for any 
    $K\geq 1$,}

    \begin{align}\label{eq:missp-sup-ext}
    \mathcal{G}_{\bar{S}}(\overline{s}_K) &\leq \frac{1}{T_K} \sup_{s\in \bar{S}} \Big(\frac{1}{\tau_0} \mathbf D_{\mathcal{X}}(x,x_0) + \frac{1}{\tau_0} \mathbf D_{W}(w,w_0) + \frac{1}{\sigma_0} \mathbf D_{\mathcal{Y}}(y,y_0) + \frac{1}{\sigma_0} \mathbf D_{\Theta}(\theta,\theta_0) \Big) \nonumber\\
    & \quad  + \frac{2L_{\ell}}{T_K} \sum_{k=0}^{K-1}t_k\|\ell_{k+1}-\ell^*\|_2,
\end{align}
\ma{where $\bar{s}_K \triangleq [\bar{x}_K^\top,\bar{w}_K^\top,\bar{y}_K^\top,\bar{\theta}_K^\top]^\top$ denotes the weighted average of the iterates 
$\{s_k\}_{k=0}^{K-1}$, i.e., $\bar{s}_K \triangleq \frac{1}{T_K} \sum_{k=0}^{K-1} t_k s_{k+1}$, and} $T_K = \sum_{k=0}^{K-1}t_k$.
    
\end{theorem}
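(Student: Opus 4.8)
The plan is to mirror the proof of Theorem~\ref{thm:missp-approach}, but now carrying four block variables $(x,w,y,\theta)$ through the primal-dual recursion, with $\theta$ playing the role of a maximization variable coupled through the learning constraint. First I would set up the momentum quantities analogous to $q_k$: define $q^y_k \triangleq \nabla_y g_2(x_k,y_k)-\nabla_y g_2(x_{k-1},y_{k-1})$ and $q^{\theta}_k \triangleq \big(\nabla_{\theta}g_1(x_k,\theta_k)-w_k\nabla\ell(\theta_k)\big)-\big(\nabla_{\theta}g_1(x_{k-1},\theta_{k-1})-w_{k-1}\nabla\ell(\theta_{k-1})\big)$, so that $s^y_k$ and $s^{\theta}_k$ have the extrapolated form used in lines 5--6. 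Then I would invoke Lemma~\ref{lem:f-bound} (the generic prox-inequality) for each of the four argmin subproblems in lines 7, 8, 12, 13, producing four descent inequalities with Bregman three-point terms, exactly as \eqref{eq:h-bound}--\eqref{eq:f-bound} in the earlier proof. Summing the $x$- and $w$-subproblem inequalities (using convexity of $g_1(\cdot,\theta)+g_2(\cdot,y)$ in $x$ and linearity in $w$ to pass from gradients to function-value gaps, as in \eqref{eq:inner-eq}) and the $y$- and $\theta$-subproblem inequalities (using concavity of $g_2(x,\cdot)$ in $y$ and of $g_1(x,\cdot)-\langle w,\ell(\cdot)\rangle$ in $\theta$) yields a bound on $\mathcal{L}(x_{k+1},w_{k+1},y,\theta;\ell_{k+1})-\mathcal{L}(x,w,y_{k+1},\theta_{k+1};\ell_{k+1})$.

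Next I would split the momentum inner products $\langle q^y_k, y-y_k\rangle$ and $\langle q^{\theta}_k,\theta-\theta_k\rangle$ into their constituent pieces (a ``$x$-shift'' part, a ``$w$-shift'' part, and a ``$y$/$\theta$-shift'' part), apply the Young-type inequality $|\langle y',y\rangle|\le \tfrac{a}{2}\|y\|^2+\tfrac{1}{2a}\|y'\|^2$ with parameters $\alpha_k,\beta_k$, and bound the squared-norm terms using the Lipschitz constants of Assumption~\ref{assump:lipschitz-block}. This reorganizes the per-iteration bound into the telescoping form $Q_k(s)-R_{k+1}(s)+\bar E_k$, where $\bar E_k$ is exactly the test function of Definition~\ref{def:E_k-ext} evaluated at $(x_{k+1},w_{k+1},y_{k+1},\theta_{k+1};\ell_{k+1})$ — this identification is the analogue of the remark following \eqref{eq:L-bound-fnl}. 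I would then pass from $\mathcal{L}(\cdot;\ell_{k+1})$ to $\mathcal{L}(\cdot;\ell^*)$ using the Lipschitz bound \eqref{eq:lip-L-ext}, paying a cost of $2L_{\ell}\|\ell_{k+1}-\ell^*\|_2$ per iteration. Note that, unlike Theorem~\ref{thm:missp-approach}, there is \emph{no} residual $\tfrac{L^2_{y\theta}}{\beta_{k+1}}\|\theta_{k+1}-\theta_k\|^2$ term here, because the parameter $\ell$ enters $\mathcal{L}$ only through the separable term $\langle w,\ell\rangle$, whose gradient in $\theta$ (namely $-w\nabla\ell$) does not depend on $\ell^*$ at all — this is why that term is absent from \eqref{eq:missp-sup-ext}.

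Finally I would multiply the per-iteration bound by $t_k>0$, sum over $k=0,\dots,K-1$, use the monotonicity conditions in Assumption~\ref{assump:step-size-cond-ext}-(ii) to telescope ($t_{k+1}Q_{k+1}(s)\le t_kR_{k+1}(s)$), invoke Lemma~\ref{lem:step-size-ext} (or the hypothesis directly) to discard the nonpositive $\sum t_k\bar E_k$, apply Jensen's inequality to move the average iterate $\bar s_K$ inside $\mathcal{L}$, and bound the leftover $t_K Q_K(s)$ against $t_{K-1}R_K(s)$ as in \eqref{eq:sum-bound-missp2}, dropping the nonpositive remainder after using \eqref{eq:pxyk}-type bounds on $\langle q^y_K,y-y_K\rangle$ and $\langle q^{\theta}_K,\theta-\theta_K\rangle$. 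Dividing by $T_K$ and taking the supremum over $s\in\bar S$ gives \eqref{eq:missp-sup-ext}. The main obstacle I anticipate is the bookkeeping in the momentum-splitting step: with $q^{\theta}_k$ now a three-term difference involving the bilinear coupling $w\nabla\ell(\theta)$, I need to split it carefully so that each squared-norm piece is controlled by exactly the Lipschitz constant appearing in Definition~\ref{def:E_k-ext}, and the factors of $2$ in Assumption~\ref{assump:step-size-cond2-ext} (from the $w$-shift and $\theta$-shift both landing on the $\Theta^*$-norm) must match up precisely — getting the constants consistent across the test function, the step-size condition, and Lemma~\ref{lem:step-size-ext} is where the proof is most error-prone.
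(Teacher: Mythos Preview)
Your proposal is correct and follows essentially the same route as the paper's proof in Appendix~\ref{AppndxC}: the same four prox-inequalities, the same telescoping decomposition $\bar Q_k-\bar R_{k+1}+\bar E_k$, the same Lipschitz swap from $\ell_{k+1}$ to $\ell^*$, and the same final summation. The only cosmetic difference is that the paper splits $q^{\theta}_k$ into four pieces $p^{\theta_1}_k,p^{\theta_2}_k,p^{w_1}_k,p^{w_2}_k$ and then pairs them as $(p^{\theta_1}_k+p^{\theta_2}_k)$ with $\alpha_k$ and $(p^{w_1}_k+p^{w_2}_k)$ with $\beta_k$ in the Young-inequality step, which is exactly the bookkeeping you flagged as the main obstacle.
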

\begin{proof}
The proof of Theorem \ref{thm:ext-multsol} follows similar steps to the convergence analysis in the proof of Theorem \ref{thm:missp-approach} for the Learning-aware APD approach, with some modifications. For completeness, we provide the full proof of Theorem \ref{thm:ext-multsol} in Appendix \ref{AppndxC}.

\end{proof}






\begin{corollary}\label{cor:ext-multisol-rate}
Under the premises of Theorem~\ref{thm:ext-multsol}, for any $K\geq 1$, let $\epsilon= 1/\sqrt{K}$, it holds that:
\begin{align}
\mathcal{G}_{\bar{S}}(\overline{s}_K) &\leq \frac{1}{T_K} \left(\frac{1}{\tau_0}\mathbf R_{x} + \frac{1}{\tau_0}\mathbf R_{ w} + \frac{1}{\sigma_0}\mathbf R_{y} + \frac{1}{\sigma_0}\mathbf R_{\theta} \right) + \frac{c_0c_1 \pi^2 \sqrt{K}}{3T_K},\label{eq:gap-rate-mult}\\
\ell(\bar \theta_K)-\ell^*&\leq \frac{1}{T_K} \left(\frac{1}{\tau_0}\mathbf R_{x} + \frac{1}{\tau_0}\mathbf R_{ w} + \frac{1}{\sigma_0}\mathbf R_{y} + \frac{1}{\sigma_0}\mathbf R_{\theta} \right) + \frac{c_0c_1 \pi^2 \sqrt{K}}{3T_K} + \frac{1}{\sqrt{K}},\label{eq:feasibility-rate-mult}
\end{align}
where $T_K = \sum_{k=0}^{K-1}t_k=\Omega(K)$, 
, $\mathbf R_{x} \;\triangleq\;
\sup_{x,\bar x \in \bar{ X}} \mathbf D_{\mathcal X}(x,\bar x)<\infty$,
$\mathbf R_{ w} \;\triangleq\;
\sup_{w,\bar w \in \bar{ W}} \mathbf D_{ W}(w,\bar w)<\infty$, $\mathbf R_{y} \;\triangleq\;
\sup_{y,\bar y \in \bar{ Y}} \mathbf D_{\mathcal Y}(y,\bar y)<\infty$, and $\mathbf R_{\theta} \;\triangleq\;
\sup_{\theta,\bar \theta \in \bar{\Theta}} \mathbf D_{\Theta}(\theta,\bar \theta)<\infty$. 
\end{corollary}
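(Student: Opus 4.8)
The plan is to start from the bound in Theorem~\ref{thm:ext-multsol}, namely
\[
\mathcal{G}_{\bar{S}}(\overline{s}_K) \leq \frac{1}{T_K}\sup_{s\in\bar S}\Big(\tfrac{1}{\tau_0}\mathbf D_{\mathcal X}(x,x_0)+\tfrac{1}{\tau_0}\mathbf D_W(w,w_0)+\tfrac{1}{\sigma_0}\mathbf D_{\mathcal Y}(y,y_0)+\tfrac{1}{\sigma_0}\mathbf D_\Theta(\theta,\theta_0)\Big) + \frac{2L_\ell}{T_K}\sum_{k=0}^{K-1}t_k\|\ell_{k+1}-\ell^*\|_2,
\]
and control the two pieces separately. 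For the first piece, since $\bar S=\textbf{dom}\,f\times W\times\textbf{dom}\,h\times\Theta$ is bounded under Assumption~\ref{assump:bounded-domains} (together with the boundedness of $W=[0,B]$ derived earlier, and assuming $\textbf{dom}\,h$ bounded as in the running setting), each supremum of a Bregman distance is finite and bounded by the respective Bregman diameter $\mathbf R_x,\mathbf R_w,\mathbf R_y,\mathbf R_\theta$; substituting these gives the first group of terms in \eqref{eq:gap-rate-mult}.

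The heart of the argument is bounding $\sum_{k=0}^{K-1}t_k\|\ell_{k+1}-\ell^*\|_2$. Recall $\ell_{k+1}=\ell(\tilde\theta_{k+1})$ where $\{\tilde\theta_k\}$ is generated by one step of Nesterov's accelerated projected gradient on the convex smooth function $\ell$ over the compact set $\Theta$ (lines~9--11 of Algorithm~\ref{alg:ext-multisol}), with $\gamma_\theta\le 1/L_\ell$ (here $L_\ell$ playing the role of the gradient-Lipschitz constant of $\ell$; note the paper writes $\gamma_\theta\le 1/L_\theta$). The standard accelerated-gradient guarantee gives $\ell(\tilde\theta_{k+1})-\ell^*\le \frac{2\|\tilde\theta_0-\theta^*_\ell\|^2}{\gamma_\theta (k+2)^2} =: \frac{c_1'}{(k+1)^2}$ for some constant $c_1'$ depending only on $\gamma_\theta$ and the diameter of $\Theta$. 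Since $\ell(\tilde\theta_{k+1})\ge\ell^*$, we have $\|\ell_{k+1}-\ell^*\|_2=\ell(\tilde\theta_{k+1})-\ell^*\le c_1'/(k+1)^2$. Then, taking $t_k$ of the order $\Theta(1)$ so that $T_K=\Omega(K)$ and $t_k/T_K=O(1/K)$ is not even needed here — we simply use $t_k\le t_{\max}$ and $\sum_{k=0}^{K-1}\|\ell_{k+1}-\ell^*\|_2\le c_1'\sum_{k=1}^{K}1/k^2\le c_1'\pi^2/6$. Hence $\frac{2L_\ell}{T_K}\sum_k t_k\|\ell_{k+1}-\ell^*\|_2\le \frac{c_0 c_1\pi^2}{6 T_K}$ for appropriate constants $c_0\propto L_\ell t_{\max}$, $c_1\propto c_1'$. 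With $\epsilon=1/\sqrt K$ and recalling $L_\ell=B=\mathcal O(1/\epsilon)=\mathcal O(\sqrt K)$, this becomes $\frac{c_0c_1\pi^2\sqrt K}{3 T_K}$ (the factor $2/6=1/3$ accounts for the $2L_\ell$), which since $T_K=\Omega(K)$ is $\mathcal O(1/\sqrt K)$. This proves \eqref{eq:gap-rate-mult}.

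For the feasibility bound \eqref{eq:feasibility-rate-mult}, the idea is to relate $\ell(\bar\theta_K)-\ell^*$ to the gap. Since $w\in W=[0,B]$ and $w=0$ is admissible, evaluating $\mathcal{L}(x,w,\bar y_K,\bar\theta_K;\ell^*)$ at the minimizing $x$ and at $w=0$ versus at a large $w$, the standard constrained-to-unconstrained reduction (as in \cite{nedic201010}) shows that the constraint residual $(\ell(\bar\theta_K)-\ell^*-\epsilon)_+$ is controlled by the gap divided by $(B-\|w^*\|)\ge 1$; concretely $\ell(\bar\theta_K)-\ell^*-\epsilon \le \mathcal{G}_{\bar S}(\bar s_K)$ by convexity of $\ell$ (using $\bar\theta_K=\frac{1}{T_K}\sum t_k\theta_{k+1}$) and the fact that increasing $w$ only increases $\mathcal L$ along the feasible direction. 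Adding back $\epsilon=1/\sqrt K$ yields \eqref{eq:feasibility-rate-mult}. The main obstacle I anticipate is making the feasibility-from-gap reduction fully rigorous: one must carefully choose the comparison point $w$ in the definition of $\mathcal{G}_{\bar S}$ — taking $w=B$ (or $w$ one unit above $w^*(x)$) — and invoke Jensen's inequality on $\ell$ to pass from the averaged iterate $\bar\theta_K$ to $\ell(\bar\theta_K)$, while simultaneously verifying that the saddle-function value at the averaged primal point dominates, which is where the decoupled structure $\Phi=g_1+g_2$ and linearity of the learning-constraint term $-\langle w,\ell(\theta)-\ell^*-\epsilon\rangle$ in $w$ are essential.
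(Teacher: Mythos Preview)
Your approach is essentially the same as the paper's: both bound the Bregman terms by diameters, use the accelerated-gradient rate $\|\ell_{k+1}-\ell^*\|_2\le c_1/(k+1)^2$ and $\sum 1/k^2\le\pi^2/6$, substitute $L_\ell=c_0/\epsilon=c_0\sqrt K$, and for feasibility evaluate the gap at $w=w^*+1$ (exactly your ``one unit above $w^*$'') to extract $\ell(\bar\theta_K)-\ell^*-\epsilon$ via linearity of $\mathcal L$ in $w$ and the saddle-point inequality. One clarification: your anticipated obstacle about Jensen for $\ell(\bar\theta_K)$ is a non-issue, because Theorem~\ref{thm:ext-multsol} already delivers the bound on $\mathcal L(\bar x_K,\bar w_K,y,\theta;\ell^*)-\mathcal L(x,w,\bar y_K,\bar\theta_K;\ell^*)$ at the averaged point $\bar\theta_K$ (Jensen was applied inside that theorem using concavity of $\mathcal L$ in $(y,\theta)$), so $\ell(\bar\theta_K)$ appears directly with no further work.
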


\begin{proof}
From Theorem~\ref{thm:ext-multsol}, we have that
\begin{align}\label{eq:rep-gap-ext}
    \mathcal{G}_{\bar{S}}(\overline{s}_K) &\leq \frac{1}{T_K} \sup_{s\in \bar{S}} \Big(\frac{1}{\tau_0} \mathbf D_{\mathcal{X}}(x,x_0) + \frac{1}{\tau_0} \mathbf D_{W}(w,w_0) + \frac{1}{\sigma_0} \mathbf D_{\mathcal{Y}}(y,y_0) + \frac{1}{\sigma_0} \mathbf D_{\Theta}(\theta,\theta_0) \Big) \nonumber\\
    & \quad  + \frac{2L_{\ell}}{T_K} \sum_{k=0}^{K-1}t_k\|\ell_{k+1}-\ell^*\|_2,
\end{align}
%
Under the accelerated projected gradient descent update, we have that 
$\|\ell_{k+1} - \ell^*\|_2 \leq \tfrac{c_1}{(k+1)^2}$ for some constant $c_1 > 0$, and $L_\ell=c_0/\epsilon=c_0\sqrt{K}$ for some $c_0>0$. Hence
\begin{align*}
\frac{2L_{\ell}}{T_K}\sum_{k=0}^{K-1}t_k\|\ell_{k+1}-\ell^*\|_2
&\leq\frac{2c_0c_1}{T_K \epsilon}\sum_{k=1}^K\frac{1}{k^2} \leq \frac{2c_0c_1}{T_K \epsilon} \cdot \frac{\pi^2}{6}= \frac{c_0c_1 \pi^2 }{3T_K \epsilon}.
\end{align*}

Combining the above bound with \eqref{eq:rep-gap-ext}, we obtain
\begin{align}\label{eq:ext-gap1}
    \mathcal{G}_{\bar{S}}(\overline{s}_K) \;\leq \frac{1}{T_K} \left(\frac{1}{\tau_0}\mathbf R_{x} + \frac{1}{\tau_0}\mathbf R_{ w} + \frac{1}{\sigma_0}\mathbf R_{y} + \frac{1}{\sigma_0}\mathbf R_{\theta} \right) + \frac{c_0c_1 \pi^2 }{3T_K \epsilon}.
\end{align}
Therefore, selecting $\epsilon =1/\sqrt{K}$, we obtain the desired result in \eqref{eq:gap-rate-mult}.


Moreover, using the earlier result in \eqref{eq:ext-gap1}, and Definition \ref{def:gap-function-ext}, for any $[x,w,y,\theta]\in \bar{S}$, we have:
\begin{align*}
     &\mathcal{L}(\bar x_K, \bar w_K,y,\theta;\ell^*)-\mathcal{L}(x, w,\bar y_K,\bar \theta_K;\ell^*) \\
     & \qquad \leq \frac{1}{T_K} \left(\frac{1}{\tau_0}\mathbf R_{x} + \frac{1}{\tau_0}\mathbf R_{ w} + \frac{1}{\sigma_0}\mathbf R_{y} + \frac{1}{\sigma_0}\mathbf R_{\theta} \right) + \frac{c_0c_1 \pi^2 }{3T_K \epsilon}.
\end{align*}
Now, for any solution tuple $(x^*,w^*,y^*,\theta^*)$ of \eqref{eq:f-ext}, by letting $s=[x^*, w^*+1, y^*, \theta^*] \in \bar{S}$, one can observe that
\begin{align*}
     &\mathcal{L}(\bar x_K, \bar w_K,y^*,\theta^*;\ell^*)-\mathcal{L}(x^*, w^*+1,\bar y_K,\bar \theta_K;\ell^*)\\
     &=\mathcal{L}(\bar x_K, \bar w_K,y^*,\theta^*;\ell^*)-\mathcal{L}(x^*, w^*,\bar y_K,\bar \theta_K;\ell^*) + (\ell(\bar \theta_K)-\ell^*-\epsilon)\\
     &\geq \ell(\bar \theta_K)-\ell^*-\epsilon,
\end{align*}
where in the last inequality we used the fact that $(x^*,w^*,y^*,\theta^*)$ is an SP solution of \eqref{eq:f-ext}. 
Therefore, we obtain
\begin{align*}
\ell(\bar \theta_K)-\ell^*\leq \frac{1}{T_K} \left(\frac{1}{\tau_0}\mathbf R_{x} + \frac{1}{\tau_0}\mathbf R_{ w} + \frac{1}{\sigma_0}\mathbf R_{y} + \frac{1}{\sigma_0}\mathbf R_{\theta} \right) + \frac{c_0c_1 \pi^2 }{3T_K \epsilon} + \epsilon.
\end{align*}
Finally, selecting $\epsilon = 1/\sqrt{K}$ to minimize the right-hand-side with respect to $\epsilon$, we obtain the desired result in \eqref{eq:feasibility-rate-mult}. 
\end{proof}

\begin{remark}
The primal–dual method proposed in Algorithm~\ref{alg:ext-multisol} is designed for misspecified saddle point problems with multiple learning solutions in \eqref{eq:sp-ext}, which incorporates an adaptive step-size rule. Corollary~\ref{cor:ext-multisol-rate} establishes a convergence rate of $\mathcal{O}(1/\sqrt{K})$ for finding an SP solution of \eqref{eq:f-ext}, along with guarantees on the suboptimality of the associated learning problem. To the best of our knowledge, this is the first non-asymptotic convergence result for this class of problems. 
\end{remark}

\section{Numerical Experiment}
In this section, we present a set of numerical experiments to assess the effectiveness of our proposed algorithms in solving a misspecified SP problem. We benchmark their performance against two existing methods: the inexact parametric augmented Lagrangian method (IPALM) \cite{aybat2021analysis} and a variational inequality (VI) \cite{jiang2016solution} based method. 
In particular, we apply our methods to the misspecified portfolio optimization problem described in Section~\ref{subsec:portfolio}.

Consider the Markowitz portfolio optimization problem discussed in \cite{aybat2021analysis}. We formulate it as a misspecified SP problem, as described in Section~\ref{subsec:portfolio}:
\begin{equation}\label{eq:exp-portfolio}
\begin{aligned}
    &(Optimization): ~\min_{x\in \mathcal{X}} \max_{y\geq 0} \left\{ \frac{1}{2}x^\top \Sigma^* x - \kappa \mu^\top x + y^\top(Ax - b) \right\}, \\
    &(Learning):~\Sigma^* \in \argmin_{\Sigma \in \mathbb{S}^n} \max_{W\in \mathbb{S}_+^n} \left\{ \frac{1}{2}\|\Sigma - S\|_F^2 + v\|\Sigma\|_1 - \mathrm{Tr}\left(W^\top(\Sigma - \epsilon I)\right) \right\}.
\end{aligned}
\end{equation}
Here, $x \in \mathbb{R}^n$ denotes the portfolio vector, where $x_i$ represents the proportion of asset $i$ maintained in the portfolio. The set $\mathcal{X} \triangleq \{x \in \mathbb{R}^n : \sum_{i=1}^n x_i = 1,\ x \geq 0\}$ enforces a fully-invested, no-short-selling portfolio. The parameter $\kappa > 0$ governs the trade-off between risk and expected return, where $\mu \in \mathbb{R}^n$ denotes the expected return vector. The matrix $A \in \mathbb{R}^{s \times n}$ and vector $b \in \mathbb{R}^s$ define the sector constraints, with $A_{ji} = 1$ if asset $i$ belongs to sector $j$, and $b_j$ representing the maximum allowed investment in sector $j$. The dual variable $y \in \mathbb{R}^s_+$ corresponds to these sector constraints.

The matrix $\Sigma^* \in \mathbb{S}^n$ represents the estimated covariance of asset returns. It is learned from the sample covariance matrix $S \in \mathbb{R}^{n \times n}$ using a sparse covariance selection problem. The term $\|\Sigma\|_1$ denotes the $\ell_1$ norm of the off-diagonal elements of $\Sigma$, $\|\cdot\|_F$ is the Frobenius norm, and $v > 0$ is a regularization parameter promoting sparsity. The constraint $\Sigma \succeq \epsilon I$ ensures positive definiteness, enforced via a symmetric Lagrange multiplier matrix $W \in \mathbb{S}_+^n$.

We conduct experiments on both real and synthetic datasets. For the real data, we use the NASDAQ-100 ($n = 82$, $T = 596$) and Dow Jones ($n = 28$, $T = 1363$) datasets from \cite{bruni2016real}, where $n$ denotes the number of assets and $T$ the number of weekly return observations. The datasets have been preprocessed to remove errors and are adjusted for dividends and stock splits. Using these datasets, we compute the true mean return vector $\mu^0$ (the row-wise mean of the return matrix) and the true covariance matrix $\Sigma^0$ (the column-wise covariance of the returns).

For the synthetic dataset, we consider $n = 800$ assets whose joint returns $\mathcal{R} = \{\mathcal{R}_i\}_{i=1}^n$ follow a multivariate Normal distribution $\mathcal{N}(\mu^0, \Sigma^0)$. The true mean vector $\mu^0$ is drawn uniformly from the hypercube $[-1,1]^n \subset \mathbb{R}^n$, while the true covariance matrix $\Sigma^0 = [\sigma_{ij}]$, where $\sigma_{ij} = \max\{1 - \frac{|i-j|}{10}, 0\}$. We then generate $p = n/2$ i.i.d.\ samples from $\mathcal{N}(\mu^0, \Sigma^0)$, and compute the empirical covariance matrix $S$ from these samples. The \textit{(Learning)} problem is solved to estimate the parameter matrix $\Sigma^*$ using the structured covariance selection (SCS) formulation, with parameters set to $v = 0.4$ and $\kappa = 0.1$.

To evaluate algorithmic performance, we compare infeasibility and suboptimality measures over the course of iterations. Additionally, we track and plot the standardized norm of the difference between successive iterates for the parameter estimate.
All methods are run for a maximum of $K = 10^3$ iterations, with the number of market sectors set to $s = 10$. Step-sizes for each method are chosen according to their respective theoretical convergence requirements and are fine-tuned to ensure practical efficiency.

The performance of the algorithms on both real and synthetic datasets is illustrated in Figure~\ref{fig:combined-3x3}. Our proposed algorithms, Naive and Learning-aware APD, demonstrate significantly faster convergence in the infeasibility and suboptimality measures compared to the IPALM and VI methods. In particular, the Learning-aware method, which incorporates a backtracking line search strategy, achieves the most rapid convergence among all approaches, outperforming the Naive method in both infeasibility and suboptimality measures.

It is also important to note that the parameter updates for the \textit{(Learning)} problem (i.e., the covariance matrix $\Sigma$) are carried out using the same learning algorithm (Algorithm~\ref{alg:APD-learning}) in both proposed methods. As a result, the convergence trajectories for $\Sigma$ are identical, leading to overlapping curves in the last columns of Figure~\ref{fig:combined-3x3}.
\begin{figure}[H]
  \centering
  \newcommand{\subfigheight}{4cm}

  \begin{minipage}{\textwidth}
    \centering
    \textbf{NASDAQ-100 (real data)}

    \begin{subfigure}[t]{0.32\textwidth}
      \centering
      \includegraphics[width=\linewidth,height=\subfigheight,keepaspectratio]{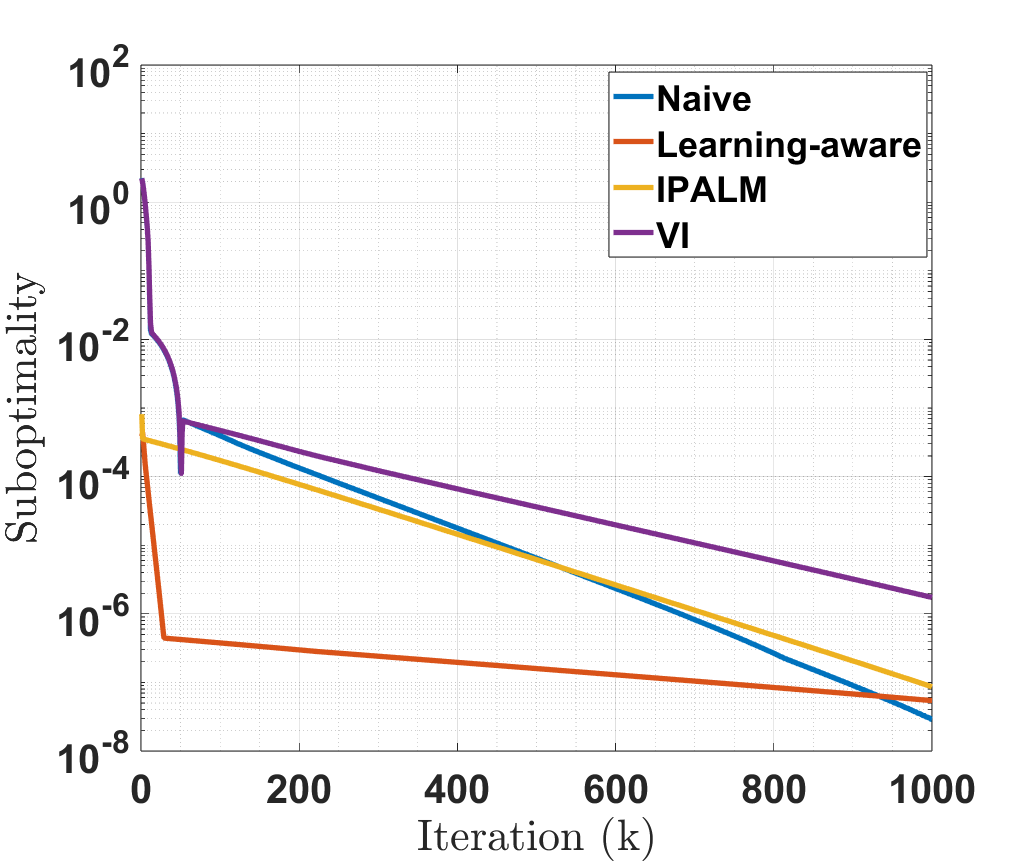}
      \label{fig:comb-I-a}
    \end{subfigure}\hfill
    \begin{subfigure}[t]{0.32\textwidth}
      \centering
      \includegraphics[width=\linewidth,height=\subfigheight,keepaspectratio]{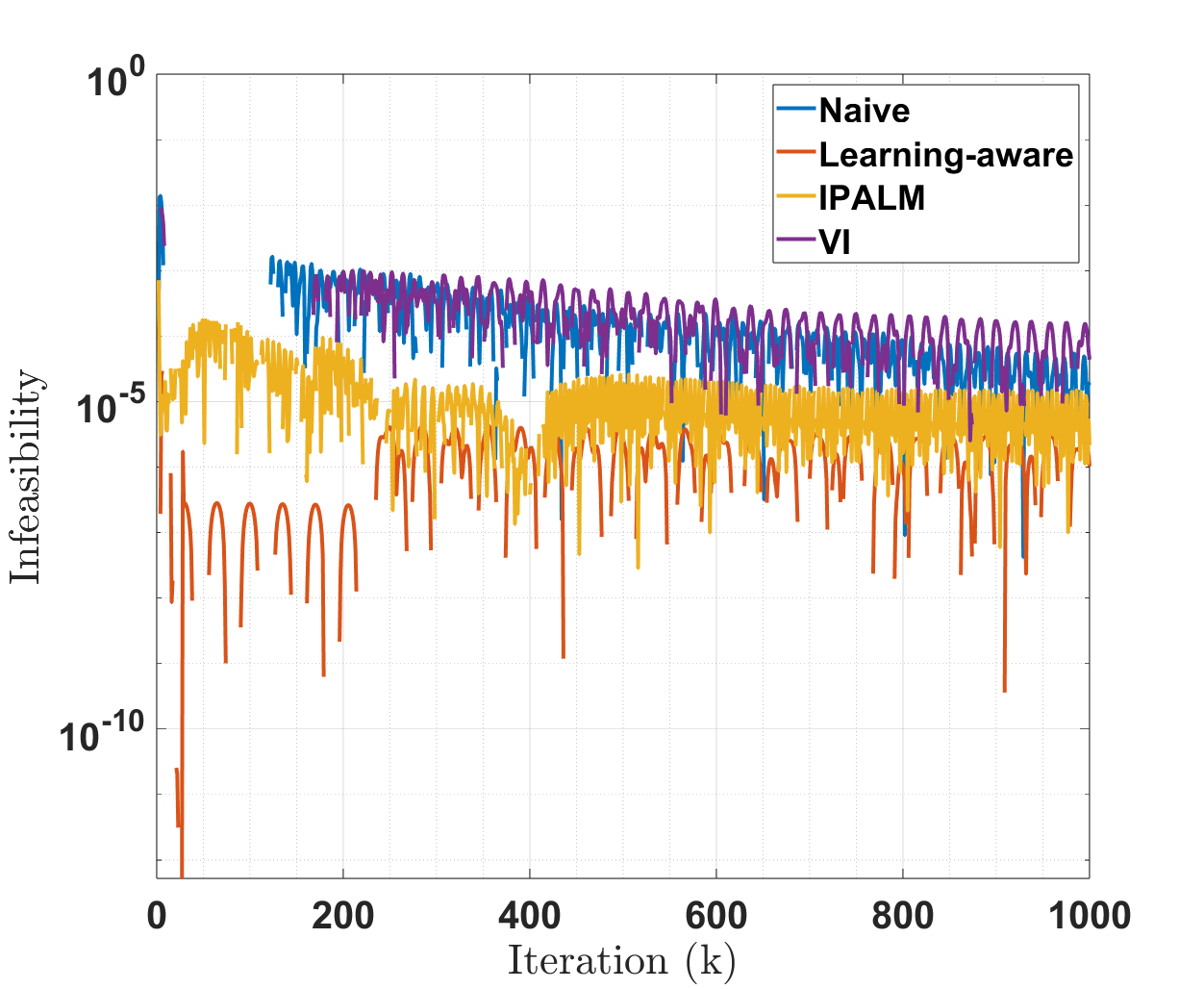}
      \label{fig:comb-I-b}
    \end{subfigure}\hfill
    \begin{subfigure}[t]{0.32\textwidth}
      \centering
      \includegraphics[width=\linewidth,height=\subfigheight,keepaspectratio]{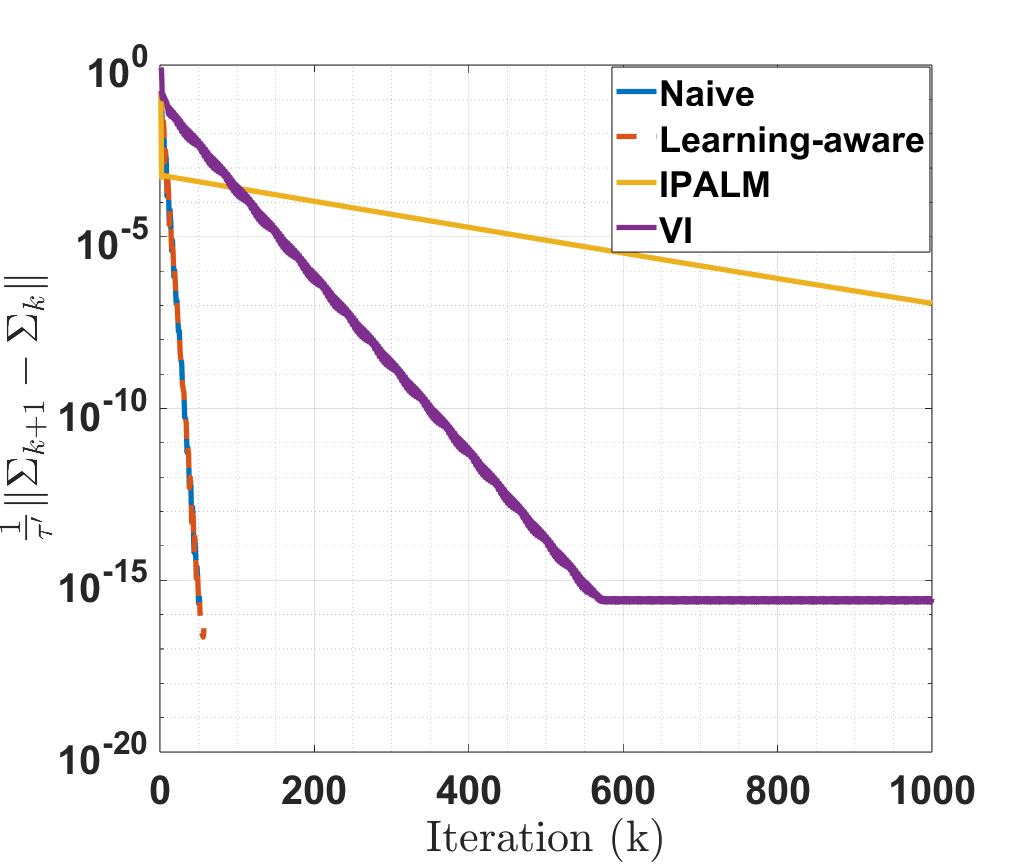}
      \label{fig:comb-I-c}
    \end{subfigure}
  \end{minipage}

  \vspace{1mm}

  \begin{minipage}{\textwidth}
    \centering
    \textbf{Dow Jones (real data)}

    \begin{subfigure}[t]{0.32\textwidth}
      \centering
      \includegraphics[width=\linewidth,height=\subfigheight,keepaspectratio]{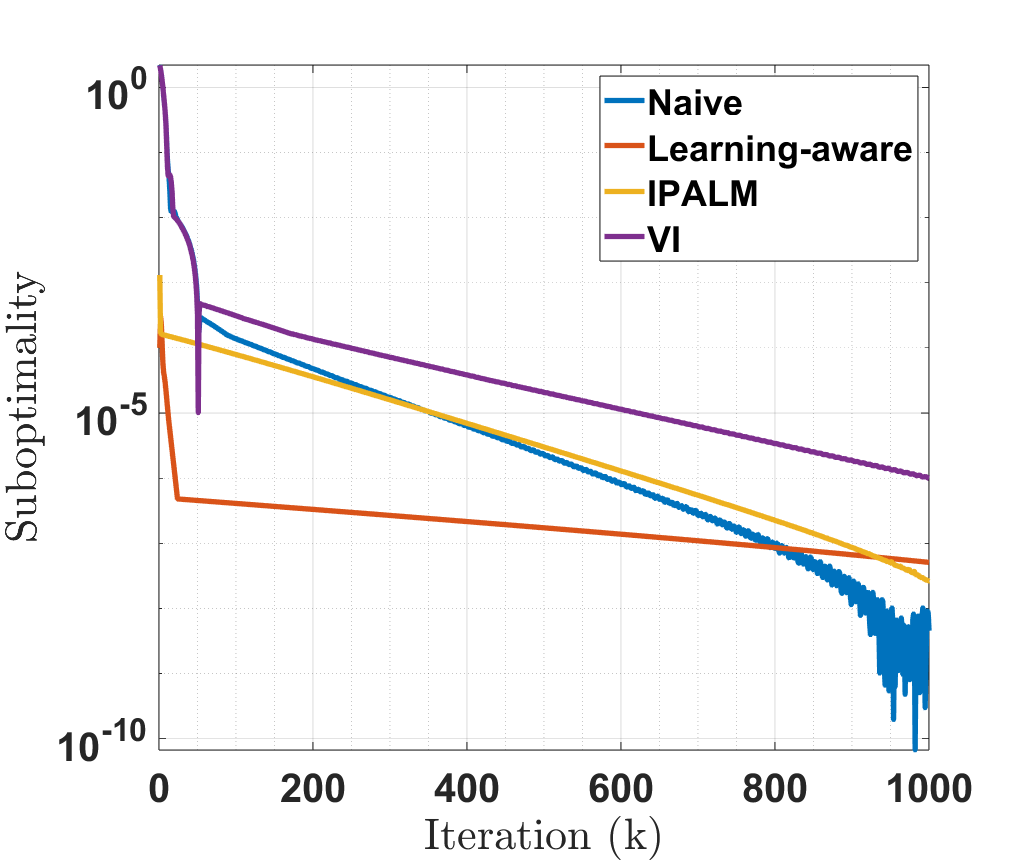}
      \label{fig:comb-II-a}
    \end{subfigure}\hfill
    \begin{subfigure}[t]{0.32\textwidth}
      \centering
      \includegraphics[width=\linewidth,height=\subfigheight,keepaspectratio]{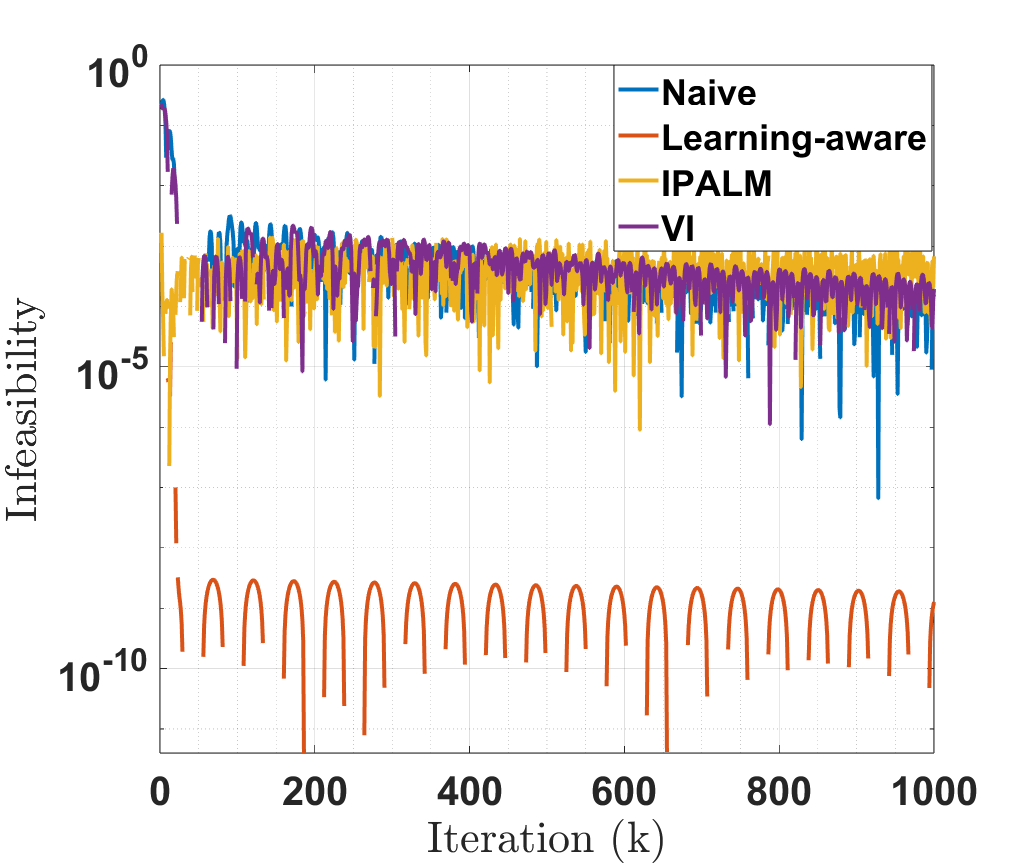}
      \label{fig:comb-II-b}
    \end{subfigure}\hfill
    \begin{subfigure}[t]{0.32\textwidth}
      \centering
      \includegraphics[width=\linewidth,height=\subfigheight,keepaspectratio]{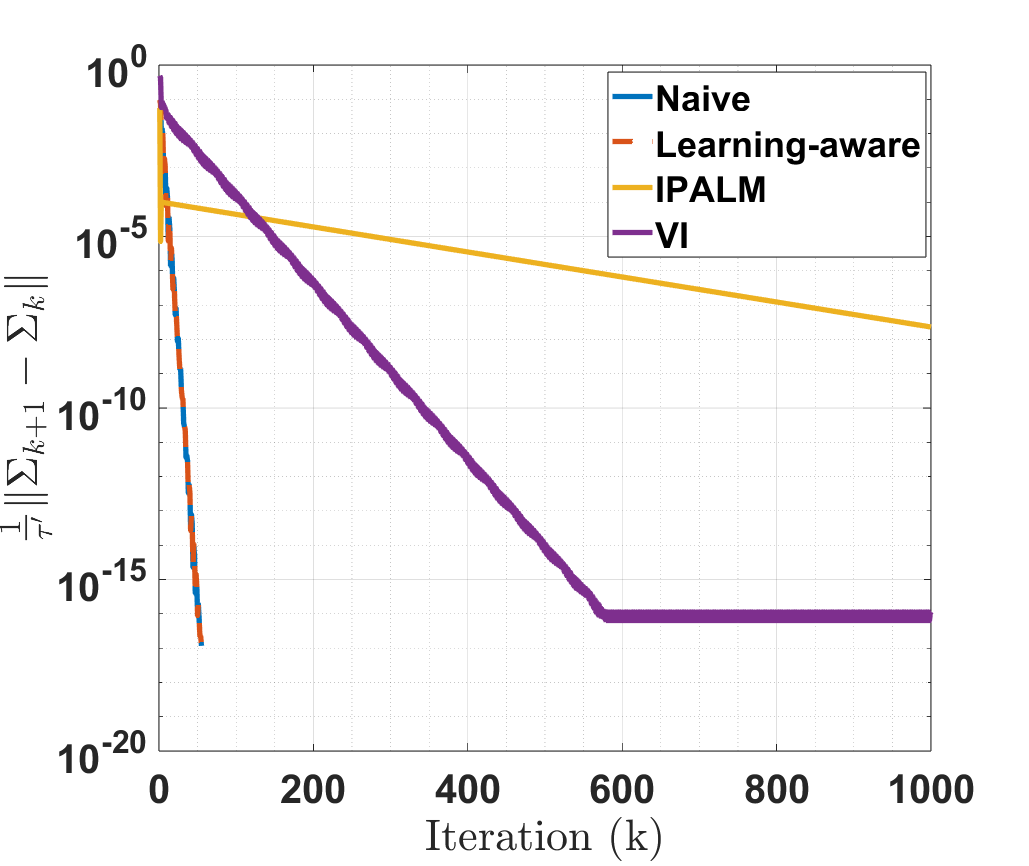}
      \label{fig:comb-II-c}
    \end{subfigure}
  \end{minipage}

  \vspace{1mm}

  \begin{minipage}{\textwidth}
    \centering
    \textbf{Synthetic data}

    \begin{subfigure}[t]{0.32\textwidth}
      \centering
      \includegraphics[width=\linewidth,height=\subfigheight,keepaspectratio]{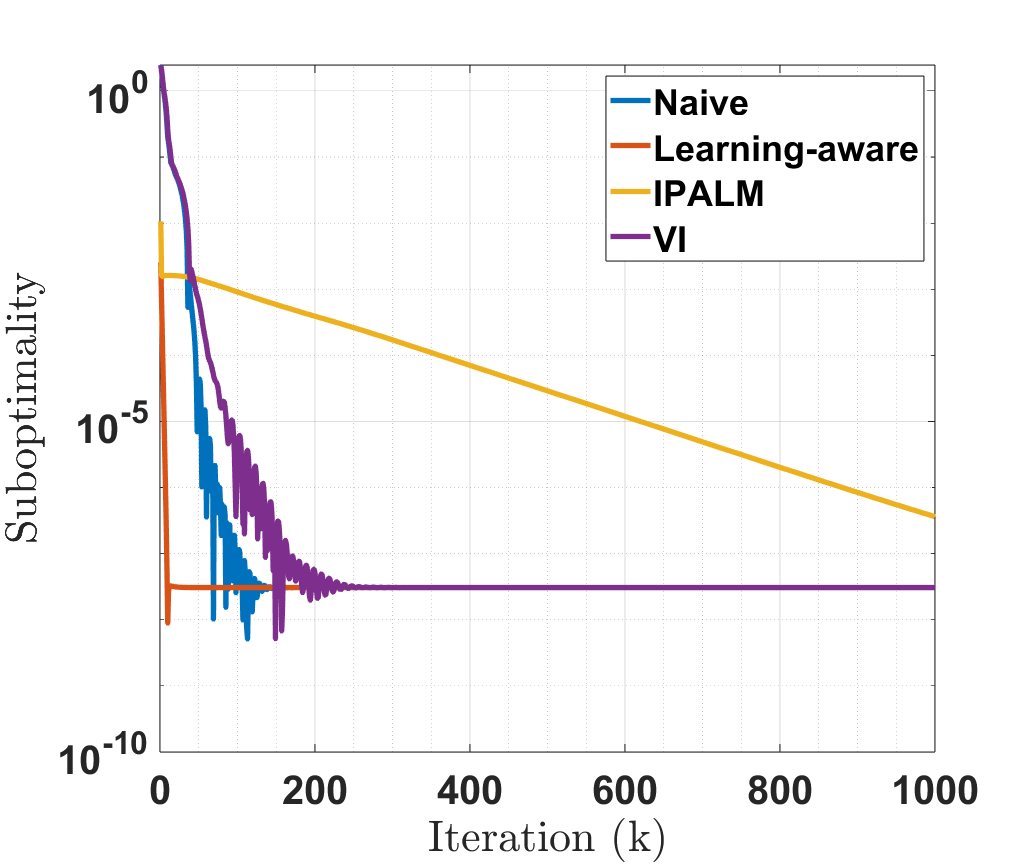}
      \caption{}
      \label{fig:comb-III-a}
    \end{subfigure}\hfill
    \begin{subfigure}[t]{0.32\textwidth}
      \centering
      \includegraphics[width=\linewidth,height=\subfigheight,keepaspectratio]{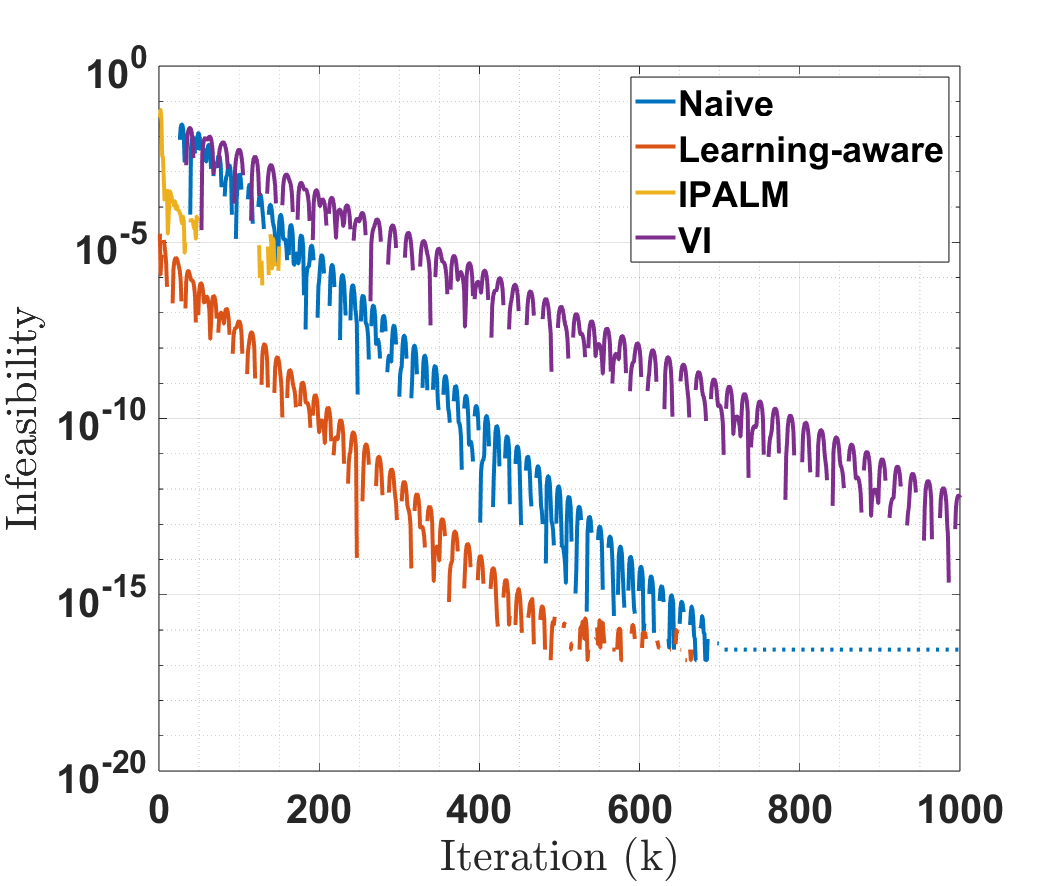}
      \caption{}
      \label{fig:comb-III-b}
    \end{subfigure}\hfill
    \begin{subfigure}[t]{0.32\textwidth}
      \centering
      \includegraphics[width=\linewidth,height=\subfigheight,keepaspectratio]{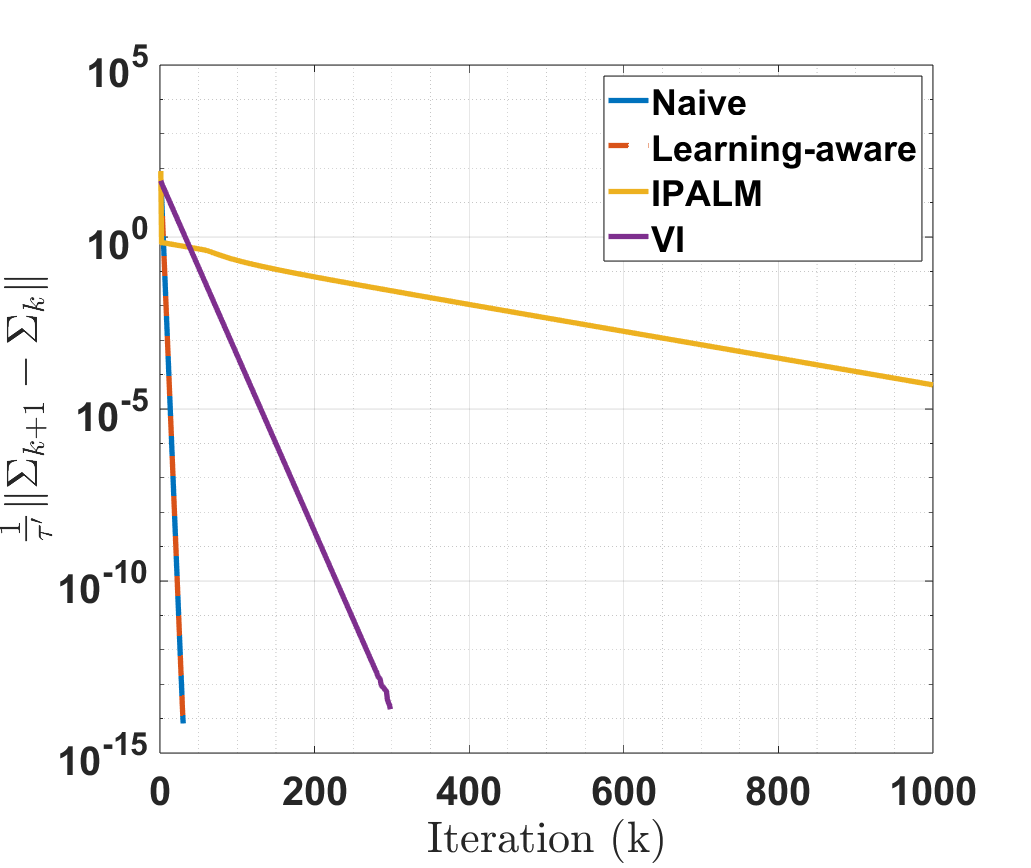}
      \caption{}
      \label{fig:comb-III-c}
    \end{subfigure}
  \end{minipage}

  \caption{Comparison of our proposed Naive (blue) and Learning-aware (red) with IPALM (yellow) and VI (purple) across three datasets and three metrics over iteration $k$. 
  From top to bottom: NASDAQ-100, Dow Jones, and Synthetic datasets.
  From left to right, the figures correspond to suboptimality, infeasibility, and learning solution metrics.}
  \label{fig:combined-3x3}
\end{figure}

\section{{Conclusion}}
In this paper, we studied a class of misspecified saddle point (SP) problems, where the optimization objective depends on an unknown parameter that must be simultaneously learned from data. Unlike prior works that assume fully known or pre-estimated parameters, our framework couples the optimization and learning processes within a unified saddle point formulation.  

We proposed two accelerated primal-dual (APD) algorithms for solving this problem. The first method, Naive APD, directly substitutes the current parameter estimate into the primal-dual updates. While it achieves a convergence rate of $\mathcal{O}(\log K / K)$, its dependence on large Lipschitz and diameter constants may reduce practical performance. To address this limitation, we introduced the Learning-aware APD algorithm, which explicitly incorporates the dynamics of the evolving parameter into the acceleration step and employs an adaptive backtracking line search for step-size selection. This design preserves the same $\mathcal{O}(\log K / K)$ rate but with a smaller $\mathcal O(1)$ constant. 

Furthermore, we extended our framework to the setting where the learning subproblem admits multiple optimal solutions. By considering a structured min-max and adopting a pessimistic formulation, we reformulated the problem as a misspecified saddle point model and developed a modified version of the Learning-aware APD algorithm to address it. This extension achieves a convergence rate of $\mathcal{O}(1/\sqrt{K})$, highlighting the robustness and flexibility of our approach in handling more complex learning-optimization interactions.  

Overall, our contributions provide new algorithmic frameworks and theoretical guarantees for jointly solving optimization and learning problems under parameter misspecification, representing a novel direction in the study of saddle point methods.


\appendix

\section{Supporting Lemmas}\label{sec:supporting-lemmas}

In Lemma \ref{lem:f-bound}, we restate Lemma 7.1 in \cite{hamedani2021primal} along with its proof for the sake of completeness.
\begin{lemma}\label{lem:f-bound}

    Let $\mathcal{X}$ be a finite-dimensional normed space equipped with norm $\|\cdot\|_{\mathcal{X}}$. Consider a closed convex function $f: \mathcal{X} \rightarrow \mathbb{R} \cup \{+\infty\}$ that is $\mu$-strongly convex with respect to $\|\cdot\|_{\mathcal{X}}$, for some $\mu \geq 0$. Let $\mathbf{D}: \mathcal{X} \times \mathcal{X} \to \mathbb{R}_+$ denote the Bregman distance associated with a strictly convex and differentiable function $\phi: \mathcal{X} \rightarrow \mathbb{R}$, where $\phi$ is differentiable on an open domain containing $\textbf{dom}\,f$. For any $\overline{x} \in \textbf{dom}\,f$ and $t > 0$, define
\begin{align}\label{eq:prox-breg-update}
    x^{+} = \argmin_{x \in \mathcal{X}} \left\{ f(x) + t \mathbf{D}(x, \overline{x}) \right\}.
\end{align}
Then, for any $x \in \mathcal{X}$, the following inequality holds:
\begin{align}\label{eq:prox-breg-bound}
    f(x) + t \mathbf{D}(x, \overline{x}) \geq f(x^{+}) + t \mathbf{D}(x^{+}, \overline{x}) + t \mathbf{D}(x, x^{+}) + \frac{\mu}{2} \|x - x^{+}\|_{\mathcal{X}}^2.
\end{align}

\begin{proof}
    See Lemma 7.1 in \cite{hamedani2021primal}.
\end{proof}
\end{lemma}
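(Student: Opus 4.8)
The plan is to combine the first-order optimality condition for the proximal–Bregman update with the $\mu$-strong convexity of $f$ and a three-point identity for Bregman distances. First I would verify that the minimizer $x^{+}$ is well-defined: since $f$ is closed convex and $\mathbf{D}(\cdot,\bar x)=\phi(\cdot)-\phi(\bar x)-\langle\nabla\phi(\bar x),\cdot-\bar x\rangle$ is strictly convex (as $\phi$ is strictly convex), the objective $F(x)\triangleq f(x)+t\mathbf{D}(x,\bar x)$ is strictly convex, so its minimizer is unique and lies in $\textbf{dom}\,f$, where $\nabla\phi$ exists. Writing the gradient of the Bregman term in its first argument as $\nabla\phi(x)-\nabla\phi(\bar x)$, the optimality condition $0\in\partial F(x^{+})$ produces a subgradient $g\in\partial f(x^{+})$ with
\[
g + t\big(\nabla\phi(x^{+})-\nabla\phi(\bar x)\big)=0 .
\]

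Next I would establish the three-point identity
\[
\mathbf{D}(x,\bar x)=\mathbf{D}(x,x^{+})+\mathbf{D}(x^{+},\bar x)+\big\langle \nabla\phi(x^{+})-\nabla\phi(\bar x),\,x-x^{+}\big\rangle,
\]
obtained by expanding each Bregman distance through its definition and cancelling the common $\phi$-values together with the $\nabla\phi(\bar x)$ terms. This identity isolates precisely the inner product that the optimality condition annihilates. I would then invoke $\mu$-strong convexity of $f$ at $x^{+}$ with the subgradient $g$ above:
\[
f(x)\ \ge\ f(x^{+})+\langle g,\,x-x^{+}\rangle+\tfrac{\mu}{2}\|x-x^{+}\|_{\mathcal X}^2,\qquad\forall\, x\in\mathcal X .
\]

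To finish, I would substitute the three-point identity into the target inequality and subtract its right-hand side from $f(x)+t\mathbf{D}(x,\bar x)$; after applying the strong-convexity bound, the quadratic terms $\tfrac{\mu}{2}\|x-x^{+}\|_{\mathcal X}^2$ cancel and the residual collapses to $\langle g+t(\nabla\phi(x^{+})-\nabla\phi(\bar x)),\,x-x^{+}\rangle$, which vanishes identically by the optimality condition. This delivers the claimed inequality for every $x\in\mathcal X$.

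The main obstacle is not the algebra but the regularity bookkeeping: rigorously justifying the additivity of subdifferentials so that $0\in\partial f(x^{+})+t\,\nabla_1\mathbf{D}(x^{+},\bar x)$ (which requires that the smooth Bregman term contribute a genuine gradient, or a standard constraint qualification at $x^{+}$), and confirming that $x^{+}$ lies where $\phi$ is differentiable so that $\nabla\phi(x^{+})$ is meaningful. Both are guaranteed here by the stated differentiability of $\phi$ on an open set containing $\textbf{dom}\,f$; once these points are secured, the remaining steps are direct substitutions.
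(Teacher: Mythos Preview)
Your argument is correct and is precisely the standard proof of this three-point inequality: optimality condition, three-point Bregman identity, and $\mu$-strong convexity, combined so that the residual inner product vanishes. The paper itself does not give a proof at all---it merely cites Lemma~7.1 of \cite{hamedani2021primal}---so your write-up is in fact more complete than what appears in the paper, and it matches the argument one would find in that reference.
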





\section{Proof of Theorem \ref{thm:naive-approach}}\label{AppndxB}
\begin{proof}
In this scenario, the parameter $\theta$ is updated after the variables $x,y$ are updated. Hence, we can define:
\begin{align*}
    &q_k \triangleq \nabla_y \Phi(x_k,y_k;\theta_k) - \nabla_y \Phi(x_{k-1},y_{k-1};\theta_{k}),\\
    &s_k \triangleq \nabla_y \Phi(x_k,y_k;\theta_k) + \eta_k q_k.
\end{align*}
In case of having the optimal parameter $\theta^*$, we can obtain the following bounds:
\begin{align*}
    \|q_k\|_{\mathcal Y^*} 
    & \leq \|\nabla_y \Phi(x_k,y_k;\theta_k) - \nabla_y \Phi(x_{k},y_{k};\theta^*) \|_{\mathcal Y^*} + \| \nabla_y \Phi(x_k,y_k;\theta^*)  \nonumber\\
    & \quad - \nabla_y \Phi(x_{k-1},y_{k-1};\theta^*)\|_{\mathcal Y^*}+\|\nabla_y \Phi(x_{k-1},y_{k-1};\theta^*) - \nabla_y \Phi(x_{k-1},y_{k-1};\theta_{k}) \|_{\mathcal Y^*} \nonumber\\
    & \leq \underbrace{\| \nabla_y \Phi(x_k,y_k;\theta^*) - \nabla_y \Phi(x_{k-1},y_{k-1};\theta^*)\|_{\mathcal Y^*}}_{\triangleq q^*_k} + 2 L^{\Phi}_{y\theta} \|\theta_k - \theta^*\|_{\Theta},
\end{align*}%
where the last inequality is obtained using the triangle inequality and Assumption \ref{assump:phi-lip}.\\

Based on the $\theta$ updating step in the Naive Approach in Algorithm \ref{alg:naive-mis-sp}, and Lemma \ref{lem:f-bound} for the $y$- and $x$-subproblems in Algorithm \ref{alg:naive-mis-sp}, we obtain the following two inequalities that hold for any $y \in \mathcal{Y}$ and $x \in \mathcal{X}$:
\begin{align}
    h(y_{k+1}) & - \langle s_k, y_{k+1}-y \rangle \nonumber\\
    &\leq h(y) + \frac{1}{\sigma_k}\Big[\mathbf D_{\mathcal{Y}}(y,y_k) - \mathbf D_{\mathcal{Y}}(y,y_{k+1}) - \mathbf D_{\mathcal{Y}}(y_{k+1},y_k)\Big] \label{eq:h-bound-NA},\\
    f(x_{k+1}) & + \langle \nabla_x \Phi(x_k,y_{k+1};\theta_{k}),x_{k+1}-x \rangle \nonumber\\
    & \leq f(x) + \frac{1}{\tau_k}\Big[\mathbf D_{\mathcal{X}}(x,x_k)-\mathbf D_{\mathcal{X}}(x,x_{k+1})-\mathbf D_{\mathcal{X}}(x_{k+1},x_k)\Big]\label{eq:f-bound-NA}.
\end{align}
For all $k \geq 0$, let $A_{k+1} \triangleq \frac{1}{\sigma_k}\Big[\mathbf D_{\mathcal{Y}}(y,y_k) - \mathbf D_{\mathcal{Y}}(y,y_{k+1}) - \mathbf D_{\mathcal{Y}}(y_{k+1},y_k)\Big]$ and $B_{k+1} \triangleq \frac{1}{\tau_k}\Big[\mathbf D_{\mathcal{X}}(x,x_k)-\mathbf D_{\mathcal{X}}(x,x_{k+1})-\mathbf D_{\mathcal{X}}(x_{k+1},x_k)\Big]$. In order to see the effect of replacing $\theta^*$ with $\theta_k$ in \eqref{eq:h-bound-NA}, we need to find a bound for $\langle s_k,y_{k+1}-y \rangle$. Suppose $s^*_k \triangleq \nabla_y \Phi(x_k,y_k;\theta^*) + \eta_k q^*_k$, then
\begin{align*}
    \langle s_k,y_{k+1}-y \rangle &= \langle s_k + s^*_k - s^*_k,y_{k+1}-y \rangle\nonumber\\
    & = \langle s^*_k,y_{k+1}-y \rangle + \langle s_k - s^*_k, y_{k+1}-y \rangle.
\end{align*}
Using Cauchy-Schwarz inequality and Assumption \ref{assump:phi-lip}, we can obtain
\begin{align}\label{eq:inner-sk}
    &\langle s_k,y_{k+1}-y \rangle \nonumber\\
    & \quad \leq \langle s^*_k,y_{k+1}-y \rangle + \|s_k - s^*_k \|_{\mathcal Y^*} \|y_{k+1}-y \|_{\mathcal Y}\nonumber\\
    &\quad \leq \langle s_k^*,y_{k+1}-y \rangle + \|\nabla_y \Phi(x_k,y_k;\theta_k) + \eta_k q_k - \nabla_y \Phi(x_k,y_k;\theta^*)- \eta_k q_k^* \|_{\mathcal Y^*} \|y_{k+1}-y \|_{\mathcal Y}  \nonumber\\
    &\quad\leq \langle s^*_k,y_{k+1}-y \rangle + \| \nabla_y \Phi(x_k,y_k;\theta_k) - \nabla_y \Phi(x_k,y_k;\theta^*) \nonumber\\
    & \qquad + \eta_k \Big( \nabla_y \Phi(x_k,y_k;\theta_k) - \nabla_y \Phi(x_k,y_k;\theta^*) + \nabla_y \Phi(x_{k-1},y_{k-1};\theta^*) \nonumber\\
    &\qquad - \nabla_y \Phi(x_{k-1},y_{k-1};\theta_k) \Big)\|_{\mathcal Y^*} \|y_{k+1}-y \|_{\mathcal Y}\nonumber\\
    &\quad \leq \langle s^*_k,y_{k+1}-y \rangle + \Big( \|\nabla_y \Phi(x_k,y_k;\theta_k) - \nabla_y \Phi(x_k,y_k;\theta^*) \|_{\mathcal Y^*} + \eta_k \|\nabla_y \Phi(x_k,y_k;\theta_k) \nonumber\\
    & \qquad - \nabla_y \Phi(x_k,y_k;\theta^*) \|_{\mathcal Y^*} + \eta_k \|\nabla_y \Phi(x_{k-1},y_{k-1};\theta^*) \nonumber\\
    & \qquad- \nabla_y \Phi(x_{k-1},y_{k-1};\theta_k) \|_{\mathcal Y^*} \Big) \|y_{k+1}-y \|_{\mathcal Y}\nonumber\\
    &\quad \leq  \Big(L^{\Phi}_{y\theta} \|\theta_k - \theta^*\|_{\Theta} + \eta_k L^{\Phi}_{y\theta} \|\theta_k - \theta^*\|_{\Theta}+ \eta_k L^{\Phi}_{y\theta} \|\theta_k - \theta^*\|_{\Theta} \Big) \|y_{k+1}-y \|_{\mathcal Y} \nonumber\\
    & \qquad + \langle s^*_k,y_{k+1}-y \rangle \nonumber\\
    &\quad\leq \langle s^*_k,y_{k+1}-y \rangle + \underbrace{\Big( (2\eta_k +1) L^{\Phi}_{y\theta} \|\theta_k - \theta^*\|_{\Theta} \Big)}_{\triangleq 
 \mathcal{E}_1^k} \|y_{k+1}-y \|_{\mathcal Y}.
\end{align}
Next, to find the upper-bound for $\langle \nabla_x \Phi(x_k,y_{k+1};\theta_{k}),x_{k+1}-x \rangle$ in \eqref{eq:f-bound-NA}, we have
\begin{align*}
    &\langle \nabla_x \Phi(x_k,y_{k+1};\theta_{k}),x_{k+1}-x \rangle \\
    & \quad= \langle \nabla_x \Phi(x_k,y_{k+1};\theta_{k}) + \nabla_x \Phi(x_k,y_{k+1};\theta^*)- \nabla_x \Phi(x_k,y_{k+1};\theta^*),x_{k+1}-x \rangle \nonumber\\
    &\quad = \langle \nabla_x \Phi(x_k,y_{k+1};\theta^*),x_{k+1}-x \rangle + \langle \nabla_x \Phi(x_k,y_{k+1};\theta_{k}) - \nabla_x \Phi(x_k,y_{k+1};\theta^*),x_{k+1}-x \rangle.
\end{align*}
Using Cauchy-Schwarz inequality and Assumption \ref{assump:phi-lip}, we can obtain
\begin{align}\label{eq:inner-grad-phi}
    &\langle \nabla_x \Phi(x_k,y_{k+1};\theta_{k}),x_{k+1}-x \rangle \nonumber\\
    & \quad \geq \langle \nabla_x \Phi(x_k,y_{k+1};\theta^*),x_{k+1}-x \rangle \nonumber\\
    & \qquad- \| \nabla_x \Phi(x_k,y_{k+1};\theta_{k}) - \nabla_x \Phi(x_k,y_{k+1};\theta^*)\|_{\mathcal X^*} \| x_{k+1}-x\|_{\mathcal X} \nonumber\\
    &\quad \geq \langle \nabla_x \Phi(x_k,y_{k+1};\theta^*),x_{k+1}-x \rangle - \underbrace{L^{\Phi}_{x\theta} \| \theta_k - \theta^*\|_{\Theta}}_{\triangleq \mathcal{E}_2^k} \| x_{k+1}-x\|_{\mathcal X}.
\end{align}
Now, based on the results obtained in \eqref{eq:inner-sk} and \eqref{eq:inner-grad-phi}, we will update \eqref{eq:h-bound-NA} and \eqref{eq:f-bound-NA} inequalities as
\begin{align}
    h(y_{k+1}) & - \langle s_k^*, y_{k+1}-y \rangle - \mathcal{E}_1^k \|y_{k+1} - y \|_{\mathcal Y} \nonumber\\
    &\leq h(y) + \frac{1}{\sigma_k}\Big[\mathbf D_{\mathcal{Y}}(y,y_k) - \mathbf D_{\mathcal{Y}}(y,y_{k+1}) - \mathbf D_{\mathcal{Y}}(y_{k+1},y_k)\Big] \label{eq:h-bound-Er},\\
    f(x_{k+1}) & + \langle \nabla_x \Phi(x_k,y_{k+1};\theta^*),x_{k+1}-x \rangle - \mathcal{E}_2^k\|x_{k+1}-x \|_{\mathcal X} \nonumber\\
    & \leq f(x) + \frac{1}{\tau_k}\Big[\mathbf D_{\mathcal{X}}(x,x_k)-\mathbf D_{\mathcal{X}}(x,x_{k+1})-\mathbf D_{\mathcal{X}}(x_{k+1},x_k)\Big]\label{eq:f-bound-Er}.
\end{align}
The inner product in \eqref{eq:f-bound-Er} can be lower bounded using the convexity of $\Phi(x,y_{k+1};\theta^*)$ in $x$ as
\begin{align}\label{eq:inner-eq-*}
    &\langle \nabla_x \Phi(x_k,y_{k+1};\theta^*),x_{k+1}-x \rangle \nonumber\\
    & \quad \geq \Phi(x_k,y_{k+1};\theta^*) - \Phi(x,y_{k+1};\theta^*)  + \langle \nabla_x \Phi(x_k,y_{k+1};\theta^*),x_{k+1}-x_k \rangle.
\end{align}
Using this inequality after adding $\Phi(x_{k+1},y_{k+1};\theta^*)$ to both sides of \eqref{eq:f-bound-Er}, we can get
\begin{align}\label{eq:f-bound-er}
    &f(x_{k+1}) + \Phi(x_{k+1},y_{k+1};\theta^*) \nonumber\\
    & \quad\leq f(x) + \Phi(x,y_{k+1};\theta^*) + \mathcal{E}_2^k \|x_{k+1}-x \|_{\mathcal X} + B_{k+1} + \Lambda_k^*,
\end{align}
where $\Lambda_k^* \triangleq \Phi(x_{k+1},y_{k+1};\theta^*) - \Phi(x_{k},y_{k+1};\theta^*) - \langle \nabla_x \Phi(x_k,y_{k+1};\theta^*),x_{k+1}-x_k \rangle$, for $k \geq 0$. For $k\geq 0$, summing \eqref{eq:h-bound-Er} and \eqref{eq:f-bound-er} and rearranging the terms lead to
\begin{align}\label{eq:L-bound-er}
    &\mathcal{L}(x_{k+1},y;\theta^*)-\mathcal{L}(x,y_{k+1};\theta^*) \nonumber\\
    & \quad= f(x_{k+1}) + \Phi(x_{k+1},y;\theta^*)-h(y)-f(x)-\Phi(x,y_{k+1};\theta^*)+h(y_{k+1})\nonumber\\
    &\quad \leq \Phi(x_{k+1},y;\theta^*) - \Phi(x_{k+1},y_{k+1};\theta^*)+ \langle s_k^*, y_{k+1}-y \rangle  + \mathcal{E}_1^k \| y_{k+1}-y\|_{\mathcal Y}\nonumber\\
    & \qquad  + \mathcal{E}_2^k \|x_{k+1}-x \|_{\mathcal X}+ \Lambda_k^* + A_{k+1} + B_{k+1}\nonumber\\
    &\quad \leq - \langle q^*_{k+1},y_{k+1}-y \rangle + \eta_k \langle q^*_k,y_{k+1}-y \rangle + \mathcal{E}_1^k \| y_{k+1}-y\|_{\mathcal Y}\nonumber\\
    & \qquad  + \mathcal{E}_2^k \|x_{k+1}-x \|_{\mathcal X} + \Lambda_k^* + A_{k+1} + B_{k+1},
\end{align}
where in the last inequality we use the concavity of $\Phi(x_{k+1},\cdot;\theta^*)$.\\

To obtain a telescoping sum later, by adding and subtracting $\eta_k q_ky_k$, we can rewrite the bound in \eqref{eq:L-bound-er} as
\begin{align}\label{eq:L-bound-er1}
    &\mathcal{L}(x_{k+1},y;\theta^*)-\mathcal{L}(x,y_{k+1};\theta^*) \nonumber\\
    &\quad \leq \Big[\frac{1}{\tau_k}\mathbf D_{\mathcal{X}}(x,x_k) + \frac{1}{\sigma_k}\mathbf D_{\mathcal{Y}}(y,y_k) + \eta_k \langle q^*_k,y_k-y \rangle \Big]\nonumber\\
    & \qquad - \Big[\frac{1}{\tau_k}\mathbf D_{\mathcal{X}}(x,x_{k+1}) + \frac{1}{\sigma_k}\mathbf D_{\mathcal{Y}}(y,y_{k+1})+ \langle q^*_{k+1},y_{k+1}-y \rangle \Big] \nonumber\\
    & \qquad  + \Lambda_k^* + \mathcal{E}_1^k \| y_{k+1}-y\|_{\mathcal Y} + \mathcal{E}_2^k \|x_{k+1}-x \|_{\mathcal X} \nonumber\\
    & \qquad - \frac{1}{\tau_k}\mathbf D_{\mathcal{X}}(x_{k+1},x_k) - \frac{1}{\sigma_k}\mathbf D_{\mathcal{Y}}(y_{k+1},y_k) + \eta_k \langle q^*_k, y_{k+1} - y_k \rangle.
\end{align}
To bound $\langle q^*_k,y-y_k \rangle$ for any given $y \in \mathcal{Y}$, Let define $p^{*x}_k \triangleq \nabla_y \Phi(x_k,y_k;\theta^*) - \nabla_y \Phi(x_{k-1},y_k;\theta^*)$ and $p^{*y}_k \triangleq \nabla_y \Phi(x_{k-1},y_k;\theta^*) - \nabla_y \Phi(x_{k-1},y_{k-1};\theta^*)$ which imply that $q^*_k = p^{*x}_k + p^{*y}_k$. Moreover, for any $y\in \mathcal{Y}$, $y'\in \mathcal{Y}^*$, and $a>0$, we have $\left|\langle y',y \rangle\right| \leq \frac{a}{2} \|y\|^2_{\mathcal{Y}} + \frac{1}{2a} \|y'\|^2_{\mathcal{Y}^*}$. Therefore, we can obtain for all $k\geq0$:
\begin{align}\label{eq:p*xyk}
    \left|\langle q^*_k,y-y_k \rangle\right| &\leq \alpha_k \mathbf D_{\mathcal{Y}}(y,y_k) + \frac{1}{2\alpha_k}\|p^{*x}_k\|^2_{\mathcal{Y}^*} + \beta_k \mathbf D_{\mathcal{Y}}(y,y_k) + \frac{1}{2\beta_k}\|p^{*y}_k\|^2_{\mathcal{Y}^*},
\end{align}
which holds for any $\alpha_k, \beta_k >0$. Therefore, using \eqref{eq:p*xyk} within \eqref{eq:L-bound-er1}, we can show for $k\geq0$
\begin{subequations}
\begin{align}
    &\mathcal{L}(x_{k+1},y;\theta^*)-\mathcal{L}(x,y_{k+1};\theta^*)\nonumber\\
    & \quad\leq \mathcal{E}_1^k \| y_{k+1}-y\|_{\mathcal Y} + \mathcal{E}_2^k \|x_{k+1}-x \|_{\mathcal X} + Q^*_k(z) - R^*_{k+1}(z) + E^*_k \label{eq:L-bound-naive},\\
    &Q^*_k(z) \triangleq \frac{1}{\tau_k}\mathbf D_{\mathcal{X}}(x,x_k) + \frac{1}{\sigma_k}\mathbf D_{\mathcal{Y}}(y,y_k) + \eta_k \langle q^*_k,y_k-y \rangle + \frac{\eta_k}{2\alpha_k}\|p^{*x}_k\|^2_{\mathcal{Y}^*} \nonumber\\
    &\qquad \qquad+ \frac{\eta_k}{2\beta_k}\|p^{*y}_k\|^2_{\mathcal{Y}^*},\\
    &R^*_{k+1}(z) \triangleq \frac{1}{\tau_k}\mathbf D_{\mathcal{X}}(x,x_{k+1}) + \frac{1}{\sigma_k}\mathbf D_{\mathcal{Y}}(y,y_{k+1}) + \langle q^*_{k+1},y_{k+1}-y \rangle + \frac{1}{2\alpha_{k+1}}\|p^{*x}_{k+1}\|^2_{\mathcal{Y}^*}\nonumber\\
    & \qquad \qquad  + \frac{1}{2\beta_{k+1}}\|p^{*y}_{k+1}\|^2_{\mathcal{Y}^*},\\
    & E^*_k \triangleq \Lambda_k^* + \frac{1}{2\alpha_{k+1}}\|p^{*x}_{k+1}\|^2_{\mathcal{Y}^*} - \frac{1}{\tau_k}\mathbf D_{\mathcal{X}}(x_{k+1},x_k) + \frac{1}{2\beta_{k+1}}\|p^{*y}_{k+1}\|^2_{\mathcal{Y}^*} \nonumber\\
    & \qquad \qquad - \Big(\frac{1}{\sigma_k} - \eta_k(\alpha_k + \beta_k) \Big)\mathbf D_{\mathcal{Y}}(y_{k+1},y_k),\label{eq:L-bound-naive-d}
\end{align}

\end{subequations}
where all the derivations in \eqref{eq:L-bound-naive}-\eqref{eq:L-bound-naive-d} hold for any Bregman distance functions $\mathbf D_{\mathcal{X}}$ and $\mathbf D_{\mathcal{Y}}$.\\
By multiplying both sides by $t_k>0$, summing over $k=0$ to $K-1$, and then using Jensen's inequality, we obtain
\begin{align}\label{eq:sum-bound-naive}
    &T_K (\mathcal{L}(\overline{x}_K,y;\theta^*) - \mathcal{L}(x,\overline{y}_K;\theta^*)) \nonumber\\
    &\quad\leq \sum_{k=0}^{K-1} t_k \Big(Q^*_k(z) - R^*_{k+1}(z) + E^*_k + \mathcal{E}_1^k \| y_{k+1}-y\|_{\mathcal Y} + \mathcal{E}_2^k \|x_{k+1}-x \|_{\mathcal X} \Big),\nonumber\\
    &\quad\leq t_0Q^*_0(z) - t_{K-1}R^*_K(z) + \sum_{k=0}^{K-1}t_kE^*_k + \sum_{k=0}^{K-1}t_k \mathcal{E}_1^k \| y_{k+1}-y\|_{\mathcal Y} \nonumber\\
    &\qquad + \sum_{k=0}^{K-1} t_k \mathcal{E}_2^k \|x_{k+1}-x \|_{\mathcal X},
\end{align}
where $T_K = \sum_{k=0}^{K-1}t_k$ and the last inequality follows from the step-size conditions in Assumption \ref{assump:step-size-cond}-(ii), which imply that $t_{k+1}Q_{k+1}^*(z) - t_kR_{k+1}^*(z)\leq0$ for $k=0$ to $K-2$.\\

\ma{By descent lemma, using Assumption \ref{assump:phi-lip}, and Definition \ref{def:Breg-dis}, for any $k\geq 0$, and $(x,y) \in \mathcal{X} \times \mathcal{Y}$, we obtain
    \begin{align*}
        \Phi(x,y;\theta^*)-\Phi(x_k,y;\theta^*)-\langle \nabla_x \Phi(x_k,y;\theta^*),x-x_k \rangle &\leq \frac{L^{\Phi}_{xx}}{2}\|x-x_k\|^2_{\mathcal{X}} \leq L^{\Phi}_{xx}\mathbf D_{\mathcal{X}}(x,x_k),\\
        \frac{1}{2}\|\nabla_y \Phi(x,y;\theta^*)-\nabla_y \Phi(x_k,y;\theta^*)\|^2_{\mathcal{Y}^*}&\leq \frac{(L^{\Phi}_{yx})^2}{2}\|x-x_k\|^2_{\mathcal{X}}\leq (L^{\Phi}_{yx})^2 \mathbf D_{\mathcal{X}}(x,x_k),\\
        \frac{1}{2}\| \nabla_y \Phi(x_k,y;\theta^*) - \nabla_y \Phi(x_k,y_k;\theta^*)\|^2_{\mathcal{Y}^*}&\leq \frac{(L^{\Phi}_{yy})^2}{2}\|y-y_k\|^2_{\mathcal{Y}}\leq (L^{\Phi}_{yy})^2\mathbf D_{\mathcal{Y}}(y,y_k).
        \end{align*}}
        
    \ma{By evaluating the above inequality at $(x,y;\theta)=(x_{k+1},y_{k+1};\theta^*)$, and using Assumption \ref{assump:step-size-cond2} along with the nonnegativity of Bregman functions, we can show
    \begin{align*}
        E_k^* \triangleq E_k^*(x_{k+1},y_{k+1};\theta^*)&\leq \big(\frac{(L^{\Phi}_{yx})^2}{\alpha_{k+1}}+L^{\Phi}_{xx}-\frac{1}{\tau_k} \big)\mathbf D_{\mathcal{X}}(x_{k+1},x_k)\\
        &\quad + \big(\frac{(L^{\Phi}_{yy})^2}{\beta_{k+1}}+\eta_k(\alpha_k+\beta_k)-\frac{1}{\sigma_k} \big)\mathbf D_{\mathcal{Y}}(y_{k+1},y_k).
    \end{align*}}
Therefore, according to Assumption \ref{assump:step-size-cond2}, $\tau_k,\sigma_k$ and $\eta_k$ are chosen such that $E_k^* \leq0$ for $k=0,...,K-1$; then \eqref{eq:sum-bound-naive} implies that
\begin{align}\label{eq:sum-bound-naive2}
    &T_K (\mathcal{L}(\overline{x}_K,y;\theta^*) - \mathcal{L}(x,\overline{y}_K;\theta^*)) \nonumber\\
    &\quad\leq t_0Q^*_0(z) - t_{K-1}R^*_K(z) + \sum_{k=0}^{K-1} t_k \mathcal{E}_1^k \| y_{k+1}-y\|_{\mathcal Y} + \sum_{k=0}^{K-1} t_k \mathcal{E}_2^k \|x_{k+1}-x \|_{\mathcal X},\nonumber\\
    &\quad\leq \frac{t_0}{\tau_0} \mathbf D_{\mathcal{X}}(x,x_0) + \frac{t_0}{\sigma_0} \mathbf D_{\mathcal{Y}}(y,y_0) + t_K \eta_K \langle q^*_K,y-y_K\rangle \nonumber\\
    & \qquad - t_K \big[\frac{1}{\tau_K} \mathbf D_{\mathcal{X}}(x,x_K) + \frac{1}{\sigma_K} \mathbf D_{\mathcal{Y}}(y,y_K) + \frac{\eta_K}{2\alpha_K}\|p^{*x}_K \|^2_{\mathcal{Y}^*} \nonumber\\
    &\qquad + \frac{\eta_K}{2\beta_K}\|p^{*y}_K \|^2_{\mathcal{Y}^*}\big] + \sum_{k=0}^{K-1} t_k \mathcal{E}_1^k \| y_{k+1}-y\|_{\mathcal Y} + \sum_{k=0}^{K-1} t_k \mathcal{E}_2^k \|x_{k+1}-x \|_{\mathcal X},
\end{align}
where in the last inequality we use $t_KQ_K^*(z) \leq t_{K-1}R_K^*(z)$ and $q_0^* = p^{*x}_0 = p^{*y}_0 = \bold{0}$. By using \eqref{eq:p*xyk} to bound $\langle q_K^*,y-y_K\rangle$   and dividing both sides of \eqref{eq:sum-bound-naive2} by $T_K$, we can get
\begin{align}\label{eq:naive-bound}
    &\mathcal{L}(\overline{x}_K,y;\theta^*) - \mathcal{L}(x,\overline{y}_K;\theta^*) \nonumber\\
    &\quad\leq \frac{1}{T_K} \Big(\frac{1}{\tau_0}\mathbf D_{\mathcal{X}}(x,x_0) + \frac{1}{\sigma_0}\mathbf D_{\mathcal{Y}}(y,y_0) \Big) - \frac{t_K}{T_K}\Big[\frac{1}{\tau_K}\mathbf D_{\mathcal{X}}(x,x_K)  + \Big(\frac{1}{\sigma_K}  \nonumber\\
    &\qquad -\eta_K(\alpha_K+ \beta_K) \Big)\mathbf D_{\mathcal{Y}}(y,y_K) \Big]+ \frac{1}{T_K} \Big(\sum_{k=0}^{K-1} t_k \mathcal{E}_1^k \| y_{k+1}-y\|_{\mathcal Y} \nonumber\\
    & \qquad+ \sum_{k=0}^{K-1} t_k \mathcal{E}_2^k \|x_{k+1}-x \|_{\mathcal X} \Big).
\end{align}
By applying $sup$ to both sides of the inequality \eqref{eq:naive-bound} and using Definition \ref{def:gap-function}, we obtain the desired result in \eqref{eq:naive-sup}.
\end{proof}

\section{Proof of Theorem \ref{thm:ext-multsol}}\label{AppndxC}
\begin{proof}
    Based on Algorithm \ref{alg:ext-multisol}, we can define the following:
\begin{align*}
    &q^y_k \triangleq \nabla_y g_2(x_k,y_k) - \nabla_y g_2(x_{k-1},y_{k-1}),\\
    &s^y_k \triangleq \nabla_y g_2(x_k,y_k) + \eta_k q^y_k,\\
    &q^{\theta}_k \triangleq \left( \nabla_{\theta} g_1(x_k,\theta_k) -w_k \nabla \ell(\theta_k) \right) - \left( \nabla_{\theta} g_1(x_{k-1},\theta_{k-1})-w_{k-1} \nabla \ell(\theta_{k-1}) \right),\\
    &s^{\theta}_k \triangleq \left( \nabla_{\theta} g_1(x_k,\theta_k) -w_k \nabla \ell(\theta_k) \right) + \eta_k q^{\theta}_k,
\end{align*}
where $q^y_k$ and $q^{\theta}_k$ are the momentum parameters. For $k \geq  0$, using Lemma \ref{lem:f-bound} for the $y$-, $x$-, $\theta$- and $w$- subproblems in Algorithm \ref{alg:ext-multisol}, we get the following two inequalities that hold for any $y \in \mathcal{Y}$, $x \in \mathcal{X}$, $\theta \in \Theta$, and $w \in W$:
\begin{align}
    &h(y_{k+1}) - \langle s^y_k, y_{k+1}-y \rangle \nonumber\\
    & \qquad\leq h(y) + \frac{1}{\sigma_k}\Big[\mathbf D_{\mathcal{Y}}(y,y_k) - \mathbf D_{\mathcal{Y}}(y,y_{k+1}) - \mathbf D_{\mathcal{Y}}(y_{k+1},y_k)\Big] \label{eq:h-bound_ext},\\
    &f(x_{k+1}) + \langle \nabla_x g_1(x_k,\theta_{k+1})+\nabla_x g_2(x_k,y_{k+1}),x_{k+1}-x \rangle \nonumber\\
    &\qquad \leq f(x) + \frac{1}{\tau_k}\Big[\mathbf D_{\mathcal{X}}(x,x_k)-\mathbf D_{\mathcal{X}}(x,x_{k+1})-\mathbf D_{\mathcal{X}}(x_{k+1},x_k)\Big]\label{eq:f-bound_ext},\\
    &-\langle s^{\theta}_k, \theta_{k+1}-\theta \rangle \nonumber\\
    & \qquad \leq \frac{1}{\sigma_k}\Big[\mathbf D_{\Theta}(\theta,\theta_k) - \mathbf D_{\Theta}(\theta,\theta_{k+1}) - \mathbf D_{\Theta}(\theta_{k+1},\theta_k)\Big] \label{eq:theta-bound_ext},\\
    &-\langle \ell(\theta_{k+1})-\ell_{k+1}-\epsilon, w_{k+1}-w \rangle\nonumber\\
    &\qquad \leq \frac{1}{\tau_k}\Big[\mathbf D_{W}(w,w_k)-\mathbf D_{W}(w,w_{k+1})-\mathbf D_{W}(w_{k+1},w_k)\Big]\label{eq:w-bound_ext}.
\end{align}
For all $k \geq 0$, let $a_{k+1} \triangleq \frac{1}{\sigma_k}\Big[\mathbf D_{\mathcal{Y}}(y,y_k) - \mathbf D_{\mathcal{Y}}(y,y_{k+1}) - \mathbf D_{\mathcal{Y}}(y_{k+1},y_k)\Big]$, $b_{k+1} \triangleq \frac{1}{\tau_k}\Big[\mathbf D_{\mathcal{X}}(x,x_k)-\mathbf D_{\mathcal{X}}(x,x_{k+1})-\mathbf D_{\mathcal{X}}(x_{k+1},x_k)\Big]$, $c_{k+1} \triangleq \frac{1}{\sigma_k}\Big[\mathbf D_{\Theta}(\theta,\theta_k) - \mathbf D_{\Theta}(\theta,\theta_{k+1}) - \mathbf D_{\Theta}(\theta_{k+1},\theta_k)\Big]$ and $d_{k+1} \triangleq \frac{1}{\tau_k}\Big[\mathbf D_{W}(w,w_k)-\mathbf D_{W}(w,w_{k+1})-\mathbf D_{W}(w_{k+1},w_k)\Big]$. 
By adding and subtracting $\nabla_x g_1(x_k, \theta_{k+1})x_k$ and $\nabla_x g_2(x_k, y_{k+1})x_k$, the inner product in \eqref{eq:f-bound_ext} can be lower bounded using convexity of $g_1(x_k,\theta_{k+1})$ and $g_2(x_k,y_{k+1})$ as follows:
\begin{align}\label{eq:inner-eq-ext}
    &\langle \nabla_x g_1(x_k,\theta_{k+1})+\nabla_x g_2(x_k,y_{k+1}),x_{k+1}-x \rangle \nonumber\\
    & \quad= \langle \nabla_x g_1(x_k,\theta_{k+1})+\nabla_x g_2(x_k,y_{k+1}),x_{k}-x \rangle \nonumber\\
    & \qquad+ \langle \nabla_x g_1(x_k,\theta_{k+1})+\nabla_x g_2(x_k,y_{k+1}),x_{k+1}-x_k \rangle,\nonumber\\
    &\quad \geq g_1(x_k,\theta_{k+1})-g_1(x,\theta_{k+1}) + g_2(x_k,y_{k+1})-g_2(x,y_{k+1}) \nonumber\\
    & \qquad+ \langle \nabla_x g_1(x_k,\theta_{k+1})+\nabla_x g_2(x_k,y_{k+1}),x_{k+1}-x_k \rangle.
\end{align}
Using this inequality after adding $g_1(x_{k+1},\theta_{k+1})$ and $g_2(x_{k+1},y_{k+1})$ to both sides of \eqref{eq:f-bound_ext}, we can get
\begin{align}\label{eq:f-bound2-ext}
    &f(x_{k+1}) + g_1(x_{k+1},\theta_{k+1}) +g_2(x_{k+1},y_{k+1})\nonumber\\
    & \quad \leq f(x) + g_1(x,\theta_{k+1}) + g_2(x,y_{k+1}) + b_{k+1} + \bar \Lambda_k,
\end{align}
where $\bar \Lambda_k \triangleq g_1(x_{k+1},\theta_{k+1})+g_2(x_{k+1},y_{k+1})-\left(g_1(x_k,\theta_{k+1}) + g_2(x_k,y_{k+1}) \right)-\langle \nabla_x g_1(x_k,\theta_{k+1})+\nabla_x g_2(x_k,y_{k+1}),x_{k+1}-x_k\rangle$, for $k \geq 0$. For $k\geq 0$, summing \eqref{eq:h-bound_ext}, \eqref{eq:theta-bound_ext}, \eqref{eq:w-bound_ext} and \eqref{eq:f-bound2-ext} and rearranging the terms lead to
\begin{align}\label{eq:L-bound1-ext}
    &\mathcal{L}(x_{k+1},w_{k+1},y,\theta;\ell_{k+1})-\mathcal{L}(x,w,y_{k+1},\theta_{k+1};\ell_{k+1})\nonumber\\
    & \quad =f(x_{k+1})+g_1(x_{k+1},\theta) + g_2(x_{k+1},y)-\langle w_{k+1},\ell(\theta)-\ell_{k+1}-\epsilon \rangle - h(y)\nonumber\\
    & \qquad -f(x) -g_1(x,\theta_{k+1})-g_2(x,y_{k+1})+\langle w, \ell(\theta_{k+1})-\ell_{k+1}-\epsilon \rangle +h(y_{k+1})\nonumber\\
    &\quad \leq -\langle q^y_{k+1}, y_{k+1}-y \rangle + \eta_k \langle q^y_k, y_{k+1}-y \rangle -\langle q^{\theta}_{k+1}, \theta_{k+1}-\theta \rangle + \eta_k \langle q^{\theta}_k, \theta_{k+1}-\theta \rangle\nonumber\\
    &\qquad + \bar \Lambda_k + a_{k+1} + b_{k+1} + c_{k+1} + d_{k+1},
\end{align}
where in the last inequality we use the concavity of $g_1(x_{k+1},\cdot)$ and $g_2(x_{k+1},\cdot)$, and the convexity of $\ell(\cdot)$ along with adding and subtracting $\langle w_{k+1} \nabla \ell(\theta_{k+1}),\theta-\theta_{k+1}\rangle$.\\
To obtain a telescoping sum later, by adding and subtracting $\eta_k q^y_ky_k$ and $\eta_k q^{\theta}_k\theta_k$, we can rewrite the bound in \eqref{eq:L-bound1-ext} as
\begin{align}\label{eq:L-bound2-ext}
    &\mathcal{L}(x_{k+1},w_{k+1},y,\theta;\ell_{k+1})-\mathcal{L}(x,w,y_{k+1},\theta_{k+1};\ell_{k+1})\nonumber\\
    & \quad \leq \Big[\frac{1}{\tau_k}\mathbf D_{\mathcal{X}}(x,x_k)+\frac{1}{\tau_k}\mathbf D_{W}(w,w_k)+\frac{1}{\sigma_k}\mathbf D_{\mathcal{Y}}(y,y_k)+\frac{1}{\sigma_k}\mathbf D_{\Theta}(\theta,\theta_k)+\eta_k \langle q^y_k,y_k-y \rangle\nonumber\\
    & \qquad  + \eta_k \langle q^{\theta}_k,\theta_k-\theta \rangle\Big]\nonumber\\
    &\qquad -\Big[\frac{1}{\tau_k}\mathbf D_{\mathcal{X}}(x,x_{k+1})+\frac{1}{\tau_k}\mathbf D_{W}(w,w_{k+1})+\frac{1}{\sigma_k}\mathbf D_{\mathcal{Y}}(y,y_{k+1})+\frac{1}{\sigma_k}\mathbf D_{\Theta}(\theta,\theta_{k+1})\nonumber\\
    & \qquad+ \langle q^y_{k+1}, y_{k+1}-y \rangle + \langle q^{\theta}_{k+1},\theta_{k+1}-\theta \rangle \Big]\nonumber\\
    & \qquad + \bar \Lambda_k -\frac{1}{\tau_k}\mathbf D_{\mathcal{X}}(x_{k+1},x_k)-\frac{1}{\tau_k}\mathbf D_{W}(w_{k+1},w_k)-\frac{1}{\sigma_k}\mathbf D_{\mathcal{Y}}(y_{k+1},y_k)-\frac{1}{\sigma_k}\mathbf D_{\Theta}(\theta_{k+1},\theta_k)\nonumber\\
    &\qquad+ \eta_k \langle q^y_k, y_{k+1}-y_k \rangle + \eta_k \langle q^{\theta}_k, \theta_{k+1}-\theta_k \rangle.
\end{align}
One can bound $\langle q^y_k,y-y_k \rangle$ and $\langle q^{\theta}_k,\theta-\theta_k \rangle$ for any given $y \in \mathcal{Y}$ and $\theta \in \Theta$, respectively, as follows. Let $p^{y_1}_k \triangleq \nabla_y g_2(x_k,y_k)-\nabla_yg_2(x_{k-1},y_k)$, and $p^{y_2}_k \triangleq \nabla_y g_2(x_{k-1},y_k)-\nabla_yg_2(x_{k-1},y_{k-1})$ which imply that $q^y_k = p^{y_1}_k + p^{y_2}_k$. Moreover, let $p^{\theta_1}_k=\nabla_{\theta}g_1(x_k,\theta_k)-\nabla_{\theta}g_1(x_{k-1},\theta_k)$, $p^{\theta_2}_k=\nabla_{\theta}g_1(x_{k-1},\theta_k)-\nabla_{\theta}g_1(x_{k-1},\theta_{k-1})$, $p^{w_1}_k= -w_k \nabla \ell(\theta_k)+w_{k-1}\nabla \ell(\theta_{k})$, and $p^{w_2}_k= -w_{k-1} \nabla \ell(\theta_{k}) +w_{k-1}\nabla \ell(\theta_{k-1})$ which imply that $q^{\theta}_k = p^{\theta_1}_k + p^{\theta_2}_k + p^{w_1}_k + p^{w_2}_k$. Using Young's inequality, we obtain for all $k\geq0$:
\begin{align}\label{eq:pxyk1-ext}
    |\langle q^y_k,y-y_k \rangle| &\leq \alpha_k \mathbf D_{\mathcal{Y}}(y,y_k) + \frac{1}{2\alpha_k}\|p^{y_1}_k\|^2_{\mathcal{Y}^*} + \beta_k \mathbf D_{\mathcal{Y}}(y,y_k) + \frac{1}{2\beta_k}\|p^{y_2}_k\|^2_{\mathcal{Y}^*},
\end{align}
\begin{align}\label{eq:pxyk2-ext}
    |\langle q^{\theta}_k,\theta-\theta_k \rangle| &\leq \alpha_k \mathbf D_{\Theta}(\theta,\theta_k) + \frac{1}{2\alpha_k}\|p^{\theta_1}_k + p^{\theta_2}_k\|^2_{\Theta^*} + \beta_k \mathbf D_{\Theta}(\theta,\theta_k) + \frac{1}{2\beta_k}\|p^{w_1}_k + p^{w_2}_k\|^2_{\Theta^*},
\end{align}
which holds for any $\alpha_k, \beta_k >0$. Therefore, using \eqref{eq:pxyk1-ext} and \eqref{eq:pxyk2-ext} within \eqref{eq:L-bound2-ext}, we can show for $k\geq0$
\begin{subequations}
\begin{align}
    &\mathcal{L}(x_{k+1},w_{k+1},y,\theta;\ell_{k+1})-\mathcal{L}(x,w,y_{k+1},\theta_{k+1};\ell_{k+1})\leq \bar Q_k(z) - \bar R_{k+1}(z) + \bar E_k \label{eq:L-bound-fnl-ext},\\
    &\bar Q_k(z) \triangleq \frac{1}{\tau_k}\mathbf D_{\mathcal{X}}(x,x_k)+\frac{1}{\tau_k}\mathbf D_{W}(w,w_k)+\frac{1}{\sigma_k}\mathbf D_{\mathcal{Y}}(y,y_k)+\frac{1}{\sigma_k}\mathbf D_{\Theta}(\theta,\theta_k)+\eta_k \langle q^y_k,y_k-y \rangle\nonumber\\
    & \qquad \qquad + \eta_k \langle q^{\theta}_k,\theta_k-\theta \rangle + \frac{\eta_k}{2\alpha_k}\|p^{y_1}_k\|^2_{\mathcal{Y}^*}+ \frac{\eta_k}{2\beta_k}\|p^{y_2}_k\|^2_{\mathcal{Y}^*}+ \frac{\eta_k}{2\alpha_k}\|p^{\theta_1}_k + p^{\theta_2}_k\|^2_{\Theta^*} \nonumber\\
    & \qquad \qquad+ \frac{\eta_k}{2\beta_k}\|p^{w_1}_k + p^{w_2}_k\|^2_{\Theta^*},\\
    &\bar R_{k+1}(z) \triangleq \frac{1}{\tau_k}\mathbf D_{\mathcal{X}}(x,x_{k+1})+\frac{1}{\tau_k}\mathbf D_{W}(w,w_{k+1})+\frac{1}{\sigma_k}\mathbf D_{\mathcal{Y}}(y,y_{k+1})+\frac{1}{\sigma_k}\mathbf D_{\Theta}(\theta,\theta_{k+1})\nonumber\\
    & \qquad+ \langle q^y_{k+1}, y_{k+1}-y \rangle + \langle q^{\theta}_{k+1},\theta_{k+1}-\theta \rangle + \frac{1}{2\alpha_{k+1}}\|p^{y_1}_{k+1}\|^2_{\mathcal{Y}^*}+ \frac{1}{2\beta_{k+1}}\|p^{y_2}_{k+1}\|^2_{\mathcal{Y}^*}\nonumber\\
    &\qquad \qquad+ \frac{1}{2\alpha_{k+1}}\|p^{\theta_1}_{k+1}+p^{\theta_2}_{k+1}\|^2_{\Theta^*}+ \frac{1}{2\beta_{k+1}}\|p^{w_1}_{k+1}+p^{w_2}_{k+1}\|^2_{\Theta^*},\\
    & \bar E_k \triangleq \bar \Lambda_k + \frac{1}{2\alpha_{k+1}}\|p^{y_1}_{k+1}\|^2_{\mathcal{Y}^*}+ \frac{1}{2\beta_{k+1}}\|p^{y_2}_{k+1}\|^2_{\mathcal{Y}^*}+ \frac{1}{2\alpha_{k+1}}\|p^{\theta_1}_{k+1}+p^{\theta_2}_{k+1}\|^2_{\Theta^*} \nonumber\\
    & \qquad \qquad + \frac{1}{2\beta_{k+1}}\|p^{w_1}_{k+1}+p^{w_2}_{k+1}\|^2_{\Theta^*}- \Big(\frac{1}{\sigma_k} - \eta_k(\alpha_k + \beta_k) \Big)\left(\mathbf D_{\mathcal{Y}}(y_{k+1},y_k)+\mathbf D_{\Theta}(\theta_{k+1},\theta_k)\right)\nonumber\\
    & \qquad \qquad -\frac{1}{\tau_k}\mathbf D_{\mathcal{X}}(x_{k+1},x_k)-\frac{1}{\tau_k}\mathbf D_{W}(w_{k+1},w_k).
\end{align}
\end{subequations}
Considering that the objective function $\mathcal{L}(x,w,y,\theta;\ell^*)$ is Lipschitz \ma{continuous with respect to the parameter $\ell^*$ with constant $L_{\ell}$ (see \eqref{eq:lip-L-ext},} we can show
\begin{align*}
    &|\mathcal{L}(x_{k+1},w_{k+1},y,\theta;\ell_{k+1}) - \mathcal{L}(x_{k+1},w_{k+1},y,\theta;\ell^*)| \leq L_{\ell} \|\ell_{k+1} - \ell^*\|_2,\\
    &|\mathcal{L}(x,w,y_{k+1},\theta_{k+1};\ell_{k+1}) - \mathcal{L}(x,w,y_{k+1},\theta_{k+1};\ell^*)| \leq L_{\ell} \|\ell_{k+1} - \ell^*\|_2.
\end{align*}
For $\ell^*, \ell_{k+1}$ we can show
\begin{align*}
    &{\mathcal{L}(x_{k+1},w_{k+1},y,\theta;\ell^*) - \mathcal{L}(x,w,y_{k+1},\theta_{k+1};\ell^*)} \nonumber\\
    & \quad= \mathcal{L}(x_{k+1},w_{k+1},y,\theta;\ell^*) - \mathcal{L}(x_{k+1},w_{k+1},y,\theta;\ell_{k+1})  + \mathcal{L}(x_{k+1},w_{k+1},y,\theta;\ell_{k+1})  \nonumber\\
    & \qquad - \mathcal{L}(x,w,y_{k+1},\theta_{k+1};\ell_{k+1})+ \mathcal{L}(x,w,y_{k+1},\theta_{k+1};\ell_{k+1}) - \mathcal{L}(x,w,y_{k+1},\theta_{k+1};\ell^*).
\end{align*}
Then, using the triangle inequality, we can obtain
\begin{align}\label{eq:L-bound-missp-ext}
    &{\mathcal{L}(x_{k+1},w_{k+1},y,\theta;\ell^*) - \mathcal{L}(x,w,y_{k+1},\theta_{k+1};\ell^*)} \nonumber\\
    &\quad \leq | \mathcal{L}(x_{k+1},w_{k+1},y,\theta;\ell^*) - \mathcal{L}(x_{k+1},w_{k+1},y,\theta;\ell_{k+1})| + \mathcal{L}(x_{k+1},w_{k+1},y,\theta;\ell_{k+1})  \nonumber\\
    & \qquad - \mathcal{L}(x,w,y_{k+1},\theta_{k+1};\ell_{k+1}) + | \mathcal{L}(x,w,y_{k+1},\theta_{k+1};\ell_{k+1}) - \mathcal{L}(x,w,y_{k+1},\theta_{k+1};\ell^*)|\nonumber\\
    &\quad\leq \bar Q_k(z) - \bar R_{k+1}(z)+ \bar E_k +  2L_{\ell}\|\ell_{k+1}-\ell^*\|_2,
\end{align}
where the inequality in \eqref{eq:L-bound-missp-ext} is obtained by using \eqref{eq:lip-L-ext} in \eqref{eq:L-bound-fnl-ext}.\\

Next, multiplying both sides by $t_k>0$, summing over $k=0$ to $K-1$, and then using Jensen's inequality, we obtain
\begin{align}\label{eq:sum-bound-missp-ext}
    &T_K (\mathcal{L}(\overline{x}_K,\overline{w}_K,y,\theta;\ell^*) - \mathcal{L}(x,w,\overline{y}_K,\overline{\theta}_K;\ell^*)) \nonumber\\
    & \quad \leq \sum_{k=0}^{K-1} t_k \Big(\bar Q_k(z) - \bar R_{k+1}(z) + \bar E_k + 2L_{\ell}\|\ell_{k+1}-\ell^*\|_2 \Big),\nonumber\\
    &\quad\leq t_0 \bar Q_0(z) - t_{K-1} \bar R_K(z) + \sum_{k=0}^{K-1}t_k \bar E_k + 2L_{\ell}\sum_{k=0}^{K-1}t_k\|\ell_{k+1}-\ell^*\|_2,
\end{align}
where $T_K = \sum_{k=0}^{K-1}t_k$ and the last inequality follows from the step-size conditions in Assumption \ref{assump:step-size-cond-ext}-(ii), which imply that $t_{k+1} \bar Q_{k+1}(z) - t_k \bar R_{k+1}(z)\leq0$ for $k=0$ to $K-2$.\\

According to Assumption \ref{assump:step-size-cond-ext}, $\tau_k,\sigma_k$ and $\eta_k$ are chosen such that $\bar E_k \leq0$ for $k=0,...,K-1$; then \eqref{eq:sum-bound-missp-ext} implies that
\begin{align}\label{eq:sum-bound-missp2-ext}
    &T_K (\mathcal{L}(\overline{x}_K,\overline{w}_K,y,\theta;\ell^*) - \mathcal{L}(x,w,\overline{y}_K,\overline \theta_K;\ell^*)) \nonumber\\
    & \quad \leq t_0 \bar Q_0(z) - t_{K-1} \bar R_K(z) + 2L_{\ell}\sum_{k=0}^{K-1}t_k\|\ell_{k+1}-\ell^*\|_2,\nonumber\\
    &\quad\leq \frac{t_0}{\tau_0} \mathbf D_{\mathcal{X}}(x,x_0) + \frac{t_0}{\tau_0} \mathbf D_{W}(w,w_0) + \frac{t_0}{\sigma_0} \mathbf D_{\mathcal{Y}}(y,y_0) + \frac{t_0}{\sigma_0} \mathbf D_{\Theta}(\theta,\theta_0) + t_K \eta_K \langle q^y_K,y-y_K\rangle \nonumber\\
    & \qquad + t_K \eta_K \langle q^{\theta}_K,\theta-\theta_K\rangle- t_K \big[\frac{1}{\tau_K} \mathbf D_{\mathcal{X}}(x,x_K) + \frac{1}{\tau_K} \mathbf D_{W}(w,w_K) + \frac{1}{\sigma_K} \mathbf D_{\mathcal{Y}}(y,y_K)  \nonumber\\
    & \qquad  + \frac{1}{\sigma_K} \mathbf D_{\Theta}(\theta,\theta_K) + \frac{\eta_K}{2\alpha_K}\|p^{y_1}_K \|^2_{\mathcal{Y}^*}+\frac{\eta_K}{2\beta_K}\|p^{y_2}_K \|^2_{\mathcal{Y}^*}+ \frac{\eta_K}{2\alpha_K}\|p^{\theta_1}_K +p^{\theta_2}_K\|^2_{\Theta^*} \nonumber\\
    & \qquad + \frac{\eta_K}{2\beta_K}\|p^{w_1}_K +p^{w_2}_K\|^2_{\Theta^*}\big] +2L_{\ell}\sum_{k=0}^{K-1}t_k\|\ell_{k+1}-\ell^*\|_2,
\end{align}
where in the last inequality we use $t_K \bar Q_K(z) \leq t_{K-1} \bar R_K(z)$ and $q^y_0,q^{\theta}_0,p^{y_1}_0,p^{y_2}_0, p^{\theta_1}_0, p^{\theta_2}_0,p^{w_1}_0,p^{w_2}_0 = \bold{0}$. Using \eqref{eq:pxyk1-ext} and \eqref{eq:pxyk2-ext} to bound $\langle q^y_K,y-y_K\rangle$ and $\langle q^{\theta}_K,\theta-\theta_K\rangle$ and dividing both sides of \eqref{eq:sum-bound-missp2-ext} by $T_K$, we can get

\begin{align}\label{eq:missp-bound-ext}
    &\mathcal{L}(\overline{x}_K,\overline{w}_K,y,\theta;\ell^*) - \mathcal{L}(x,w,\overline{y}_K,\overline \theta_K;\ell^*)\nonumber\\
    & \quad\leq \frac{1}{T_K} \Big(\frac{1}{\tau_0} \mathbf D_{\mathcal{X}}(x,x_0) + \frac{1}{\tau_0} \mathbf D_{W}(w,w_0) + \frac{1}{\sigma_0} \mathbf D_{\mathcal{Y}}(y,y_0) + \frac{1}{\sigma_0} \mathbf D_{\Theta}(\theta,\theta_0) \Big)  \nonumber\\
    &\qquad + \frac{2L_{\ell}}{T_K} \sum_{k=0}^{K-1}t_k\|\ell_{k+1}-\ell^*\|_2 - \frac{t_K}{T_K}\Big[\frac{1}{\tau_K}\mathbf D_{\mathcal{X}}(x,x_K) + \frac{1}{\tau_K}\mathbf D_{W}(w,w_K) \nonumber\\
    & \qquad+ \Big(\frac{1}{\sigma_K}-\eta_K(\alpha_K+ \beta_K) \Big) \Big(\mathbf D_{\mathcal{Y}}(y,y_K) + \mathbf D_{\Theta}(\theta,\theta_K) \Big) \Big].
\end{align}
By applying $sup$ to both sides of the inequality \eqref{eq:missp-bound-ext} and using Definition \ref{def:gap-function-ext}, we obtain the desired result in \eqref{eq:missp-sup-ext}.
\end{proof}

\bibliographystyle{plain}
\bibliography{reference}


\end{document}